\begin{document}

\twocolumn[
\aistatstitle{Leveraging Continuous Time to Understand Momentum When Training Diagonal Linear Networks}

\aistatsauthor{ Hristo Papazov$^{*}$ \And Scott Pesme$^{*}$ \And  Nicolas Flammarion }

\aistatsaddress{ EPFL\\ \texttt{hristo.papazov@epfl.ch} \And  EPFL\\ \texttt{scott.pesme@epfl.ch} \And EPFL\\ \texttt{nicolas.flammarion@epfl.ch} }
]

\begin{abstract}
In this work, we investigate the effect of momentum on the optimisation trajectory of gradient descent. We leverage a continuous-time approach in the analysis of momentum gradient descent with step size $\gamma$ and momentum parameter $\beta$ that allows us to identify an intrinsic quantity $\lambda = \frac{ \gamma }{ (1 - \beta)^2 }$ which uniquely defines the optimisation path and provides a simple acceleration rule. When training a $2$-layer diagonal linear network in an overparametrised regression setting, we characterise the recovered solution through an implicit regularisation problem. We then prove that small values of $\lambda$ help to recover sparse solutions. Finally, we give similar but weaker results for stochastic momentum gradient descent. We provide numerical experiments which support our claims.
\end{abstract}

\section{Introduction}

Momentum methods~\citep{sutskever_momentum} have now become a staple of optimal neural network training due to the provided gains in both optimisation efficiency and generalisation performance. This pivotal role is underscored by the widespread use of momentum in the successful training of most state-of-the-art deep networks, including CLIP~\citep{radford2021learning}, Chinchilla~\citep{hoffmann2022training}, GPT-3~\citep{brown2020language}, and PaLM~\citep{chowdhery2022palm}.

Originating in the work of \cite{polyak}, momentum  first featured in the heavy-ball method devised to accelerate convergence in convex optimisation. 
However, when applied to neural network training, momentum exhibits a distinct and complementary characteristic: a steering towards models with superior generalisation performance compared to networks trained with gradient descent.  We note that while the effect of momentum on optimisation has been researched extensively~\citep{defazio_momentum_lyapunov,heavy_ball_saddles}, the generalisation aspect of momentum has been left relatively underexplored.

The performance of gradient descent methods presents intriguing challenges from a theoretical perspective. First, establishing convergence is highly non-trivial. Second, the existence of numerous global minima for the training objective, some of which generalise poorly, adds to the puzzle~\citep{understanding_DL}. To elucidate this second point, the notion of implicit regularisation has come to the forefront. It posits that the optimisation process implicitly favors solutions with strong generalisation properties, even in the absence of explicit regularisation. The canonical example is overparametrised linear regression with more trainable parameters than the number of samples. While there exist infinitely many solutions that fit the data, gradient methods navigate in a restricted parameter subspace and converge towards the solution closest in terms of the $\ell_2$ distance~\citep{lemaire}.

In this work, we aim to expand our understanding of the implicit bias of momentum by analysing its impact on the optimisation trajectory in $2$-layer diagonal linear networks. The $2$-layer diagonal linear network has garnered significant attention recently~\citep{kernel_rich_regimes,vavskevivcius2019implicit,haochen2020understanding,pesme2021implicit,pillaudvivien2022labelnoise}. Despite its apparent simplicity, this network has surprisingly shed light on training behaviours typically associated with much more complex architectures. Some of these insights include the influence of initialisation~\citep{kernel_rich_regimes}, the impact of noise~\citep{pesme2021implicit}, and the role of the step size~\citep{even_pesme_sgd}. Consequently, this architecture serves as an excellent surrogate model for gaining a deeper understanding of intricate phenomena such as the role of momentum in the generalisation performance.

\subsection{Main Contributions}

In this paper, we investigate the influence of momentum on the optimisation trajectory of neural networks trained with momentum gradient descent (MGD). Leveraging the continuous-time approximation of MGD  -- momentum gradient flow (MGF), we show that the optimisation trajectory strongly depends on the key quantity $\l = \frac{\g}{(1-\b)^2}$, where $\g$ and $\b$ denote the step size and momentum parameter of MGD, respectively. Surprisingly, this continuous-time framework experimentally proves to be a good approximation of the discrete trajectory even for large values of $\gamma$.

We proceed to list our main contributions.
\begin{itemize}
    \item First, using the key quantity $\l$, we derive a straightforward acceleration rule that maintains the optimisation path while accelerating the optimisation speed.
    \item Then, focusing on MGF on 2-layer diagonal linear networks, we precisely characterise the recovered solution and prove that for suitably small values of $\l$, MGF recovers solutions which generalise better than the ones selected by gradient flow (GF) in a sparse regression setting.
    \item Finally, we provide similar but slightly weaker results for stochastic MGD.
\end{itemize}

\subsection{Related Works}

\myparagraph{Momentum and Acceleration.}
Momentum algorithms have their roots in acceleration methods, and many studies have investigated their convergence speed when optimising both convex and non-convex functions: ~\citep{ghadimi2015global,flammarion2015averaging,kidambi2018insufficiency,can2019accelerated,sebbouh2021almost,mai2020convergence,liu2020improved,cutkosky2020momentum,defazio_momentum_lyapunov,orvieto2020role,sebbouh2021almost}. Moreover, apart from accelerating training, heavy-ball methods come with the additional advantage of always escaping saddle points~\citep{jin2018accelerated,heavy_ball_saddles}.

\myparagraph{Momentum and Continuous-Time Models.}
Building upon the foundational work of \citet{alvarez_convex_heavy_ball, heavy_ball_friction}, researchers have analysed accelerated gradient methods using second-order differential equations.
\citet{su2014differential} extended the previous ODE to encompass the Nesterov accelerated method, demonstrating convergence rates similar to the discrete case. \cite{wibisono2016variational} adopted a variational perspective to scrutinise the mechanics of acceleration. A significant advancement emerged with the introduction of Lyapunov analysis, undertaken by \cite{wilson2021lyapunov,sanz2021connections,moucer2023systematic}. This analytical approach sheds light on the stability and convergence properties of these methods. Further refinement has been achieved by \cite{high_resolution_odes}, who developed high-resolution ODEs tailored to various momentum-based acceleration techniques and able to distinguish between Nesterov's Accelerated Gradient and  Polyak's Heavy Ball methods.  Finally, error bounds for the discretisation of MGF have been developed by \cite{kovachki_continuous_momentum}.

\myparagraph{Momentum and Implicit Bias.}
\cite{sutskever_momentum, leclerc2020two} have empirically shown significant generalisation improvements in architectures trained with momentum on common vision tasks. Building on these empirical observations, \cite{jelassi_momentum_generalization}  designed a synthetic binary classification problem where a 2-layer convolutional neural network trained with MGD provably generalises better than gradient descent (GD). Recently, \cite{first_order_heavy_ball} reveal that the MGD trajectory closely resembles the gradient flow trajectory of a regularised loss. Through the specific regularisation, the authors argue that the MGD trajectory favors flatter minima than the GD trajectory. The study's findings apply to any reasonable loss, but due to the finite time horizon restriction, cannot characterise the solution to which MGD converges. Additionally,  \cite{deep_diagonal_momentum} show that in deep diagonal linear networks with identical weights across layers, increasing the depth biases the optimisation towards sparse~solutions.

\section{From Discrete to Continuous}\label{sec:prelim}

\myparagraph{Momentum Gradient Descent.} 
We consider minimising a differentiable function $F : \R^d \to \R$ using \textit{momentum gradient descent} (MGD) with step size $\gamma > 0$ and momentum parameter $\beta \in [0, 1)$. Initialised at two points $(w_0, w_1) \in \R^{2d}$, the iterates follow the discrete recursion for $k \geq 1$:
\begin{equation*} \label{mgd}
  \scalebox{0.9}{$   w_{k+1} = w_k - \g \nabla F(w_k) + \beta (w_k - w_{k-1}).$}
    \tag{$\mathrm{MGD}(\g, \b)$}
\end{equation*}

\myparagraph{Momentum Gradient Flow.}
Directly analysing the discrete recursion \ref{mgd} appears intractable in many settings. To overcome this difficulty, we follow the classical approach of considering a second order differential equation of the form
\begin{equation} \label{mgf:ab}
    \begin{aligned}
        a \ddot{w}_t + b \dot{w}_t + \nabla F(w_t) = 0
    \end{aligned}
\end{equation}
with leading coefficient $a \geq 0 $ and damping coefficient $b > 0$. In fact, without loss of generality, the previous differential equation can be reduced to a new one which depends on a single parameter $\lambda$. Indeed, assume that $w_t$ follows ODE~\eqref{mgf:ab} with initialisation $(w_{t = 0}, \dot{w}_{t=0}) = (w_0, \dot{w}_0)$, then a simple chain rule shows that $\tilde{w}_t = w_{b t}$ follows 
\begin{align*}
        \frac{a}{b^2} \ddot{\tilde{w}}_t +  \dot{\tilde{w}}_t + \nabla F(\tilde{w}_t) = 0,
\end{align*}
with initialisation $(\tilde{w}_{t=0}, \dot{\tilde{w}}_{t =0}) = (w_0, b \dot{w}_0)$. Hence, up to a time reparametrisation, it is sufficient to consider the following differential equation which depends on a unique parameter $\lambda \geq 0$:
\begin{align*}\label{mgf:lambda}
    \lambda \ddot{w}_t +  \dot{w}_t + \nabla F(w_t) = 0. \tag{$\mathrm{MGF}(\lambda)$}
\end{align*}
We call the differential equation \ref{mgf:lambda} \textit{momentum gradient flow} (MGF) with parameter $\lambda$. To show the link with the \ref{mgd} recursion, we discretise \ref{mgf:lambda} with a second-order central difference, first-order backward difference, and discretisation step $\varepsilon >0$ as carried out by \cite{kovachki_continuous_momentum}:
\begin{align}\label{mgf:discretisation}
 \scalebox{1}{$ \lambda \ \frac{w_{k+1} - 2 w_k + w_{k-1}}{\varepsilon^2} + \frac{w_k - w_{k-1}}{\varepsilon} + \nabla F(w_k) = 0. $}
\end{align}
Rewriting, we obtain
\begin{align*}
    w_{k+1} =  w_k - \frac{\varepsilon^2}{\l} \nabla F(w_k) + (1 - \frac{ \varepsilon}{\lambda}) (w_k - w_{k-1}),
\end{align*}
which corresponds to momentum gradient descent with parameters $\gamma = \frac{\varepsilon^2}{\lambda}$ and $\beta = 1 - \frac{\varepsilon}{\lambda}$. Solving for $\varepsilon$ and $\lambda$ leads to the following proposition:
\begin{proposition}\label{prop:discretisation} For $(w_0, w_1) \in \R^{2d}$, consider momentum gradient flow \ref{mgf:lambda} with
$$\lambda = \frac{\gamma}{(1- \beta)^2}$$ 
and initialisation $w_{t = 0} = w_0$, $\dot{w}_{t=0} = (w_1 - w_0)/\sqrt{\lambda \gamma}$.
Then, discretising as \eqref{mgf:discretisation} with discretisation step $\varepsilon = \sqrt{\lambda \gamma} = \g/(1-\b)$ leads to the momentum gradient descent recursion \ref{mgd} with step size $\gamma$, momentum parameter $\beta$, and initialisation $(w_0, w_1)$.
\end{proposition}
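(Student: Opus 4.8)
Since Proposition~\ref{prop:discretisation} is a direct verification, the plan is simply to carry out the substitution cleanly and then match the initial data. The starting point is the rewriting already displayed just above the statement: expanding the finite-difference scheme \eqref{mgf:discretisation} with step $\varepsilon$ gives
\begin{equation*}
  w_{k+1} = w_k - \frac{\varepsilon^2}{\lambda}\nabla F(w_k) + \Bigl(1 - \frac{\varepsilon}{\lambda}\Bigr)(w_k - w_{k-1}),
\end{equation*}
which is exactly the recursion \ref{mgd} with step size $\gamma = \varepsilon^2/\lambda$ and momentum parameter $\beta = 1 - \varepsilon/\lambda$. So it remains to invert the map $(\varepsilon,\lambda) \mapsto (\gamma,\beta)$ and check that the inverse coincides with the formulas in the statement.

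First I would solve the two defining relations for $\varepsilon$ and $\lambda$. From $\beta = 1 - \varepsilon/\lambda$ one gets $\varepsilon = \lambda(1-\beta)$; substituting into $\gamma = \varepsilon^2/\lambda$ yields $\gamma = \lambda(1-\beta)^2$, hence $\lambda = \gamma/(1-\beta)^2$ and then $\varepsilon = \lambda(1-\beta) = \gamma/(1-\beta)$. One should also record the consistency check $\varepsilon = \sqrt{\lambda\gamma}$, which holds since $\lambda\gamma = \gamma^2/(1-\beta)^2$; this is the form used in the statement. Because $\gamma > 0$ and $\beta \in [0,1)$, both $\lambda$ and $\varepsilon$ are well-defined and strictly positive, so the scheme \eqref{mgf:discretisation} with these parameters is literally \ref{mgd} with the prescribed $\gamma$ and $\beta$.

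It then remains to match the initial conditions. The recursion \eqref{mgf:discretisation} (equivalently \ref{mgd}) is a two-step recursion, fully determined by its first two iterates, so taking these to be $(w_0,w_1)$ produces the MGD trajectory with initialisation $(w_0,w_1)$, as claimed. On the continuous side, the stencil uses the first-order backward difference $(w_k - w_{k-1})/\varepsilon$ as its proxy for $\dot w$; matching this at the first step forces $\dot w_{t=0} = (w_1 - w_0)/\varepsilon = (w_1 - w_0)/\sqrt{\lambda\gamma}$, together with $w_{t=0} = w_0$, which are exactly the stated initial data for \ref{mgf:lambda}. If one wishes to trace everything back to the two-parameter ODE \eqref{mgf:ab}, one additionally invokes the time rescaling $t \mapsto bt$ from the preamble, which sends a discretisation step $\varepsilon$ to $\varepsilon/b$; this is not needed for the statement as written.

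I do not expect a genuine obstacle here: the content is elementary algebra. The only two points requiring care are (i) performing the inversion in the correct direction and confirming that $\varepsilon = \lambda(1-\beta)$ and $\varepsilon = \sqrt{\lambda\gamma}$ yield the same value, and (ii) stating precisely in what sense the continuous initialisation ``corresponds'' to the discrete pair $(w_0,w_1)$ --- this is a modelling choice dictated by the backward-difference term in \eqref{mgf:discretisation}, not a forced identity, and the write-up should make that explicit.
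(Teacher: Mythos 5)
Your proposal is correct and follows essentially the same route as the paper, whose proof is precisely the algebraic rewriting of the stencil \eqref{mgf:discretisation} into the MGD recursion followed by inverting $\gamma = \varepsilon^2/\lambda$, $\beta = 1 - \varepsilon/\lambda$ to get $\lambda = \gamma/(1-\beta)^2$ and $\varepsilon = \sqrt{\lambda\gamma} = \gamma/(1-\beta)$. Your additional remarks on matching the initial data via the backward-difference term are consistent with the paper's stated initialisation and add no new ideas beyond it.
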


\Cref{prop:discretisation} motivates studying \ref{mgf:lambda} as a continuous proxy for \ref{mgd} assuming that the discretisation \eqref{mgf:discretisation} closely approximates the continuous~path.

\myparagraph{Discretisation Error Bounds.}
Unfortunately, applying known discretisation error bounds to our setting leads to very pessimistic bounds. Indeed, for step size $\gamma$ and momentum parameter $\beta$, consider the iterates $w_k$ from \ref{mgd} initialised at $(w_0, w_1)$. Now, let $w(t)$ be the solution of \ref{mgf:lambda} with $\lambda = \gamma / (1-\beta)^2$ and the appropriate initialisation from \Cref{prop:discretisation}. 
Then, for a finite horizon $K > 0$, classical discretisation error bounds (see \cite{kovachki_continuous_momentum}, Theorem 4) lead to a catastrophic
\begin{equation*}
    \sup_{k \leq K} \Vert w_k - w(k \varepsilon) \Vert \leq \exp(C K) \varepsilon,
\end{equation*}
where the constant $C$ depends on $\lambda$ and $F$.
Such an exponential dependence in the time horizon $K$ questions the suitability of momentum gradient flow as a good proxy for momentum gradient descent. However, empirically, the above bound appears excessively pessimistic (see \Cref{fig:mgfmgd}: Left and Middle). The MGF and MGD trajectories behave similarly in various settings, even with non-convex losses $F$ and relatively large step sizes $\g$ (see \Cref{experiments} for additional experiments).

\begin{figure}[t!]
    \centering
\includegraphics[width=.48\textwidth]{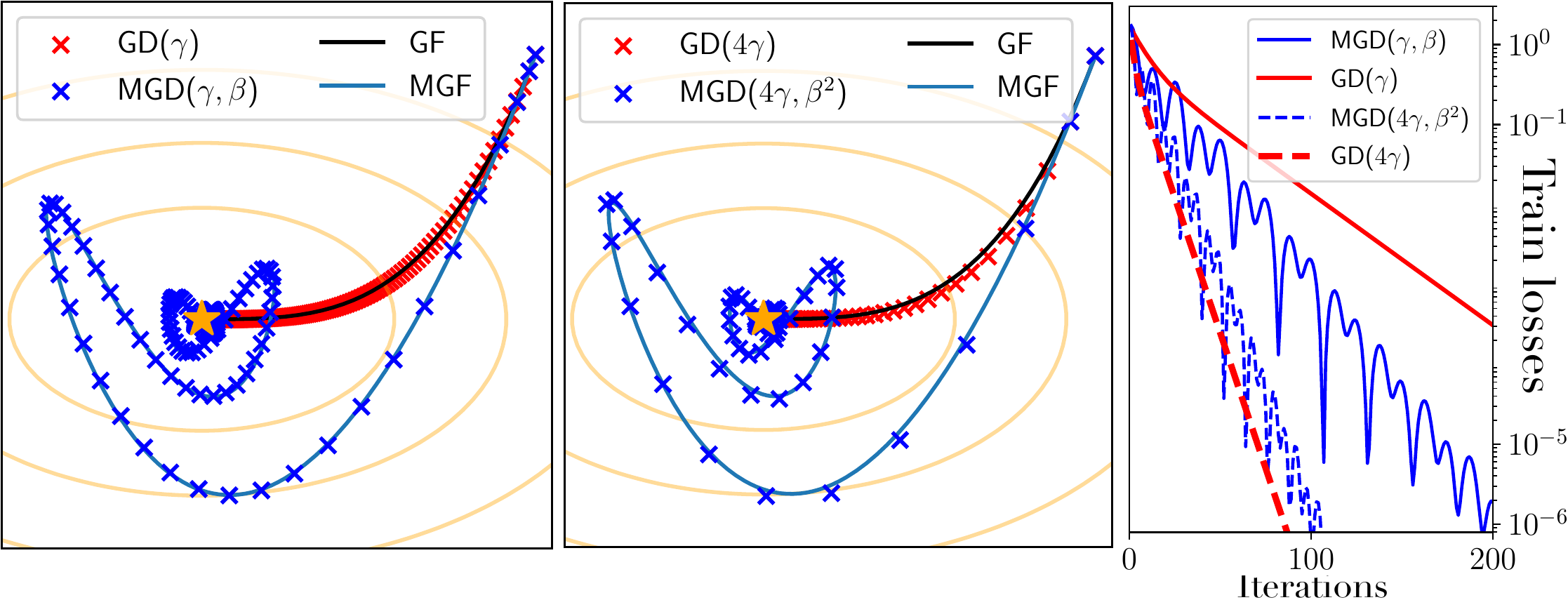}    \vspace{-6mm}
    \caption{(M)GD over a $2$D quadratic. \textit{Left and Middle}:
    The (M)GD trajectories closely follow the continuous trajectories of (M)GF as suggested by \Cref{prop:discretisation}. \textit{Right}: MGD$(4 \gamma, \beta^2)$ follows the same trajectory as MGD$(\gamma, \beta)$ but twice as fast as suggested by \Cref{cor:speedup}. In contrast, GD$(4 \gamma)$ runs four times faster than~GD$(\gamma)$.}
    \label{fig:mgfmgd}
\end{figure}

\myparagraph{Intertwined Roles of $\gamma$ and $\beta$.}
When the discretisation accurately follows the continuous path, \Cref{prop:discretisation} implies that the trajectory of \ref{mgd} is solely determined by a single parameter $\lambda = \gamma / (1 - \beta)^2$, intertwining step size and momentum as observed in \Cref{fig:mgfmgd,fig:teacher}. \textbf{Consequently, $\g$ and $\b$ serve interchangeable roles in influencing the trajectory of \ref{mgd}.} Note that this single-parameter dependence aligns with empirical results from \cite{leclerc2020two} where generalisation performance with large step sizes can be replicated with momentum and smaller step sizes. Though the quantity $\gamma / (1 - \beta)^2$ spontaneously appears in works studying MGD~\citep{first_order_heavy_ball}, to the best of our knowledge, its natural presence was never clearly explained and motivated.

\myparagraph{MGD Acceleration Rule.} Though all couples $(\gamma, \beta)$ with the same same value of $\lambda$ yield the same trajectory, the iterates do not follow this path at the same speed.
\begin{corollary}[Acceleration rule]\label{cor:speedup}
Let $\mathrm{MGD}(\g, \b)$ initialised at $w_0 = w_1 \in \R^{d}$ correspond to the discretisation of \ref{mgf:lambda} with discretisation step $\varepsilon$. Now, for $\rho \in \R_{>0}$, consider the different parameter couple
\[
\hat{\g} = \rho^2 \gamma \quad \text{and} \quad \hat{\beta} =  1 - \rho (1 - \beta) \approx_{\beta \to 1} \beta^\rho.\footnote{The approximation symbol abbreviates the Taylor-expansion bound $1 - \rho(1-\b) = \b^\rho +O((1-\b)^2)$.}
\] 
Then, since $\hat{\g} / (1 - \hat{\b})^2 = \lambda$, $\mathrm{MGD}(\hat{\g}, \hat{\b})$ initialised at $w_0 = w_1$ becomes the discretisation of 
the same \ref{mgf:lambda} but with discretisation step $\hat{\varepsilon} = \rho \cdot \varepsilon$.
\end{corollary}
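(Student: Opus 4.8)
The plan is to obtain \Cref{cor:speedup} directly from \Cref{prop:discretisation} by applying that proposition twice — once to the couple $(\gamma, \beta)$ and once to $(\hat\gamma, \hat\beta)$ — and then verifying that the two resulting momentum gradient flows, together with their initial conditions, coincide while only the discretisation step changes.

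First I would invoke \Cref{prop:discretisation} on $(\gamma, \beta)$ with the assumed initialisation $w_0 = w_1$. The prescribed initial velocity is $\dot w_{t=0} = (w_1 - w_0)/\sqrt{\lambda \gamma}$, so $w_0 = w_1$ forces $\dot w_{t=0} = 0$; hence $\mathrm{MGD}(\gamma, \beta)$ is the discretisation \eqref{mgf:discretisation} of $\mathrm{MGF}(\lambda)$ with $\lambda = \gamma/(1-\beta)^2$, initial data $(w_0, 0)$, and step $\varepsilon = \sqrt{\lambda \gamma} = \gamma/(1-\beta)$.

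Next I would compute the intrinsic parameter of $(\hat\gamma, \hat\beta)$: since $1 - \hat\beta = \rho(1-\beta)$, a one-line substitution gives $\hat\gamma/(1-\hat\beta)^2 = \rho^2 \gamma / \bigl(\rho^2 (1-\beta)^2\bigr) = \lambda$, so \Cref{prop:discretisation} applied to $(\hat\gamma, \hat\beta)$ returns the very same equation $\mathrm{MGF}(\lambda)$. I would then check that the initialisations match: because $w_0 = w_1$, the initial velocity $(w_1 - w_0)/\sqrt{\lambda \hat\gamma}$ is again $0$ regardless of the rescaling of $\gamma$ into $\hat\gamma$ — this is precisely where the hypothesis $w_0 = w_1$ is used. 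Finally the discretisation step becomes $\hat\varepsilon = \sqrt{\lambda \hat\gamma} = \sqrt{\lambda \rho^2 \gamma} = \rho \sqrt{\lambda \gamma} = \rho\, \varepsilon$, which is the stated conclusion. The footnote is an independent elementary check: a Taylor expansion around $\beta = 1$ gives $\beta^\rho = (1 - (1-\beta))^\rho = 1 - \rho(1-\beta) + O((1-\beta)^2)$, so $\hat\beta = 1 - \rho(1-\beta)$ agrees with $\beta^\rho$ to second order in $1-\beta$.

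I do not expect a genuine obstacle here: the corollary is a change-of-variables bookkeeping step on top of \Cref{prop:discretisation}. The only point needing care is the consistency of the initial conditions — one must use $w_0 = w_1$ so that the initial velocity vanishes for both couples despite the different normalising factors $\sqrt{\lambda \gamma}$ and $\sqrt{\lambda \hat\gamma}$; without this hypothesis the ``same trajectory'' conclusion would fail, since the two flows would then start with distinct nonzero velocities.
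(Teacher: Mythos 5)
Your proposal is correct and matches the paper's reasoning: the corollary is indeed obtained by applying \Cref{prop:discretisation} to both parameter couples, noting $\hat{\g}/(1-\hat{\b})^2 = \lambda$ and $\hat{\varepsilon} = \sqrt{\lambda\hat{\g}} = \rho\varepsilon$, with the hypothesis $w_0 = w_1$ ensuring both flows start with zero initial velocity. Your verification of the footnote's Taylor expansion is also consistent with the paper.
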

Following the notations of the previous corollary for an integer $\rho \geq 2$ and letting $w_k$ and $\hat{w}_k$ denote the iterates of MGD$(\gamma, \beta)$ and MGD$(\hat{\g}, \hat{\beta})$, respectively, \Cref{cor:speedup} implies that we expect $w_{\rho \cdot k}$ and $\hat{w}_k$ to be close. This is in contrast with gradient descent, where scaling the step size by a factor $\rho^2$ leads to a speedup of $\rho^2$. This acceleration rule is illustrated in \Cref{fig:mgfmgd} with $\rho = 2$.

\myparagraph{Optimisation Regimes.}
The link between $\lambda$, $\gamma$, and $\beta$ highlights several regimes:
\begin{itemize}[left=0mm, topsep=0pt]
    \item \textbf{$\beta$ large -- the iterates converge arbitrarily slow.} Taking $\beta$ close to $1$ while keeping $\gamma$ constant leads to $\lambda \gg 1$. As explained previously, a chain rule shows that $\Tilde{w}_t = w_{\sqrt{\lambda} t}$ follows the ODE $\ddot{\tilde{w}}_t + \lambda^{-1 / 2} \cdot \dot{\tilde{w}}_t  + \nabla F(\tilde{w}_t) = 0$. Consequently, the damping parameter $\lambda^{-1/2}$ goes to $0$, and we expect the iterates to heavily oscillate and converge arbitrarily slowly.
    \item \textbf{$\gamma$ small -- the iterates follow GF.} Taking $\gamma \to 0$ while keeping $\beta$ fixed leads to $\lambda \ll 1$, and \ref{mgf:lambda} boils down to gradient flow. We expect the \ref{mgd} iterates to be close to the discretisation of GF with discretisation step $\varepsilon = \gamma / (1 - \beta)$. That is, \ref{mgd} will approximate GD with step size $\gamma / (1 - \beta)$. Hence, MGD gains a speed-up of $1 / (1-\beta)$ over GD without a change of trajectory.
    \item \textbf{The ``momentum'' regime.} In this regime, $\gamma$ and $\beta$ are such that $\lambda$ is non-degenerate, and gradient flow cannot capture the trajectory of \ref{mgd}. Hence, $\beta$ has an impact on the optimisation path, and the iterates can still converge in reasonable time.
\end{itemize}
\begin{figure}
    \centering
\includegraphics[width=.4\textwidth]{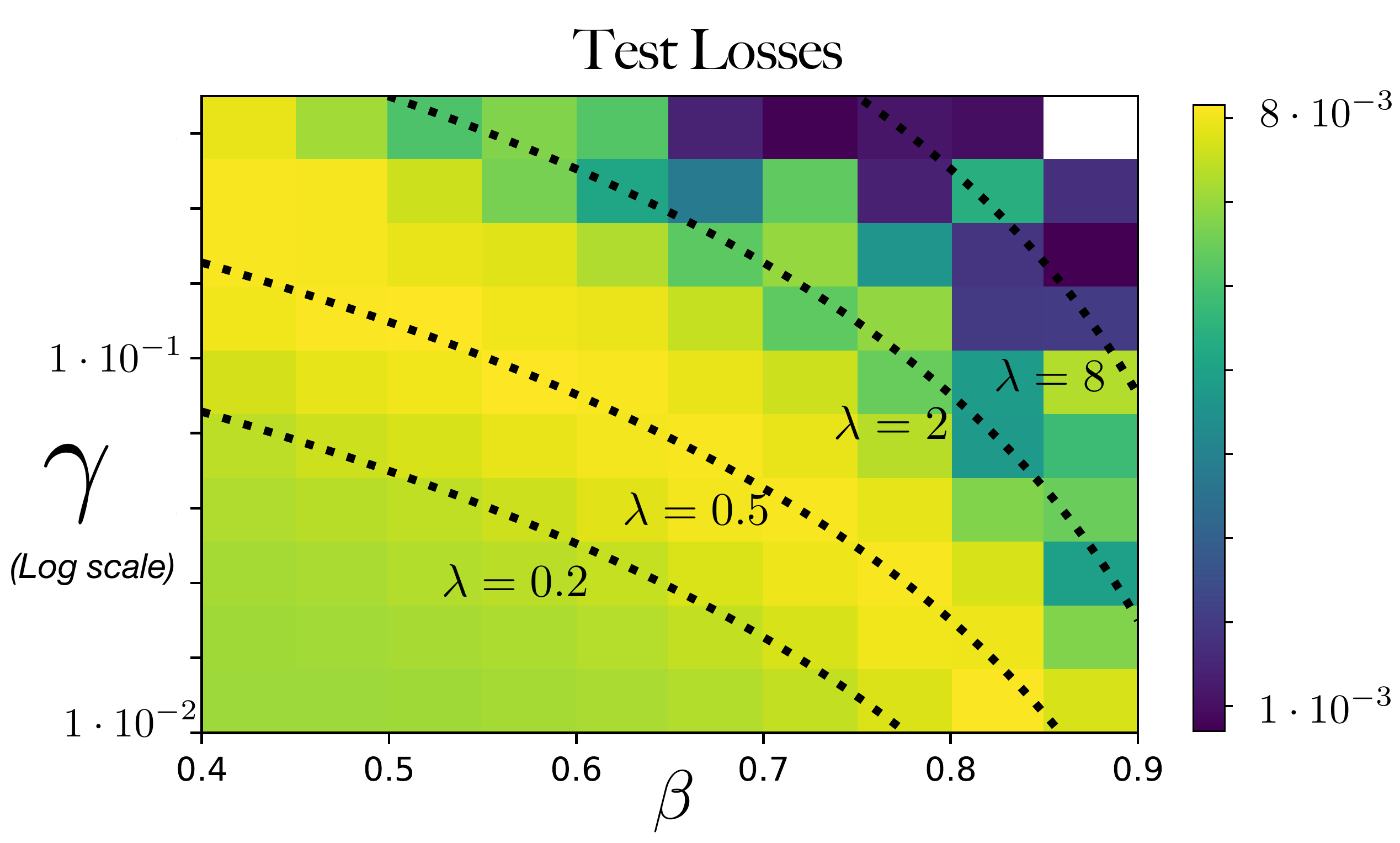}
    \vspace{-3mm}
    \caption{Teacher-student framework with a fully-connected $1$-hidden layer ReLU network. The level lines of the test loss after training with \ref{mgd} correspond to values of $\gamma, \beta$ which have a fixed value $\lambda = \gamma / (1 - \beta)^2$, as predicted by \Cref{prop:discretisation}.
    } 
    \label{fig:teacher}
\end{figure}

\section{Momentum Gradient Flow over Diagonal Linear Networks}\label{mgf_dln}

\myparagraph{Overparametrised Linear Regression.}
We consider a linear regression over $n$ samples $(x_i, y_i)_{i=1}^n$ with inputs $x_i$ living in $\R^d$ and scalar outputs $y_i \in \R$. We assume the dimension $d$ to be larger than the number of samples $n$, in which case there exists an infinite number of vectors $\t^\star$ which perfectly fit the dataset with $y_i = \ip{\t^\star}{x_i}$ for all $1 \leq i \leq n$. We call these vectors \textit{interpolators} and we denote by $\mathcal{S}$ the set of such vectors:  
$\mathcal{S} = \{\t^\star \in \R^d : y_i = \ip{\t^\star}{x_i}, \ \forall i \in [n]\}$. 
Note that $\mathcal{S}$ is an affine space of dimension at least $(d - n)$ equal to $\t^\star + \mathrm{span}(x_1, \dots, x_n)^\perp$ for any interpolator $\t^\star$. We consider the quadratic loss:
\begin{equation}\label{eq:least_squares_loss}
    \L(\t) = \frac{1}{2n} \sum_{i=1}^n (y_i - \langle x_i, \t \rangle)^2.
\end{equation}

\myparagraph{MGF over Least Squares.}
A classical result found in \cite{lemaire} and \cite{optimization_geometry} shows that when initialised at $\theta_0$, gradient flow over the quadratic loss \eqref{eq:least_squares_loss} converges to the orthogonal projection of the initialisation on $\mathcal{S}$:  $\arg \min_{\t^\star \in \mathcal{S}} \Vert \t^\star - \theta_0 \Vert_2$. This next proposition from \cite{alvarez_convex_heavy_ball} shows that momentum does not fundamentally change the implicit bias.
\begin{proposition}[\cite{alvarez_convex_heavy_ball}]
Initialised at $\theta_0$ with initial speed $\dot{\theta}_0$, momentum gradient flow \ref{mgf:lambda} over the least squares loss \eqref{eq:least_squares_loss} converges towards 
\begin{align*}
\argmin_{\theta^\star \in \mathcal{S}} \Vert \theta^\star - (\theta_0 + \lambda \dot{\theta}_0) \Vert_2.    
\end{align*}
\end{proposition}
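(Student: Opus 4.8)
The plan is to exploit the special structure of the quadratic loss: its gradient $\nabla\L(\theta)=\tfrac1n X^\top(X\theta-y)$ (with $X$ the design matrix) always lies in the data span $V:=\mathrm{span}(x_1,\dots,x_n)$, so that \ref{mgf:lambda} decouples into two linear ODEs along $V$ and $V^\perp$. Write $\R^d=V\oplus V^\perp$ and decompose the solution $w_t$ of \ref{mgf:lambda} with $(w_{t=0},\dot w_{t=0})=(\theta_0,\dot\theta_0)$ as $w_t=v_t+z_t$ with $v_t=P_V w_t$ and $z_t=P_{V^\perp}w_t$. Projecting \ref{mgf:lambda} onto $V^\perp$ kills the gradient term and leaves the autonomous equation $\lambda\ddot z_t+\dot z_t=0$; solving it explicitly gives $z_t=z_0+\lambda\dot z_0\,(1-e^{-t/\lambda})$, hence $z_t\to z_0+\lambda\dot z_0=P_{V^\perp}(\theta_0+\lambda\dot\theta_0)$.

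Next, set $H=\tfrac1n X^\top X\succeq 0$, whose kernel is exactly $V^\perp$ and which is positive definite on $V$, and note $\nabla\L(\theta)=H(\theta-\theta^\star)$ for any interpolator $\theta^\star\in\mathcal S$. Since $Hz_t=0$, projecting \ref{mgf:lambda} onto $V$ gives $\lambda\ddot v_t+\dot v_t+H(v_t-P_V\theta^\star)=0$. Shifting by the fixed point $P_V\theta^\star$ and decomposing along an eigenbasis of $H|_V$, each eigenmode with eigenvalue $\mu>0$ satisfies the scalar linear ODE $\lambda\ddot\xi+\dot\xi+\mu\xi=0$ with characteristic roots $r_\pm=\frac{-1\pm\sqrt{1-4\lambda\mu}}{2\lambda}$, whose real parts are strictly negative (equal to $-\tfrac1{2\lambda}$ when $1-4\lambda\mu\le 0$, and both negative when $1-4\lambda\mu>0$). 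Therefore every such mode decays exponentially, so $v_t\to P_V\theta^\star$. (Alternatively, one can combine the energy functional $E(t)=\L(w_t)+\tfrac\lambda2\|\dot w_t\|_2^2$, which satisfies $\dot E=-\|\dot w_t\|_2^2\le 0$, with the $V^\perp$ computation above to pin down the limit; but the explicit characteristic-root analysis is cleanest since it directly yields convergence of the trajectory.)

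Combining the two limits, $w_t\to P_V\theta^\star+P_{V^\perp}(\theta_0+\lambda\dot\theta_0)$, which lies in the affine space $\mathcal S=P_V\theta^\star+V^\perp$. It remains to identify this point with the claimed minimiser: writing a generic element of $\mathcal S$ as $s=P_V\theta^\star+z$ with $z\in V^\perp$, orthogonality gives $\|s-(\theta_0+\lambda\dot\theta_0)\|_2^2=\|P_V\theta^\star-P_V(\theta_0+\lambda\dot\theta_0)\|_2^2+\|z-P_{V^\perp}(\theta_0+\lambda\dot\theta_0)\|_2^2$, minimised precisely at $z=P_{V^\perp}(\theta_0+\lambda\dot\theta_0)$. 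Hence $\lim_t w_t=\argmin_{\theta^\star\in\mathcal S}\|\theta^\star-(\theta_0+\lambda\dot\theta_0)\|_2$, as claimed.

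The only substantive step is establishing genuine convergence of the in-span component $v_t$ rather than merely $\L(w_t)\to 0$; the main obstacle is controlling the underdamped (oscillatory) eigenmodes, which is resolved by the observation that their characteristic roots have real part $-\tfrac1{2\lambda}$, bounded away from zero. Everything else is routine bookkeeping with the orthogonal projections $P_V$ and $P_{V^\perp}$.
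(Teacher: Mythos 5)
Your argument is correct. Every step checks out: the gradient of the quadratic loss lies in $V=\mathrm{span}(x_1,\dots,x_n)$, so the flow genuinely decouples; the $V^\perp$ component solves $\lambda\ddot z+\dot z=0$ and converges to $P_{V^\perp}(\theta_0+\lambda\dot\theta_0)$; on $V$ the shifted equation $\lambda\ddot\xi+\dot\xi+H\xi=0$ has all characteristic roots with strictly negative real part (including the critically damped and underdamped modes, whose real part is $-\tfrac{1}{2\lambda}$), so $v_t\to P_V\theta^\star$; and the Pythagorean identification of the limit with the Euclidean projection of $\theta_0+\lambda\dot\theta_0$ onto $\mathcal S$ is exactly right. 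Note, however, that the paper does not prove this proposition at all: it imports it from \cite{alvarez_convex_heavy_ball}, where convergence of the heavy-ball-with-friction trajectory is established for general convex losses via a Lyapunov/energy argument, with the drift $\lambda\dot\theta_0$ along the solution set identified in that broader framework. Your route is therefore genuinely different and more elementary: by specialising to the quadratic case you reduce everything to explicit linear ODEs and a spectral decomposition, which buys a short self-contained proof, explicit exponential decay of the in-span component, and a closed-form trajectory on $V^\perp$; what it gives up is generality, since the cited result covers arbitrary convex $F$ (and, in particular, does not rely on the loss having a constant Hessian), which is the form in which the momentum-does-not-change-the-implicit-bias message is usually stated. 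One cosmetic point: your energy-function aside is indeed unnecessary here, and you should state explicitly that $\lambda>0$ (the case $\lambda=0$ is plain gradient flow, for which the claimed limit degenerates correctly to the projection of $\theta_0$).
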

\ref{mgf:lambda} recovers the same solution as gradient flow but with an effective initialisation $\theta_0 + \lambda \dot{\theta}_0$ which takes into account the drift along $\mathrm{span}(x_1, \cdots, x_n)^\perp$ due to the initial speed $\dot{\theta}_0$. Note that in practice, $\dot{\theta}_0$ is chosen equal to $0$, in which case the presence of momentum has no effect on the recovered solution. 

To better understand momentum's effect on neural networks, we move beyond simple linear parametrization.

\myparagraph{$2$-Layer Diagonal Linear Network.}
We consider a toy neural network, which corresponds to reparametrising the regression vector as $\theta = u \odot v$ for weights $(u, v) \in \R^{2d}$. This parametrisation can be viewed as a simple neural network $x \mapsto \langle u, \sigma(\mathrm{diag}(v) x) \rangle$, where the output weights are $u$, the inner weights are the diagonal matrix $ \mathrm{diag}(v)$, and where the activation function $\sigma$ is the identity. The loss function over the trainable weights $w = (u, v) \in \R^{2d}$ now writes
\begin{equation*}
    F(w) = \L(u \odot v) =  \frac{1}{2n} \sum_{i=1}^n (y_i - \langle x_i, u \p v \rangle)^2,
\end{equation*}
where $\odot$ denotes the Hadamard product.
Despite the simplicity of this reparametrisation, the loss function $F$ is non-convex and challenging to analyse.

\myparagraph{Momentum Gradient Flow.}
We consider momentum gradient flow \ref{mgf:lambda} with parameter $\lambda \geq 0$ over the diagonal-linear-network loss $F$:
\begin{align}\label{MGF:diag}
\begin{split}
 &\lambda \ddot{u}_t + \dot{u}_t + \nabla \L(\theta_t) \odot v_t = 0 \\
 &\lambda \ddot{v}_t + \dot{v}_t + \nabla \L(\theta_t) \odot u_t = 0.
\end{split}
\end{align}
We initialise the flow with zero speed $\dot{u}_0 = \dot{v}_0 = 0$, and apart from requiring the quantity $\vert u_0^2 - v_0^2 \vert$ to have non-zero coordinates\footnote{If initially $u_{i, 0} = \pm v_{i, 0}$ for some coordinate $i \in [d]$, then $u_{i, t} = \pm v_{i, t}, \ \forall t \geq 0$. Hence, imposing $\vert u_0^2 - v_0^2 \vert \neq 0$ becomes equivalent to working with $2d$ distinct weights. See \Cref{integral_formulas} for the full argument from uniqueness.}, we impose no further constraints on the weight initialisations $(u_0, v_0)$. In what follows, we often rely on the reparametrisation $(w_{+, t}, w_{-, t}) \coloneqq (u_t + v_t, u_t - v_t)$ which makes our formulas more succinct. We will also make use of the \textit{initialisation scale} $\a$, which we define as $\a \coloneqq \max(\n{u_0}_\infty, \n{v_0}_\infty)$ and consider as a small quantity.

\myparagraph{Balancedness.} In our results, the \textit{balancedness} of the weights plays a key role. We recall its definition here.
\begin{definition}[Balancedness]
    The \textit{balancedness}\footnote{The absolute value in the definition must be understood coordinate-wise.} of the weights of the diagonal linear network corresponds to the quantity $\D_t \coloneqq \vert u_t^2 - v_t^2 \vert \in \R_{\geq 0}^d$. We define $\D_\infty \coloneqq \lim_{t \to \infty} \D_t$ as the \textit{asymptotic balancedness}.
\end{definition}
Notice that with the above definition we simply adapted the classical notion of balancedness for general linear neural networks~\citep[see][]{NEURIPS2018_fe131d7f, arora_implicit_matrix} to our toy setting. In the case of gradient flow, a simple derivation shows that balancedness is a conserved quantity: i.e., $\D_t = \D_0$ for all $t \geq 0$. However, the evolution of $\D_t$ becomes more complicated as soon as $\l > 0$, and our findings emphasise that the \textit{asymptotic balancedness} $\D_\infty$ plays a crucial role in the generalisation properties of the recovered solution.

\myparagraph{Experimental Details.}
In our numerical experiments, we explore the effects of momentum in the noiseless sparse regression setting with \textbf{uncentered data} as in~\citep{pmlr-v162-nacson22a}. Specifically, we choose $(x_i)_{i=1}^n \stackrel{\scriptscriptstyle \text{i.i.d.}}{\sim} \mathcal{N}(\mu \mathbf{1}, \s^2 I_d)$ and $y_i = \ip{x_i}{\t_s^\star}$ for $i \in [n]$, where $\t_s^\star$ is $s$-sparse with nonzero entries equal to $1/\sqrt{s}$. The use of uncentered data is necessary in order to experimentally observe a clear impact of momentum over the training trajectory (see \Cref{fig:app:final_comparison} for experiments with centered data). We train a 2-layer diagonal linear network with (M)GD and (M)GF with a uniform initialisation $u_0 = \a \cdot \mathbf{1}$, $v_0 = 0$, where $\a = 0.01$. For the plots presented in the main part of our paper, we fixed $(n, d, s) = (20, 30, 5)$, $(\mu, \s) = (1, 1)$. We show results averaged over 5 replications. We refer the reader to \Cref{experiments} for additional experiments where we vary the parameters of the data distribution (e.g., centered data), change the architecture of the trained model, and give further details on the implementation of the (M)GF simulation.

\myparagraph{Notations.} We let $X = (x_1, \dots, x_n)^\top \in \R^{n \times d}$ denote the feature matrix and $y = (y_1, \dots, y_n) \in \R^n$ -- the output vector. For a vector $z \in \R^d$ and a scalar function $f : \R \to \R$, the action of $f$ on $z$ must be understood element-wise: $f(z) \in \R^d$ represents the vector $(f(z_k))_{k=1}^d$. Inequalities between vectors will also be interpreted as holding coordinate-wise. Additionally, when we write $q_{\pm}$ for some place-holder quantity $q$, we mean that we refer to both $q_+$ and $q_-$. For example: $w_{\pm, t} = (u_t \pm v_t)$. Finally, for a strictly convex function $\Phi : \R^d \to \R$, which we call a \textit{potential}, the Bregman divergence is defined as the nonnegative quantity $D_\Phi(\theta_1, \theta_2) = \Phi(\theta_1) - \Phi(\theta_2) - \ip{\nabla \Phi(\theta_2)}{\theta_1 -\theta_2}, \ \forall \theta_1, \theta_2 \in \R^d$.

\subsection{Implicit Bias of Gradient Flow}\label{gf_bias}

Before analysing the effect of momentum, we start by recalling the known results for gradient flow on diagonal linear networks, which corresponds to taking $\lambda = 0$ in \cref{MGF:diag}. \citet{kernel_rich_regimes} show that the predictors $\theta_t = u_t \odot v_t$ converge towards an interpolator $\t^{\texttt{GF}}$ uniquely defined by the following constrained minimisation problem: 
\begin{equation}\label{implicit_bias_GF}
    \t^{\texttt{GF}} = \argmin_{\t^\star \in \mathcal{S}} D_{\psi_{\D_0}}(\t^\star, \t_0),
\end{equation}
where for $\D \in \R_{> 0}^d$, $\psi_{\D} : \R^d \to \R$ denotes the hyperbolic entropy function \citep{pmlr-v117-ghai20a} at scale~$\D$:
\begin{equation} \label{hyperbolic_entropy}
  \scalebox{0.86}{$\psi_\D(\t) = \frac{1}{4} \sum_{i=1}^d \left( 2\t_i \arcsinh \left( \frac{2\t_i}{\D_i} \right) - \sqrt{4\t_i^2 + \D_i^2} + \D_i \right),$}
\end{equation}
and $D_{\psi_{\Delta}}$ is the Bregman divergence. Note that through \cref{implicit_bias_GF}, $\theta^{\texttt{GF}}$ corresponds to the Bregman-projection of the initialisation on the set of interpolators.

\myparagraph{Effect of the Initialisation Scale.}
For a small initialisation scale $\a$, $\t_0 = O(\a^2)$ becomes much smaller than any interpolator $\t^\star \in \mathcal{S}$. Hence, $D_{\psi_{\D_0}}(\t^\star, \t_0)$ roughly equals $D_{\psi_{\D_0}}(\t^\star, 0)$, and \cref{implicit_bias_GF} should be thought of as
\begin{align}\label{eq:IB_GF}
    \t^{\texttt{GF}} \approx \argmin_{\t^\star \in \mathcal{S}} \psi_{\D_0}(\t^\star).
\end{align}
This last equation highlights the fact that the recovered solution simply depends on the initial balancedness $\Delta_0$, making it a key quantity.
Importantly, the hyperbolic entropy is a convex function which interpolates between the $\ell_1$ and $\ell_2$ norms as the magnitude of $\D_0$ goes from 0 to $+\infty$ (see \cite{kernel_rich_regimes}, Theorem 2). So, as $\D_0 = O(\a^2)$ goes to 0, $\psi_{\D_0}$ becomes asymptotically identical to the $\ell_1$-norm (see \Cref{technical_lemmas}). Hence, as seen through \cref{eq:IB_GF}, a small initialisation scale $\a$ leads to the recovery of a solution with a small $\ell_1$-norm, which facilitates sparse recovery and explains why this setting is referred to as the ``rich" or ``feature-learning" regime. On the other hand, larger initialisation scales lead to the so-called ``kernel" or ``lazy" regime, where gradient flow selects small-$\ell_2$-norm solutions. \textbf{Overall, the smaller the initialisation scale, the closer the retrieved solution will be to the minimum-$\ell_1$-norm solution.} We refer the reader to the work of \cite{wind2023implicit} for precise recovery bounds. However, as noted in \cite{even_pesme_sgd}, the picture remains incomplete if we do not take into account the homogeneity of $\D_0$. Indeed, initialisations with entries of different magnitudes can hinder the recovery of a sparse vector. However, in our case, our experiments (for uncentered data) verify that the overall magnitudes of $\D_0$ and $\D_\infty$ are sufficient to explain the effects of momentum. We therefore put aside potential homogeneity considerations.

\subsection{\!Implicit Bias of Momentum Gradient Flow} 

We now move on to describe the impact of momentum on the solution recovered by \ref{mgf:lambda}. Our work proceeds under the following assumption.

\begin{assumption}[Boundedness]\label{boundedness}
    The optimisation trajectory $(u_t, v_t)_{t \geq 0}$ of MGF \eqref{MGF:diag} is bounded.
\end{assumption}

Unfortunately, even though \Cref{boundedness} holds true in all our experiments, the boundedness of the trajectory of a second-order gradient flow has only been established under stronger assumption on the loss function~\citep{alvarez_convex_heavy_ball,quasiconvex_heavy_ball,PL_convergence_heavy_ball}. We defer further details to~\Cref{convergence}. Crucially, the boundedness assumption allows us to prove the convergence of the iterates, and we let $(u_\infty, v_\infty) \coloneqq \lim_{t \to \infty} (u_t, v_t)$. Our goal now becomes to characterise the recovered predictor which we denote with $\t^{\texttt{MGF}} \coloneqq u_\infty \odot v_\infty$. For our proofs, we make the following additional assumption.

\begin{assumption}[Balancedness]\label{non-zero-balancedness}
    The asymptotic balancedness $\D_\infty$ has non-zero coordinates: $\D_{\infty, i} > 0$ for all $i \in [d]$.
\end{assumption}

Again, \Cref{non-zero-balancedness} holds true empirically in all our experiments, and in \Cref{small_lambda}, we prove that small values of $\l$ lead to nonzero asymptotic balancedness. Positing \Cref{non-zero-balancedness} allows us to prove that the recovered solution $\t^{\texttt{MGF}}$ interpolates the dataset.

\subsubsection{General Characterisation of MGF Bias}

In our main result for MGF, we prove that the iterates converge towards an interpolator characterised as the solution of a constrained minimisation problem which involves the hyperbolic entropy \eqref{hyperbolic_entropy} scaled at the asymptotic balancedness $\D_\infty$. Moreover, we derive an insightful description of the asymptotic balancedness in terms of the full optimisation trajectory which allows us to compare the generalisation properties of MGF and GF for small values of $\l$. Before stating our main continuous-time theorem, we define two integral quantities which appear in our results.

\begin{lemma}\label{IntegralQuantities} The following integral quantities $\O_+$ and $\O_-$ are well-defined and finite:
\begin{align*}
    & \O_\pm \coloneqq \int_{0}^\infty \mathrm{m.p.v.} \int_{0}^t \Big( \frac{\dot{w}_{\pm, s}}{w_{\pm, s}} \Big)^2 e^{-\frac{t-s}{\lambda}} \sgn(w_{\pm, t} w_{\pm, s}) \dd s \ \dd t
\end{align*}
where $\sgn(\cdot)$ denotes the sign function, $w_{\pm, t} = u_t \pm v_t$, and $\mathrm{m.p.v.}$ denotes a modified Cauchy principal value defined in \Cref{app:sec:notations}.
\end{lemma}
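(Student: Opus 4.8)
The plan is to establish both well-definedness of the inner modified principal value integral and finiteness of the resulting outer integral by exploiting the exponential kernel $e^{-(t-s)/\lambda}$ together with the boundedness Assumption~\ref{boundedness} and the behaviour of the MGF dynamics near the initialisation scale. First I would record the key structural fact about the dynamics \eqref{MGF:diag}: rewriting in the $w_\pm$ coordinates, each $w_{\pm}$ satisfies a scalar-like second-order ODE whose coefficient involves $\nabla\L(\theta_t)$, and since $\L$ is analytic and the trajectory is bounded, $\theta_t$ converges, $\dot\theta_t \to 0$, and consequently $\dot w_{\pm,t}\to 0$. The potential difficulty is that $w_{\pm,t}$ can pass through (or converge to) zero in some coordinates, which is exactly why the ``modified principal value'' $\mathrm{m.p.v.}$ is needed; the ratio $\dot w_{\pm,s}/w_{\pm,s}$ is the logarithmic derivative, so morally $(\dot w_{\pm,s}/w_{\pm,s})^2$ is $(\tfrac{d}{ds}\log|w_{\pm,s}|)^2$, and near a simple zero of $w_{\pm,s}$ this integrand behaves like $(s-s_0)^{-2}$, which is not integrable — hence the need for the symmetric cancellation encoded by $\sgn(w_{\pm,t}w_{\pm,s})$ and the principal-value prescription.

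Concretely, I would proceed in the following steps. \textbf{Step 1 (local structure of zeros).} Using analyticity of the loss and the ODE, argue that in any coordinate $i$, $w_{\pm,t}^{(i)}$ has only isolated zeros on any compact time interval (or is identically zero, which is excluded by the non-degeneracy assumption on $|u_0^2-v_0^2|$), and at each such zero $s_0$ the derivative $\dot w_{\pm,s_0}^{(i)}$ is nonzero, so $w_{\pm,s}^{(i)} = c(s-s_0) + O((s-s_0)^2)$ with $c\neq 0$. \textbf{Step 2 (inner integral near a zero).} On a small interval around $s_0$, write $(\dot w/w)^2 \sgn(w_t w_s) = \sgn(w_t)\cdot\sgn(w_s)(\dot w_s/w_s)^2$; the factor $\sgn(w_s) \approx \sgn(c(s-s_0))$ is odd about $s_0$ while $(\dot w_s/w_s)^2 \approx c^2/(c^2(s-s_0)^2) = (s-s_0)^{-2}$ is even, so the symmetric (principal value) integral over $(s_0-\delta, s_0+\delta)$ of their product vanishes to leading order, leaving an integrable $O((s-s_0)^{-1})$ odd remainder whose principal value is finite. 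This is where I would invoke the precise definition of $\mathrm{m.p.v.}$ from \Cref{app:sec:notations} to make the cancellation rigorous, and it is the main obstacle — one must show the principal value is not merely formally finite but uniformly controlled as a function of $t$. \textbf{Step 3 (global integrability in $s$ for fixed $t$).} Away from zeros, $\dot w_{\pm,s}/w_{\pm,s}$ is continuous; combined with Step 2 the inner integral $\int_0^t(\cdot)e^{-(t-s)/\lambda}\dd s$ is well-defined and bounded by a constant times $\int_0^t e^{-(t-s)/\lambda}\dd s \le \lambda$, \emph{provided} we also control $(\dot w_{\pm,s}/w_{\pm,s})^2$ on regions where $w_{\pm,s}$ is small but nonzero; here the $\ell_\infty$ lower bound coming from convergence of $w_{\pm,t}$ to a nonzero limit (guaranteed coordinate-wise once $\theta^{\texttt{MGF}}$ is an interpolator with nonzero balancedness, cf.\ Assumption~\ref{non-zero-balancedness}) ensures $|w_{\pm,s}|$ is bounded below for large $s$, and on the compact transient phase only finitely many zeros occur.

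\textbf{Step 4 (outer integrability in $t$).} For large $t$, since $\dot\theta_t\to 0$ exponentially or at least integrably (which I would extract from a Lyapunov/energy estimate for \ref{mgf:lambda} under Assumption~\ref{boundedness}, as developed in \Cref{convergence}), the logarithmic derivative $\dot w_{\pm,s}/w_{\pm,s}$ is small for large $s$ because the denominator is bounded below; splitting the inner integral at a threshold $T_0$ past which all coordinates of $|w_{\pm,s}|$ exceed a fixed constant, the tail contributes $\int_{T_0}^t (\dot w_{\pm,s}/w_{\pm,s})^2 e^{-(t-s)/\lambda}\dd s$, which decays in $t$, while the head $\int_0^{T_0}(\cdot)e^{-(t-s)/\lambda}\dd s$ is bounded by a constant times $e^{-(t-T_0)/\lambda}$ and hence integrable over $t\in[T_0,\infty)$; the remaining piece $t\in[0,T_0]$ is a finite integral of a bounded (after Step 2) integrand. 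Summing these bounds gives $|\O_\pm| < \infty$. The crux of the argument, and the step I expect to spend the most care on, is Step 2: making the principal-value cancellation at the (possibly infinitely many, accumulating only at $+\infty$) zeros of $w_{\pm,\cdot}$ quantitative and uniform in $t$, so that it can be fed into the exponential-kernel estimates of Steps 3 and 4.
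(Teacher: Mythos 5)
Your Step 2 — which you correctly identify as the crux — contains a genuine gap: the parity cancellation you invoke is not enough, and as stated it is wrong at the critical order. Expanding around a simple zero $T$ of a coordinate of $w_{\pm}$, with $x = s-T$, $c=\dot w_T\neq 0$, $b=\ddot w_T$, one has $\big(\dot w_s/w_s\big)^2 = x^{-2} + (b/c)\,x^{-1} + O(1)$, and the kernel contributes $e^{-(t-s)/\lambda} = e^{-(t-T)/\lambda}\big(1+x/\lambda+O(x^2)\big)$. Multiplying by $\sgn(w_s)=\sgn(c)\sgn(x)$, the integrand near the pole is, up to a bounded prefactor,
\begin{equation*}
    \frac{\sgn(x)}{x^2} \;+\; \Big(\frac{b}{c}+\frac{1}{\lambda}\Big)\frac{1}{|x|} \;+\; O(1).
\end{equation*}
The leading term is odd and its (modified) principal value cancels, as you say; but the next term is \emph{even}, of size $1/|x|$, so for a generic function with a simple zero the principal value diverges logarithmically — your claim that the remainder is an "odd $O((s-s_0)^{-1})$" term is incorrect. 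Finiteness holds only because $w_{\pm}$ solves the flow: at the zero, $\lambda\ddot w_T+\dot w_T \pm \nabla\L(\theta_T)\odot w_T=0$ with $w_T=0$ forces $\ddot w_T/\dot w_T=-1/\lambda$, so the even coefficient $b/c+1/\lambda$ vanishes identically. You never invoke this ODE identity, so your argument cannot close. This is exactly the mechanism the paper uses: its proof (in the trajectory-dependent characterisation of \Cref{main_mgf:general}) works with $g_{\pm,t}=\ln|w_{\pm,t}|$, shows via Taylor expansion and the relation $\ddot w_T/\dot w_T=-1/\lambda$ that $g_{T+\eps}-g_{T-\eps}=O(\eps)$ and $\dot g_{T+\eps}+\dot g_{T-\eps}+1/\lambda=O(\eps)$, and then runs an induction over the vanishing times, deducing existence of the m.p.v.\ integrals from exact identities for $g_{\pm,\tau}$ and $\dot g_{\pm,\tau}$ (finiteness of $\O_\pm$ then follows by letting $\tau\to\infty$, using the nonzero limit $|w_{\pm,\infty}|$ and \Cref{gradient_integral}).

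Two secondary points. First, your closing allowance for "possibly infinitely many zeros accumulating only at $+\infty$" is inconsistent both with the m.p.v.\ definition in \Cref{app:sec:notations} (which requires a finite pole set) and with the actual situation: under \Cref{non-zero-balancedness} the sign of $w_{\pm}$ is eventually constant, every zero is a sign change (since $\dot w\neq 0$ there), and an accumulation of zeros would force, via the Mean Value Theorem and continuity, a point where $w$ and $\dot w$ vanish simultaneously — contradicting uniqueness of the flow with $\D_0\neq 0$; hence there are only finitely many zeros, a fact the paper proves and uses. Second, your Step 4 works but is phrased loosely: the clean way is Fubini plus the energy estimate $\dot E_t = -\n{\dot w_t}_2^2$, giving $\int_{T_0}^\infty\int_{T_0}^t (\dot w_s/w_s)^2 e^{-(t-s)/\lambda}\dd s\,\dd t \le \lambda \int_{T_0}^\infty (\dot w_s/w_s)^2\dd s < \infty$ once $|w_{\pm,s}|$ is bounded below past $T_0$.
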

The fact that the weights $w_{\pm, t}$ can cross zero necessitates the use of the modified Cauchy principal value since otherwise the integrals would diverge. Now, for succinctness, let us introduce the integral quantities
\begin{equation*}
I_\pm \coloneqq \O_\pm + \Lambda_\pm,
\end{equation*}
where the terms $\Lambda_\pm$ vanish whenever the balancedness $\D_t$ remains strictly positive for all $t \in [0, \infty]$. The precise form of $\Lambda_\pm$ is uninformative and can be found in \Cref{BigLambda}, \Cref{integral_formulas}.
We now proceed to characterise the recovered solution $\t^{\texttt{MGF}}$.

\begin{restatable}{theorem}{MainMGFGeneral}\label{main_mgf:general}
The solution $\t^{\texttt{MGF}}$ of MGF \eqref{MGF:diag} interpolates the dataset and satisfies the following implicit regularisation:
\begin{align*}
    \t^{\texttt{MGF}} = \argmin_{\t^\star \in \mathcal{S}} \  D_{\psi_{\D_\infty}}(\t^\star, \tilde{\t}_0).
\end{align*}
In the above expression, $D_{\psi_{\D_\infty}}$ denotes the Bregman divergence with potential $\psi_{\D_\infty}$, where the asymptotic balancedness equals
\begin{align*}
    \D_\infty = \D_0 \odot \exp \big( - (I_{+} + I_{-}) \big)
\end{align*}
and
$\tilde{\t}_0 = \frac{1}{4} ( w_{+,0}^2 \odot \exp\left( - 2 I_+ \right) - w_{-,0}^2 \odot \exp\left( - 2 I_- \right) )$
denotes a perturbed initialisation term.
\end{restatable}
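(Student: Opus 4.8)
The plan is to analyze the coordinate-wise dynamics of \eqref{MGF:diag} through the reparametrisation $(w_{+,t}, w_{-,t}) = (u_t + v_t, u_t - v_t)$, which diagonalises the second-order system into two decoupled (up to the shared residual) scalar ODEs per coordinate. Writing $r_t \coloneqq \frac{1}{n} X^\top (X\theta_t - y)$ for the gradient $\nabla \mathcal{L}(\theta_t)$, adding and subtracting the two equations in \eqref{MGF:diag} gives $\lambda \ddot{w}_{+,t} + \dot{w}_{+,t} = - r_t \odot w_{+,t}$ and $\lambda \ddot{w}_{-,t} + \dot{w}_{-,t} = + r_t \odot w_{-,t}$. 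The first step is to treat each of these as a scalar linear ODE in $w_{\pm,t}$ with a time-dependent coefficient and solve it in "log" form: formally, $\frac{d}{dt}\log|w_{\pm,t}|$ should satisfy a Riccati-type relation whose integrating factor involves $e^{-t/\lambda}$, and integrating twice against the Green's function of $\lambda \partial_t^2 + \partial_t$ produces exactly the convolution kernels $\int_0^t (\cdot) e^{-(t-s)/\lambda} ds$ appearing in $\mathcal{O}_\pm$. The quadratic term $(\dot w_{\pm,s}/w_{\pm,s})^2$ arises from the second-order term $\lambda \ddot w$ after dividing by $w$ (since $\ddot w / w = \partial_t(\dot w/w) + (\dot w/w)^2$), and the sign factor $\sgn(w_{\pm,t} w_{\pm,s})$ together with the modified principal value handles the zero-crossings; the extra $\Lambda_\pm$ terms collect the boundary contributions from those crossings and from any coordinate where $\Delta_t$ touches zero.

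The second step is to assemble these solved trajectories: from $w_{\pm,\infty} = w_{\pm,0} \cdot (\text{something}) \cdot \exp(-I_\pm)$ — more precisely tracking that $u_\infty \odot v_\infty = \frac14(w_{+,\infty}^2 - w_{-,\infty}^2)$ and $\Delta_\infty = |u_\infty^2 - v_\infty^2| = |w_{+,\infty} w_{-,\infty}|$ — one reads off $\Delta_\infty = \Delta_0 \odot \exp(-(I_+ + I_-))$ and the perturbed initialisation $\tilde\theta_0 = \frac14(w_{+,0}^2 \odot \exp(-2I_+) - w_{-,0}^2 \odot \exp(-2I_-))$. Here Assumption~\ref{boundedness} is used to guarantee the trajectory converges and that all the improper integrals defining $\mathcal{O}_\pm$ (hence $I_\pm$) converge, which is the content of \Cref{IntegralQuantities}; Assumption~\ref{non-zero-balancedness} is used to ensure $\Delta_\infty \in \R_{>0}^d$ so that $\psi_{\Delta_\infty}$ is a genuine strictly convex potential.

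The third step is the implicit-regularisation characterisation itself. Having the explicit asymptotic form of the weights, I would compute $\nabla \psi_{\Delta_\infty}(\theta_t)$ along the trajectory and show that its time derivative is, up to the stationarity structure of the flow, a linear combination of the rows $x_i$ — i.e., $\nabla\psi_{\Delta_\infty}(\theta_\infty) - \nabla\psi_{\Delta_\infty}(\tilde\theta_0) \in \mathrm{span}(x_1,\dots,x_n)$. Concretely, $\arcsinh(2\theta_i/\Delta_{\infty,i}) = \log\frac{u_{i,\infty}^2}{v_{i,\infty}^2}/2 \cdot(\text{sign bookkeeping})$ expresses $\nabla\psi_{\Delta_\infty}(\theta_\infty)$ in terms of $\log(w_{+,\infty}/w_{-,\infty})$, and from the ODE solution that log equals $\log(w_{+,0}/w_{-,0})$ minus a term lying in $\mathrm{span}(x_i)$ plus the correction captured by $\tilde\theta_0$. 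Combining the first-order optimality condition ($\theta^\star$ interpolates, $\nabla\psi_{\Delta_\infty}(\theta^{\texttt{MGF}}) - \nabla\psi_{\Delta_\infty}(\tilde\theta_0) \perp \mathcal{S} - \mathcal{S}$) with strict convexity of $\psi_{\Delta_\infty}$ yields uniqueness and identifies $\theta^{\texttt{MGF}}$ as the stated Bregman projection; that $\theta^{\texttt{MGF}} \in \mathcal{S}$ follows because $\mathcal{L}(\theta_t) \to 0$, which one gets from boundedness plus a Lyapunov/energy argument on \eqref{mgf:lambda}. The main obstacle is the rigorous treatment of the zero-crossings of $w_{\pm,t}$: showing the modified principal value is well-defined, that the $\Lambda_\pm$ boundary terms are finite and vanish in the generic strictly-positive-balancedness case, and that dividing the ODE by $w_{\pm,t}$ near its zeros is legitimate — this is where the delicate analysis lies, whereas the Green's-function computation and the convex-duality wrap-up are comparatively mechanical.
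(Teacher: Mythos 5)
Your plan follows essentially the same route as the paper: the $(w_+,w_-)$ log-reparametrisation solved against the kernel $e^{-(t-s)/\lambda}$ of $\lambda\partial_t^2+\partial_t$ (the paper's \Cref{convolution}), the modified principal value and the $\Lambda_\pm$ boundary terms for the zero crossings, reading $\Delta_\infty$ and $\tilde\theta_0$ off the asymptotic weights $w_{\pm,\infty}$, and concluding via span-membership of $\nabla\psi_{\Delta_\infty}(\theta_\infty)-\nabla\psi_{\Delta_\infty}(\tilde\theta_0)$ together with strict convexity, which the paper packages as the time-varying mirror flow of \Cref{prop:2nd_order_MF} plus the Bregman cosine theorem. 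The delicate zero-crossing analysis you correctly flag but leave open is exactly what the paper supplies, by first showing (using \Cref{non-zero-balancedness} and uniqueness of the flow) that each coordinate of $w_\pm$ vanishes only finitely many times and then running an induction over those vanishing times; the same assumption, not the energy argument alone, is also what yields interpolation, since $\nabla F(w_t)\to 0$ only gives $\nabla\mathcal{L}(\theta_\infty)\odot\Delta_\infty=0$.
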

The proof of \Cref{main_mgf:general} appears in \Cref{app:main_mgf:general} as well as explicit formulas for $\D_\infty$ and $\tilde{\t}_0$. We explain the significance and shed more light on the different parts of \Cref{main_mgf:general} below. 

\myparagraph{Perturbed Initialisation $\tilde{\theta}_0$.}
In all our experiments, we observe that the perturbed initialisation $\tilde{\theta}_0$ remains negligible in the sense that for any interpolator $\t^\star \in \mathcal{S}$, $\n{\tilde{\theta}_0}_2 \ll \n{\t^\star}_2$. Moreover, in the next section, we prove that whenever the balancedness remains nonzero during training, $\tilde{\theta}_0$ becomes smaller than $\a^2$, where $\a$ stands for the initialisation scale. Hence, exactly for the same reasons as for gradient flow, the implicit regularisation problem from \Cref{main_mgf:general} should be though of as
\begin{equation}\label{eq:MGF_approx}
    \t^{\texttt{MGF}} \approx  \argmin_{\theta^\star \in S} \psi_{\Delta_\infty}(\theta^\star).
\end{equation}
\Cref{consequences} provides more details. Thus, the asymptotic balancedness $\D_\infty$ becomes the key quantity governing the properties of the recovered solution. 

\myparagraph{Key Role of $\D_\infty$.}
If during optimisation the weights become more balanced, i.e. $ \D_\infty <  \D_0 $, then as discussed previously, 
based on the properties of $\psi_{\Delta_\infty}$, the recovered solution will enjoy better sparsity guarantees than the solution of gradient flow. 
\Cref{fig:mgf} illustrates this point: the smaller the magnitude of $\Delta_\infty$, the better the generalisation. Finally note that by \cref{implicit_bias_GF,eq:MGF_approx}, $\t^{\texttt{MGF}}$ approximately equals the solution recovered from gradient flow initialised at $u_0 = \sqrt{\D_\infty}, v_0 = 0$, which we denote by $\t_{\Delta_\infty}^{\texttt{GF}}$. We observe $||\t^{\texttt{MGF}} - \t_{\Delta_\infty}^{\texttt{GF}}||_2/||\t_{\Delta_\infty}^{\texttt{GF}}||_2 < 0.01$ in all our experiments, which validates the approximation in \cref{eq:MGF_approx}.

\begin{figure}
    \centering
    \includegraphics[width=.45\textwidth]{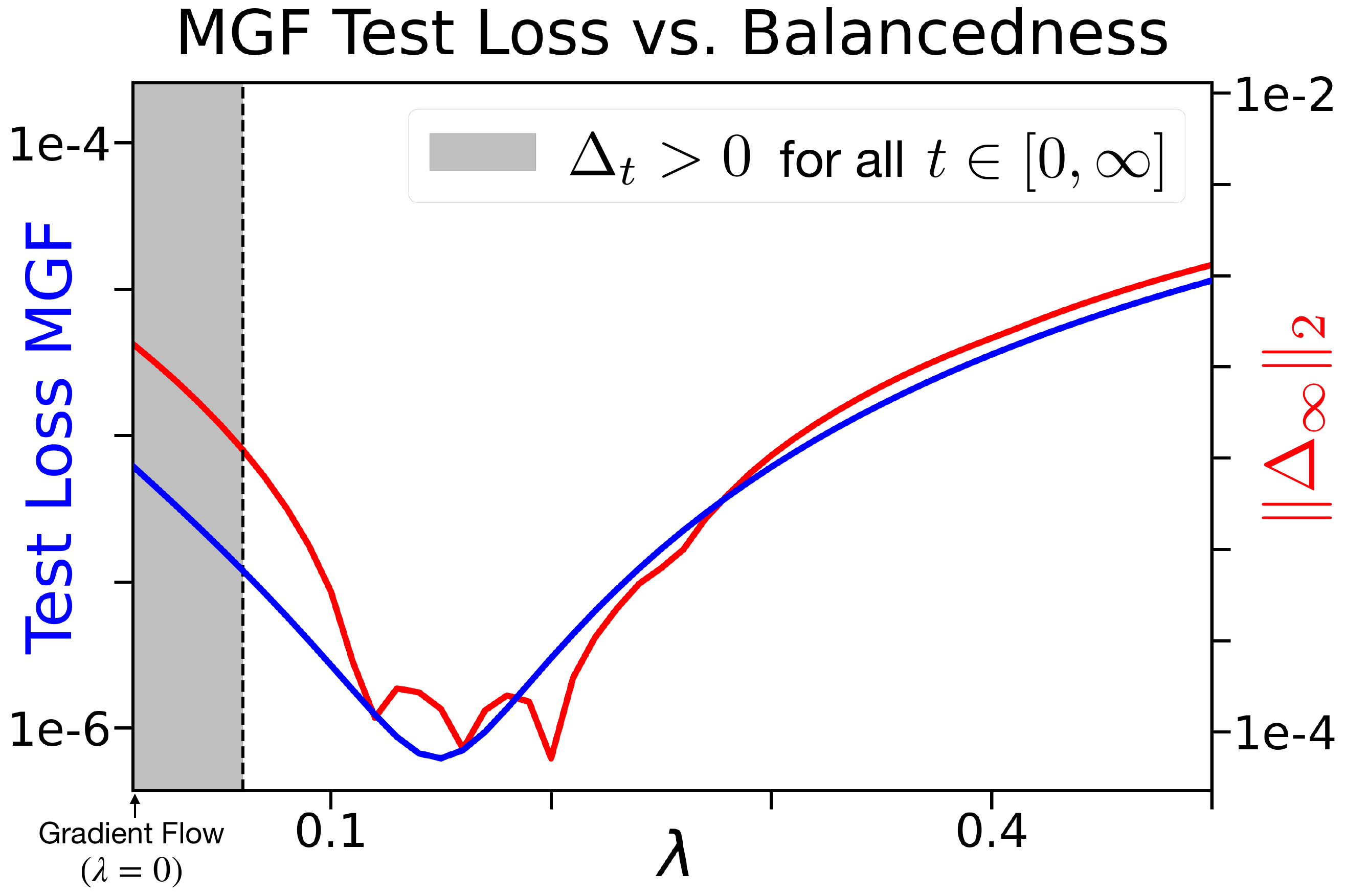}
    \vspace{-3mm}
    \caption{
    Test loss (in blue) and magnitude of balancedness (in red) at convergence of \ref{mgf:lambda} over a diagonal linear network in a sparse regression setting with uncentered data. As predicted by \Cref{main_mgf:general}, a more balanced solution generalises better. The shaded zone corresponds to values of $\lambda$ for which the balancedness never hits zero during training and for which \Cref{main_mgf} therefore holds.
    }
    \label{fig:mgf}
\end{figure}

\myparagraph{Path-Dependent Quantity.} Unfortunately, the asymptotic balancedness depends on the whole optimisation trajectory in a very intricate way, and we cannot compare $\n{\D_\infty}$ and $\n{\D_0}$. Thus, in general, we cannot meaningfully compare the recovered interpolators $\t^{\texttt{MGF}}$ and $\t^{\texttt{GF}}$.
However, in the following section we prove that with the additional assumption that the balancedness remains nonzero, we have $\D_\infty < \D_0$.

\subsection{Provable Benefits of Momentum for Small Values of $\lambda$}\label{small_lambda_regime}

In this subsection, we prove that for small values of the momentum flow parameter $\lambda$, the recovered solution becomes more balanced (and therefore sparser) than the solution of gradient flow. As a starting point for our argument, notice that if the balancedness $\D_t = |u_t^2 - v_t^2| = |w_{+, t} w_{-, t}|$ remains strictly positive throughout training, then the weights $w_{\pm, t}$ never change sign. Hence, the integral quantities $\Lambda_\pm$ become 0, and $\O_\pm > 0$. Thus, $I_\pm > 0$, which combined with \Cref{main_mgf:general} implies the following corollary.

\begin{restatable}{corollary}{MainMGF}\label{main_mgf}
    For $\lambda > 0$, if the balancedness $\D_t$ remains strictly positive during training (i.e. $\Delta_t \neq 0$ for $t \in [0, +\infty]$), then the perturbed initialisation satisfies $|\tilde{\t}_0| < \alpha^2$ and
\begin{align*}
   \D_\infty &= \D_0 \odot \exp \Big( - \lambda \int_0^\infty \Big( \frac{\dot{w}_{+, t}}{w_{+, t}} \Big)^2  +  \Big( \frac{\dot{w}_{-, t}}{w_{-, t}} \Big)^2 \dd t \Big).
\end{align*}
Importantly, $\D_\infty < \D_0$.
\end{restatable}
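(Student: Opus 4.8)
The plan is to feed the no-sign-change hypothesis into \Cref{main_mgf:general}, where it collapses the two ingredients $I_\pm = \O_\pm + \Lambda_\pm$ to something transparent. The first step is the elementary identity $\D_t = |u_t^2 - v_t^2| = |w_{+,t}\, w_{-,t}|$, which shows that ``$\D_t \neq 0$ for all $t \in [0,+\infty]$'' is equivalent to ``$w_{+,t} \neq 0$ and $w_{-,t} \neq 0$ for all $t \in [0,+\infty]$''. Since the MGF trajectory is continuous and, by \Cref{boundedness}, convergent, each map $t \mapsto w_{\pm,t}$ is continuous on the compactification $[0,+\infty]$ and nowhere zero, hence keeps a constant sign and is bounded away from $0$ uniformly in $t$. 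Two consequences follow: first, $\sgn(w_{\pm,t}\, w_{\pm,s}) = 1$ for all $s,t$ and the integrand $(\dot w_{\pm,s}/w_{\pm,s})^2$ has no singularity, so the modified principal value in \Cref{IntegralQuantities} reduces to an ordinary Lebesgue integral, finite by \Cref{IntegralQuantities}; second, unwinding the definition of $\Lambda_\pm$ from \Cref{integral_formulas} --- which is assembled out of the zero-crossing times of $w_{\pm,t}$ --- one gets $\Lambda_\pm = 0$. Hence $I_\pm = \O_\pm$.

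Next I would compute $\O_\pm$ by Tonelli's theorem: the integrand is now nonnegative, so swapping the order of integration and evaluating the inner integral $\int_s^\infty e^{-(t-s)/\lambda}\, \dd t = \lambda$ gives $\O_\pm = \lambda \int_0^\infty (\dot w_{\pm,t} / w_{\pm,t})^2\, \dd t$. Substituting this, together with $I_\pm = \O_\pm$, into $\D_\infty = \D_0 \odot \exp(-(I_+ + I_-))$ from \Cref{main_mgf:general} yields the claimed expression for $\D_\infty$. Its exponent equals $-\lambda$ times a coordinate-wise nonnegative (and, by the above, finite) integral, so $\D_\infty \le \D_0$ coordinate-wise; the inequality is strict in every coordinate $i$ in which the predictor $t \mapsto \theta_{i,t}$ is non-constant, because $\theta_{i,t} = \tfrac14(w_{+,i,t}^2 - w_{-,i,t}^2)$ being non-constant forces $\dot w_{+,i,\cdot}$ or $\dot w_{-,i,\cdot}$ to be nonzero on a set of positive measure, which makes the relevant integral strictly positive. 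The flow is non-stationary (the small initialisation $\theta_0$ is not an interpolator), and in the regression settings under consideration this holds in every coordinate, so $\D_\infty < \D_0$.

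For the perturbed initialisation I would start from the explicit expression $\tilde\theta_0 = \tfrac14\big(w_{+,0}^2 \odot \exp(-2 I_+) - w_{-,0}^2 \odot \exp(-2 I_-)\big)$ of \Cref{main_mgf:general}. Since $I_\pm = \O_\pm \ge 0$, we have $0 \le \exp(-2 I_\pm) \le 1$, hence $0 \le \tfrac14 w_{\pm,0}^2 \odot \exp(-2I_\pm) \le \tfrac14 w_{\pm,0}^2$ coordinate-wise. Moreover the hypothesis $\D_{0,i} = |u_{0,i}^2 - v_{0,i}^2| > 0$ rules out $|u_{0,i}| = |v_{0,i}| = \alpha$, so $|w_{\pm,0,i}| = |u_{0,i} \pm v_{0,i}| \le |u_{0,i}| + |v_{0,i}| < 2\alpha$ and therefore $\tfrac14 w_{\pm,0}^2 < \alpha^2$ coordinate-wise. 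The two vectors forming $\tilde\theta_0$ thus lie coordinate-wise in $[0, \alpha^2)$, and the absolute value of their difference is at most their coordinate-wise maximum, which is $< \alpha^2$; hence $|\tilde\theta_0| < \alpha^2$.

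I expect the only genuinely delicate point to be the appendix bookkeeping: confirming from the definitions in \Cref{integral_formulas} that ``no sign change of $w_{\pm,t}$'' indeed annihilates $\Lambda_\pm$ and turns the modified principal value into a plain Lebesgue integral --- once that is in place, the remainder is Tonelli plus elementary estimates. A secondary technical point is the uniform lower bound on $|w_{\pm,t}|$ that licenses these manipulations, which is exactly where the convergence of the trajectory (\Cref{boundedness}) together with $\D_\infty \neq 0$ (part of the hypothesis at $t = +\infty$) enters.
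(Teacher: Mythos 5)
Your proposal is correct and takes essentially the same route as the paper's appendix proof: the non-vanishing balancedness makes $\Lambda_\pm$ and the sign factors disappear, a Fubini--Tonelli swap collapses the double integral to $\lambda \int_0^\infty \big(\dot{w}_{\pm,t}/w_{\pm,t}\big)^2 \dd t$, and substituting into the formulas of \Cref{main_mgf:general} gives the expression for $\Delta_\infty$, the inequality $\Delta_\infty < \Delta_0$, and the bound $|\tilde{\theta}_0| < \alpha^2$. The only cosmetic differences are that the paper evaluates the finite-horizon integral and passes to the limit via its regularity lemma rather than applying Tonelli directly at $\tau = \infty$, and that it derives strictness of the $\alpha^2$ bound from $\alpha_{\pm,\infty} < w_{\pm,0} \leq 2\alpha$ whereas you use $|w_{\pm,0}| < 2\alpha$ (via $\Delta_0 \neq 0$); you are also somewhat more explicit than the paper about why $\Delta_\infty < \Delta_0$ is strict.
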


In words, the above corollary (proved in \Cref{app:main_mgf}) implies that if the balancedness $\Delta_t$ does not hit zero during training, then (i) the perturbation term $\tilde{\t}_0$ is provably negligible, (ii) the asymptotic balancedness is coordinate-wise smaller than initial balancedness $\Delta_0$ which translates into a solution with better sparsity properties than the gradient flow interpolator. This regime corresponds to the gray zone in \Cref{fig:mgf}. The following proposition proved in \Cref{technical_lemmas} demonstrates that for small values of $\l$, the balancedness remains strictly positive.

\begin{restatable}{proposition}{SmallLambda}\label{small_lambda}
    For $\lambda \leq \frac{n}{\Vert y \Vert_2^2} \cdot  (\min_{i \leq d} \D_{0, i})$, the balancedness $ \D_t$ never vanishes: $\D_t \neq 0, \ \forall t \in [0, +\infty]$. 
\end{restatable}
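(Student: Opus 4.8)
The plan is to show that the balancedness $\D_t = |w_{+,t} w_{-,t}|$ can never reach zero by controlling how fast the product $w_{+,t} w_{-,t}$ can change. First I would derive the dynamics of $w_{\pm,t} = u_t \pm v_t$. Adding and subtracting the two lines of \eqref{MGF:diag} gives $\lambda \ddot{w}_{\pm,t} + \dot{w}_{\pm,t} + \nabla\L(\theta_t) \odot w_{\mp,t} = 0$, so coordinate-wise $\lambda \ddot{w}_{\pm,i} + \dot{w}_{\pm,i} = -r_i(t)\, w_{\mp,i}$, where $r_i(t) \coloneqq (\nabla\L(\theta_t))_i = -\tfrac{1}{n}\sum_j (y_j - \langle x_j,\theta_t\rangle)(x_j)_i$ is the $i$-th residual-gradient coordinate. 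From this I would extract a differential inequality for $\D_{t,i} = |w_{+,t,i} w_{-,t,i}|$ (or rather for the signed product $p_i(t) \coloneqq w_{+,t,i} w_{-,t,i} = u_{i,t}^2 - v_{i,t}^2$, which is smoother). Differentiating, $\dot p_i = \dot w_{+,i} w_{-,i} + w_{+,i}\dot w_{-,i}$, and one more derivative lets me eliminate $\ddot w_{\pm,i}$ using the ODE; the cross terms involving $r_i$ should combine so that $|\dot p_i|$ is bounded by something integrable.

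The cleanest route is probably an energy/Grönwall argument: introduce the logarithmic variable and show $\frac{d}{dt}\log|p_i(t)|$ is integrable with an explicit bound. Indeed \Cref{main_mgf} essentially already tells us (when $\D_t$ stays positive) that $\log\D_{\infty,i} - \log\D_{0,i} = -\lambda\int_0^\infty (\dot w_{+,i}/w_{+,i})^2 + (\dot w_{-,i}/w_{-,i})^2\, dt$, so the real content is an a priori bound on $\int_0^\infty (\dot w_{\pm,i}/w_{\pm,i})^2\, dt$ that does not presuppose positivity. I would instead work directly: using the ODE for $w_{\pm,i}$, multiply by $\dot w_{\pm,i}/w_{\pm,i}^2$ (or a similar integrating factor) to produce a quantity whose time-integral of the squared log-derivative is controlled by boundary terms plus $\int_0^\infty r_i(t)\,(\text{bounded})\, dt$. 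The residual integral is where $\|y\|_2^2/n$ enters: since $\L(\theta_t)$ is nonincreasing along MGF up to the kinetic correction — more precisely the MGF energy $\mathcal{E}_t = \L(\theta_t) + \tfrac{\lambda}{2}\|\dot w_t\|^2$-type Lyapunov functional is nonincreasing — we get $\int_0^\infty \|\nabla\L(\theta_t)\|\cdot\|\dot\theta_t\|\,dt \lesssim \L(\theta_0) \le \tfrac{1}{2n}\|y\|_2^2$ (using $\theta_0 \approx 0$, or bounding $\L(\theta_0)$ explicitly by $\|y\|_2^2/(2n)$ up to lower-order terms in $\alpha$). Combining, $\big|\log|p_i(t)|\big|$ stays bounded by a quantity $\le \lambda \cdot \tfrac{\|y\|_2^2}{n}\big/\D_{0,i} \cdot(\text{const})$ for all $t$, so if $\lambda \le \tfrac{n}{\|y\|_2^2}\min_i \D_{0,i}$ the exponent never drops enough to send $p_i$ through zero.

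The main obstacle I anticipate is making the "residual integral" bound rigorous while the weights are allowed to cross zero: the natural log-derivative quantities $\dot w_{\pm,i}/w_{\pm,i}$ blow up near zeros of $w_{\pm,i}$, which is exactly the situation the modified principal value in \Cref{IntegralQuantities} was invented to handle. So I would need to either (a) run the argument on the first interval $[0,T)$ where $\D_t$ first hits zero and derive a contradiction with the bound on $\log|p_i|$ (since $\log|p_i| \to -\infty$ at $T$ but the bound keeps it finite), which sidesteps the crossing issue entirely, or (b) regularize $p_i$ by $p_i^2 + \epsilon$ and pass to the limit. Option (a) is the slickest: assume for contradiction that $T \coloneqq \inf\{t : \D_{t,i} = 0 \text{ for some } i\} < \infty$; on $[0,T)$ all $w_{\pm,i}$ keep a fixed sign, so the log-derivatives are finite and the Grönwall bound applies verbatim, yielding $\inf_{t<T}|p_i(t)| > 0$, contradicting continuity of $p_i$ and $p_i(T)=0$. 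The remaining care is just checking the constant: verifying that the boundary terms from integration by parts are nonpositive (they involve $\dot w_{\pm,0}=0$ at the start and, at $T$, terms of a definite sign) and that the Lyapunov functional genuinely controls $\int r_i \dot\theta$, which follows from \Cref{boundedness} and the standard heavy-ball energy identity $\frac{d}{dt}\mathcal{E}_t = -\|\dot w_t\|^2 \le 0$.
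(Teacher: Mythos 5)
Your starting point is the right one -- track the signed product $u_t^2-v_t^2$ and use the heavy-ball energy -- but there is first a computational slip and then a genuine gap. The slip: adding and subtracting the two lines of \eqref{MGF:diag} gives $\lambda \ddot w_{\pm,t}+\dot w_{\pm,t}\pm\nabla\L(\theta_t)\odot w_{\pm,t}=0$ (each equation self-coupled, with opposite signs), not a coupling to $w_{\mp}$; with the dynamics as you wrote them, the cancellation of the gradient terms that you are counting on for $p=w_{+}\odot w_{-}$ would in fact fail. With the correct equations it does hold: multiplying the $u$-equation by $u_t$, the $v$-equation by $v_t$ and subtracting kills $\nabla\L(\theta_t)$ and leaves the \emph{linear} first-order relation $\lambda\frac{\mathrm{d}}{\mathrm{d}t}\big(\dot u_t\odot u_t-\dot v_t\odot v_t\big)+\big(\dot u_t\odot u_t-\dot v_t\odot v_t\big)=\lambda\big(\dot u_t^2-\dot v_t^2\big)$. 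At this point the argument is essentially finished: since $\dot u_0=\dot v_0=0$, integrating this linear ODE (\Cref{convolution}, \Cref{regularity}) yields $u_t^2-v_t^2=(u_0^2-v_0^2)+2\lambda\int_0^t(\dot u_s^2-\dot v_s^2)\big(1-e^{-(t-s)/\lambda}\big)\,\mathrm{d}s$, and the energy identity $\dot E_t=-\Vert\dot w_t\Vert_2^2$ gives $\int_0^\infty(\dot u_t^2+\dot v_t^2)\,\mathrm{d}t\le\frac{\Vert y\Vert_2^2}{2n}\mathbf{1}$ coordinate-wise, whence the additive bound $\Delta_t\ge\Delta_0-\frac{\lambda\Vert y\Vert_2^2}{n}\mathbf{1}\ge 0$, valid for all $t$ with no sign assumption and no hitting-time argument. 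This is the paper's proof, and it delivers exactly the stated threshold.

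The gap is your pivot away from this linear identity towards $\log|p_i|$ and $\int_0^T(\dot w_{\pm,i}/w_{\pm,i})^2\,\mathrm{d}t$. That route is circular: finiteness of this integral up to the first vanishing time $T$ is essentially equivalent to what must be proven (when $\Delta$ does vanish, as it does for large $\lambda$, the integral diverges), so ``the Gr\"onwall bound applies verbatim on $[0,T)$'' begs the question -- sign constancy on $[0,T)$ makes the log-derivative finite pointwise, but near $T$ one of $w_{\pm,i}\to 0$ and nothing a priori makes $(\dot w_{\pm,i}/w_{\pm,i})^2$ integrable there, let alone quantitatively small. De-circularising it requires a bootstrap that converts the energy bound $\int\dot w_{\pm,i}^2\le\Vert y\Vert_2^2/n$ into a bound on $\int(\dot w_{\pm,i}/w_{\pm,i})^2$ via a pointwise lower bound on the \emph{individual} weights; since only their product is controlled, you must invoke the boundedness constant $M$ of \Cref{boundedness}, and the threshold you would obtain scales like $n\Delta_{0,i}^2/(M^2\Vert y\Vert_2^2)$ -- strictly weaker than the stated $\frac{n}{\Vert y\Vert_2^2}\min_i\Delta_{0,i}$, since $\Delta_0=O(\alpha^2)$. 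A further unsupported step is $\int_0^\infty\Vert\nabla\L(\theta_t)\Vert\,\Vert\dot\theta_t\Vert\,\mathrm{d}t\lesssim\L(\theta_0)$: the heavy-ball energy identity does not give this (the integrand is not sign-definite along MGF because of oscillations); what it controls is $\int_0^\infty\Vert\dot w_t\Vert_2^2\,\mathrm{d}t\le\L(\theta_0)\le\frac{\Vert y\Vert_2^2}{2n}$, and that is the bound to feed into the linear (not logarithmic) formula above.
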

Hence, through \Cref{small_lambda} and \Cref{main_mgf}, we show that small values of $\lambda$ lead to solutions with better sparse recovery guarantees.

\myparagraph{Limitations of Our Analysis.} In \Cref{integral_formulas}, we prove that $\D_t$ can vanish at most a finite number of times. Experimentally, $\D_t$ never hits $0$ for much larger values of $\l$ than $\frac{n}{\Vert y \Vert_2^2} \cdot  (\min_{i \leq d} \D_{0, i})$, making the bound from \Cref{small_lambda} relatively loose. In \Cref{fig:mgf}, we empirically observe an interval $(0, \lambda_{max})$  in which MGF($\lambda$) outperforms GF in terms of generalisation. Moreover, there exists an optimal value $\lambda^\star$ (roughly corresponding to the smallest $\D_\infty$) which brings about the most improvement compared to gradient flow. Unfortunately, as observed \Cref{fig:mgf}, the balancedness vanishes for $\lambda = \lambda^\star$, and therefore \Cref{main_mgf} does not cover the optimal value. Also note that $(0, \lambda_{max})$ and $\lambda^\star$ depend on the data.

\myparagraph{Behaviour of $\Delta_\infty$ for Small Values of $\lambda$.}
Unfortunately, determining the precise effect of $\lambda$ on $\Delta_\infty$ is challenging. Nonetheless, for small $\lambda$, we informally show in \Cref{behavior_of_delta} that 
\begin{align*}
    \D_\infty^2 \underset{\lambda \to 0}{\approx} \D_0^2 \odot \exp \big( - 2 \lambda \int_0^\infty \nabla \L(\theta_s)^2 \dd s \big).
\end{align*}
This approximate equivalence for small $\lambda$ echoes the implicit bias of SGD \citep{even_pesme_sgd,pesme2021implicit}, which involves a similar formulation for the effective initialisation where the step size $\gamma$ appears instead of $\lambda$. Note that the above approximation suggests that for small values of $\lambda$, $\D_\infty$ monotonically decreases with $\lambda$ as experimentally confirmed by \Cref{fig:mgf}. 

\subsection{Sketch of Proof} \label{mgf_sketch}

\myparagraph{Implicit Bias through a Second-Order Time-Varying Mirror Flow.}
A natural way of showing the implicit regularisation \eqref{implicit_bias_GF} of gradient flow on a 2-layer diagonal linear network goes through proving that the predictors $\theta^{\texttt{GF}}_t$ follow the mirror flow $\dd \nabla \psi_{\Delta_0}(\theta^{\texttt{GF}}_t) = - \nabla \L(\theta^{\texttt{GF}}_t) \dd t$. 
In our setting, we prove that the predictors $\theta^{\texttt{MGF}}_t$ follow a second-order time-varying mirror flow. Specifically, we define a family of potentials $(\Phi_t)_{t \geq 0}$ with
$\Phi_t(\theta) \coloneqq \psi_{ \D_t }(\theta) - \langle \phi_t, \theta \rangle$ where $\psi_{ \D_t }$ corresponds to the hyperbolic entropy \eqref{hyperbolic_entropy} depending on the balancedness $\D_t$ and a perturbation function $\phi_t$. We then prove the following proposition.
\begin{proposition}\label{prop:2nd_order_MF}
The predictors $\theta_t^{\texttt{MGF}}$ follow a momentum mirror flow with time-varying potentials $\Phi_t$:
    \begin{align*}
        \lambda \frac{\dd^2  \nabla \Phi_t (\theta_t^{\texttt{MGF}}) }{\dd t^2} + \frac{\dd  \nabla \Phi_t (\theta_t^{\texttt{MGF}}) }{\dd t} + \nabla \L(\theta_t^{\texttt{MGF}}) = 0.
    \end{align*}
\end{proposition}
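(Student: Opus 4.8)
The plan is to start from the coupled second-order ODE system \eqref{MGF:diag} for the weights $(u_t, v_t)$ and push it forward through the nonlinear map $(u, v) \mapsto \theta = u \odot v$, tracking carefully the second-order structure. The natural coordinates are $w_{\pm, t} = u_t \pm v_t$, in which $\theta_t = \frac14(w_{+,t}^2 - w_{-,t}^2)$ and the balancedness is $\D_t = |w_{+,t} w_{-,t}|$. Adding and subtracting the two lines of \eqref{MGF:diag} gives decoupled-looking equations $\lambda \ddot w_{\pm,t} + \dot w_{\pm,t} + \nabla\L(\theta_t)\odot w_{\mp,t} = 0$ (the gradient term couples them, but the linear-in-derivative part is diagonal). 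The first key computation is to find the ODE satisfied by $\log|w_{\pm,t}|$ (coordinate-wise), since $\arcsinh$-type potentials are built precisely so that $\partial_{\theta_i}\psi_{\D}(\theta) = \tfrac12\log\!\big(\text{ratio of }w_\pm\big)$-type expressions; concretely $\nabla\psi_{\D_t}(\theta_t)$ should come out to $\tfrac12(\log|w_{+,t}| - \log|w_{-,t}|)$ up to the $\D_t$-dependence, and the perturbation $\phi_t$ is engineered to absorb exactly the terms involving $\dot w_{\pm}/w_{\pm}$ that are generated by the $\lambda\ddot w$ piece when one differentiates a quotient.

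Concretely, I would proceed as follows. First, derive the scalar ODEs: dividing $\lambda \ddot w_{\pm} + \dot w_{\pm} = -\nabla\L(\theta_t)\odot w_{\mp}$ by $w_{\pm}$ and using $\ddot w_\pm / w_\pm = \tfrac{d}{dt}(\dot w_\pm/w_\pm) + (\dot w_\pm/w_\pm)^2$, one gets an equation of the form $\lambda\big[\tfrac{d}{dt}(\dot w_{\pm}/w_{\pm}) + (\dot w_{\pm}/w_{\pm})^2\big] + \dot w_{\pm}/w_{\pm} = -\nabla\L(\theta_t)\, w_{\mp}/w_{\pm}$. Second, form the combination that yields $\nabla\Phi_t(\theta_t)$: since $w_{\mp}/w_{\pm} = \mp 1 + (\text{stuff involving }\theta_t/w_\pm^2)$ and $\tfrac14(w_+^2 - w_-^2) = \theta_t$, taking the appropriate signed difference of the $+$ and $-$ equations should produce $-\nabla\L(\theta_t)$ on the right-hand side. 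Third, define $\D_t$ and $\phi_t$ so that the left-hand side is exactly $\lambda \tfrac{d^2}{dt^2}\nabla\Phi_t(\theta_t) + \tfrac{d}{dt}\nabla\Phi_t(\theta_t)$; this forces $\D_t = \D_0\odot\exp(-(I_+ + I_-))$ (matching \Cref{main_mgf:general}) via an ODE for $\log\D_t$ driven by $(\dot w_{\pm}/w_{\pm})^2$, and pins down $\phi_t$ up to the vanishing-balancedness corrections $\Lambda_\pm$. Fourth, verify that $\Phi_t$ is strictly convex (so it is a legitimate potential): this holds because $\psi_{\D_t}$ is strictly convex whenever $\D_{t,i} > 0$, which is where \Cref{non-zero-balancedness} enters, and the linear term $-\langle\phi_t,\theta\rangle$ does not affect convexity.

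The main obstacle is the bookkeeping around the sign changes of $w_{\pm,t}$, i.e. the points where the balancedness hits zero. The quantities $\log|w_{\pm,t}|$ and $\dot w_{\pm,t}/w_{\pm,t}$ blow up there, and the naive integrals defining $\O_\pm$ diverge, which is exactly why \Cref{IntegralQuantities} invokes the modified Cauchy principal value and why the correction terms $\Lambda_\pm$ appear. Handling this rigorously requires showing (as stated in \Cref{integral_formulas}) that $\D_t$ vanishes only finitely often, splitting $[0,\infty)$ at these crossing times, checking that $\nabla\Phi_t(\theta_t)$ and its first derivative extend continuously across each crossing (the singular contributions cancel in the right combination), and that the jump contributions are precisely what $\Lambda_\pm$ records. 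On the favourable event that $\D_t$ stays strictly positive (the regime of \Cref{main_mgf}), $\Lambda_\pm = 0$ and the whole computation is clean; the general case is the technically delicate part. I would present the strictly-positive case first as the transparent argument, then indicate how the principal-value machinery patches the general case, deferring the crossing analysis to the appendix.
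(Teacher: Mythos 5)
Your starting point contains a concrete error that propagates through the whole plan. Summing and differencing the two lines of \eqref{MGF:diag} gives
\begin{align*}
\lambda \ddot{w}_{\pm, t} + \dot{w}_{\pm, t} \pm \nabla \L(\theta_t) \odot w_{\pm, t} = 0,
\end{align*}
i.e.\ the gradient term multiplies $w_{\pm}$ itself (with a $\pm$ sign), so the two equations are decoupled in the weights and only interact through $\theta_t$ inside $\nabla\L$; your version with $\nabla\L(\theta_t)\odot w_{\mp,t}$ is wrong. Your ``second step'' inherits this: the identity $w_{\mp}/w_{\pm} = \mp 1 + (\text{terms in } \theta_t/w_{\pm}^2)$ is false, and no signed combination is needed to make $\nabla\L$ appear — dividing the correct equation by $w_{\pm,t}$ already yields $\lambda\big[\tfrac{\dd}{\dd t}(\dot w_{\pm}/w_{\pm}) + (\dot w_{\pm}/w_{\pm})^2\big] + \dot w_{\pm}/w_{\pm} \pm \nabla\L(\theta_t) = 0$, and the half-difference gives an ODE for $\tfrac12\ln|w_{+}/w_{-}| = \nabla\psi_{\D_t}(\theta_t)$ with residual $\tfrac{\lambda}{2}\big[(\dot w_{+}/w_{+})^2 - (\dot w_{-}/w_{-})^2\big]$ that $\phi_t$ must absorb. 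Relatedly, $\D_t$ is not a quantity you may ``define so that the left-hand side works out'': it is the actual balancedness $|u_t^2 - v_t^2|$ of the trajectory appearing in the statement, and the formula $\D_0 \odot \exp(-(I_+ + I_-))$ you invoke is the expression for $\D_\infty$, not for $\D_t$; the only engineered object is the perturbation $\phi_t$.

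The paper's proof also avoids the residual bookkeeping you sketch. It defines $\xi_t$ as the solution of the auxiliary linear ODE $\lambda\ddot\xi_t + \dot\xi_t + \nabla\L(\theta_t) = 0$ with $\xi_0 = \dot\xi_0 = 0$, so that $\xi_t = -\int_0^t \nabla\L(\theta_s)(1 - e^{-(t-s)/\lambda})\,\dd s \in \mathrm{span}(x_1,\dots,x_n)$; it then sets $\a_{\pm,t} = w_{\pm,t}\exp(\mp\xi_t)$ and $\phi_t = \tfrac12\arcsinh\big((\a_{+,t}^2 - \a_{-,t}^2)/(2\D_t)\big)$, and checks the purely algebraic identity $\nabla\Phi_t(\theta_t) = \xi_t$, after which the momentum mirror flow is just the defining equation of $\xi$. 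In particular, no modified-principal-value machinery is needed for this proposition: under \Cref{non-zero-balancedness} there is a time $T$ after which $w_{\pm,t}$ has no zeros, and the proposition (as restated precisely in the appendix) is only claimed for $t \geq T$, or for all $t$ when $\D_t$ never vanishes; the $\mathrm{m.p.v.}$ integrals and the corrections $\Lambda_\pm$ enter only later, in the trajectory-dependent formulas for $\D_\infty$ and $\tilde\theta_0$ in \Cref{main_mgf:general}. Your plan could be repaired into an equivalent ``differentiate $\nabla\psi_{\D_t}(\theta_t)$ and choose $\phi_t$ to absorb the residual'' argument, but as written the starting system and the key algebraic step are incorrect, and the genuinely delicate zero-crossing analysis is attributed to the wrong result.
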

The implicit regularisation follows from integrating the ODE: $\nabla \Phi_\infty (\theta^{\texttt{MGF}}) = - \int_0^\infty \nabla \L(\theta_t^{\texttt{MGF}}) \dd t \in \texttt{span}(x_1, \dots, x_n)$ which exactly corresponds to the KKT conditions of the constrained minimisation from \Cref{main_mgf:general}. Assuming that $w_{\pm, t}$ do not change sign, the proof of \Cref{prop:2nd_order_MF} comes naturally and relies on the writing $w_{\pm, t}= \sgn (w_{\pm, 0}) \exp( \rho_{\pm, t})$. When the iterates cross $0$, this reparametrisation does not hold anymore. The analysis can still be carried out by decomposing $\R_{\geq 0}$ into intervals on which the iterates have constant sign and appropriately sticking the intervals using a modified Cauchy principal value.

\begin{figure*}[h!]
    \centering
    \includegraphics[width=1\textwidth]{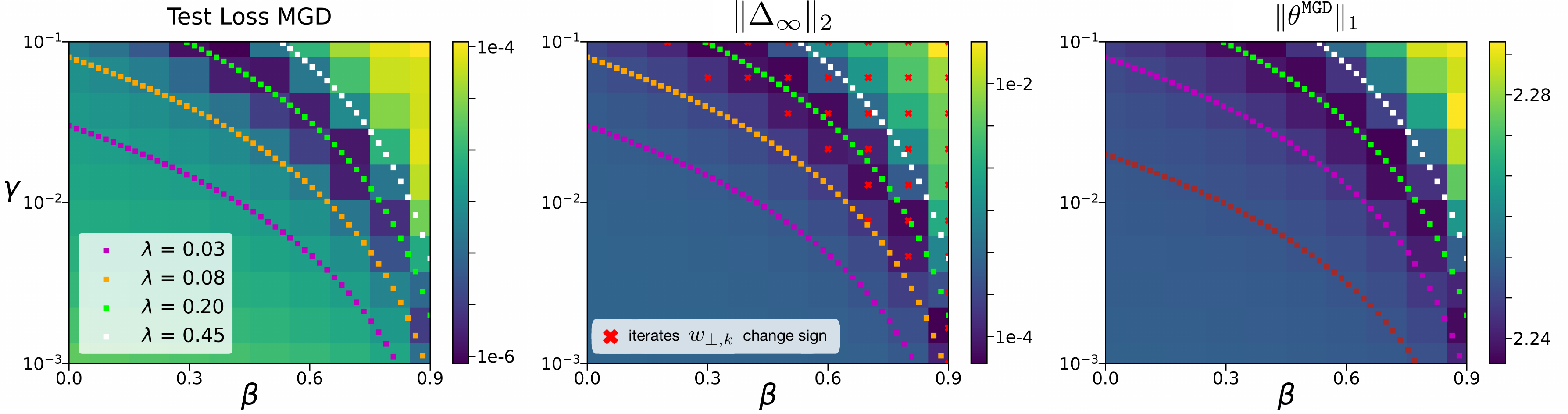}
    \vspace{-3mm}
    \caption{(Non-stochastic) MGD over a diagonal linear network in a sparse regression setting with uncentered data.  As predicted by \Cref{prop:discretisation}, the three quantities at convergence only depend on the single parameter $\lambda \coloneqq \gamma / (1 - \beta)^2$.
    As predicted by \Cref{main_mgd:general}, a more balanced solution (\textit{center plot}) leads to a solution with a smaller $\ell_1$-norm (\textit{right plot}), which in turn translates into better generalisation (\textit{left plot}). Finally, as predicted by \Cref{main_mgd}, the trajectories for which the iterates do not cross zero satisfy $\Delta_\infty < \Delta_0$, where $\Delta_0$ (approximately) corresponds to the asymptotic balancedness for $\beta = 0$ and $\gamma = 10^{-3}$.}
    \label{fig:mgd}
\end{figure*}

\section{Momentum SGD over Diagonal Linear Networks}\label{sec:MGD}

In this section, we move from continuous to discrete time and focus on the original \ref{mgd} recursion for which we can prove similar but slightly weaker results than the ones for MGF. In fact, our results hold for stochastic momentum gradient descent (SMGD) with any batch size $B \in [n]$. For step size $\g >0$ and momentum parameter $\b \in [0,1)$, the SMGD recursion writes as follows:
\begin{align}\label{SMGD:diag}
\begin{split}
 & u_{k+1} = u_k - \g\nabla \L_{\cB_k}(\t_k) \p v_k + \b(u_k - u_{k-1}) \\
 & v_{k+1} = v_k - \g\nabla \L_{\cB_k}(\t_k) \p u_k + \b(v_k - v_{k-1}),
\end{split}
\end{align}
where
$L_{\cB_k}(\t) \coloneqq  \frac{1}{2B}\sum_{i\in\cB_k}(y_i - \langle u \odot v ,x_i\rangle)^2$ corresponds to the partial loss over the batch  $\cB_k\subset[n]$  of size $B$. The batches could be sampled with or without replacement. As for continuous time,
we let $\t_k = u_k \p v_k$ correspond to the regression predictor. We initialise at $u_1 = u_0$ and $v_1 = v_0$, and we again consider the balancedness of the weights $\D_k \coloneqq \vert u_k^2 - v_k^2 \vert$ for $k \geq 0$, the reparametrised iterates $w_{\pm, k} \coloneqq u_k \pm v_k$, and the \textit{initialisation scale} $\a \coloneqq \max(\n{u_0}_\infty, \n{v_0}_\infty)$.  In contrast to our continuous-time prerequisites where we only assumed boundedness of the optimisation trajectory, here we assume that the iterates converge:
\begin{assumption}[Convergence]\label{ass:discrete:convergence}
    The iterates $(u_k, v_k)$ converge towards the limiting weights $(u_\infty, v_\infty)$. We denote by $\theta^{\texttt{SMGD}} \coloneqq u_\infty \odot v_\infty$ the recovered predictor.
\end{assumption}
As in continuous time, we again assume that the asymptotic balancedness is nonzero.
\begin{assumption}[Balancedness]\label{ass:discrete:balancedness}
 The asymptotic balancedness $\Delta_\infty \coloneqq |u_\infty^2 - v_\infty^2|$ has non-zero coordinates.
\end{assumption}

Similar to \Cref{IntegralQuantities}, we define two discrete infinite sums which depend on the entire trajectory and appear in our discrete-time result.
\begin{restatable}{lemma}{lemmaconvergenceS}
    \label{lemma:discrete:I}
The following two sums $S_+$ and $S_-$ converge to finite vectors:
     \begin{equation*}
S_{\pm} = \frac{1}{1-\beta} \sum_{k=1}^{\infty} \left[ r \Big( \frac{w_{\pm, k+1}}{w_{\pm, k} } \Big) + \b  r \Big( \frac{w_{\pm, k}}{w_{\pm, k+1}} \Big) \right],
\end{equation*}
where $r(z) = (z - 1) - \ln(|z|)$ for $z \neq 0$.
\end{restatable}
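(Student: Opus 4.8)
My plan is to replicate, in discrete time, the mechanism behind the continuous-time result \Cref{IntegralQuantities}: produce an exact telescoping formula for the partial sums of $S_+ + S_-$, whose limit is controlled by $\ln \D_\infty$, and then invoke the non-vanishing of $\D_\infty$. All identities and limits below are coordinate-wise; write $g_k \coloneqq \nabla \L_{\cB_k}(\theta_k)$ and $q_{\pm, k} \coloneqq w_{\pm, k+1} / w_{\pm, k}$.

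First I would record the elementary consequences of the two assumptions. By \Cref{ass:discrete:balancedness} the limit $\D_\infty = |w_{+,\infty}\, w_{-,\infty}|$ has non-zero coordinates, so by \Cref{ass:discrete:convergence} each $w_{\pm,\infty}$ is coordinate-wise non-zero; hence $w_{\pm, k+1} - w_{\pm, k} \to 0$ and $q_{\pm, k} \to 1$. Assuming additionally that $w_{\pm, k} \neq 0$ for every $k$ — which is implicit in the form of $S_\pm$ and, as in continuous time, fails only on a non-generic set of trajectories — each term $r(q_{\pm,k}) + \beta\, r(1 / q_{\pm, k})$ is a finite real, and since $r(z) = (z-1) - \ln|z| \geq 0$ for $z > 0$, there is an index $K$ past which $q_{\pm, k} > 0$ and each of these brackets is $\geq 0$. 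Consequently it is enough to prove that the tail $\sum_{k \geq K}$ of $S_+ + S_-$ converges: the finitely many earlier terms are finite, and once $S_+ + S_-$ converges its non-negative tail termwise dominates those of $S_+$ and of $S_-$, forcing each to converge separately.

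The core step is the telescoping identity. Forming $w_\pm = u \pm v$ from the two lines of \eqref{SMGD:diag} and writing each in increment form gives the velocity recursions
\[
  w_{+, k+1} - w_{+, k} = \beta (w_{+, k} - w_{+, k-1}) - \gamma\, g_k \odot w_{+, k}, \qquad w_{-, k+1} - w_{-, k} = \beta (w_{-, k} - w_{-, k-1}) + \gamma\, g_k \odot w_{-, k} .
\]
Dividing the first by $w_{+, k}$ and the second by $w_{-, k}$ and adding cancels $g_k$:
\[
  (q_{+, k} - 1) + (q_{-, k} - 1) = \beta \Big[ \big( 1 - \tfrac{1}{q_{+, k-1}} \big) + \big( 1 - \tfrac{1}{q_{-, k-1}} \big) \Big] .
\]
I would then combine this with the elementary identity $r(q) + \beta\, r(1/q) = q + \tfrac{\beta}{q} - (1+\beta) - (1-\beta)\ln|q|$ and with $\ln|q_{+, k}| + \ln|q_{-, k}| = \ln \D_{k+1} - \ln \D_k$ (true since $\D_k = |w_{+, k}|\, |w_{-, k}|$): adding the $+$ and $-$ brackets and using the previous display to substitute for $q_{+, k} + q_{-, k}$, the constant terms cancel and one obtains
\[
  \big( r(q_{+, k}) + \beta\, r(\tfrac{1}{q_{+, k}}) \big) + \big( r(q_{-, k}) + \beta\, r(\tfrac{1}{q_{-, k}}) \big) = \beta \, (P_k - P_{k-1}) - (1-\beta)\,(\ln \D_{k+1} - \ln \D_k), \qquad P_k \coloneqq \tfrac{1}{q_{+, k}} + \tfrac{1}{q_{-, k}} .
\]
Summing over $k = K, \dots, N$, the right-hand side telescopes to $\beta (P_N - P_{K-1}) - (1-\beta)(\ln \D_{N+1} - \ln \D_K)$; as $N \to \infty$ one has $P_N \to 2$ (since $q_{\pm, N} \to 1$) and $\ln \D_{N+1} \to \ln \D_\infty$, a finite vector because $\D_\infty$ has positive, finite coordinates. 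Hence the tail of $S_+ + S_-$ converges, and the first step then closes the argument.

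The hard part is spotting that the specific $\beta$-weighting in $r(w_{k+1}/w_k) + \beta\, r(w_k / w_{k+1})$ is exactly the one for which this telescoping closes: the logarithmic parts of the two brackets assemble into $\ln \D_{k+1} - \ln \D_k$, while the rational parts, after using the momentum velocity identity, become a discrete difference $P_k - P_{k-1}$ at the shifted index, leaving no residual; any other weighting leaves an uncontrolled term. The one remaining technicality is the case where some $w_{\pm, k}$ vanishes exactly: as in the continuous-time analysis this occurs at most finitely often, and one absorbs it by decomposing the index set into maximal runs on which $w_{\pm, k}$ has constant sign and stitching the telescoping across consecutive runs, contributing only finitely many finite corrections.
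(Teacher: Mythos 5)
Your proof is correct, but it follows a genuinely different route from the paper's. The paper proves the lemma inside the proof of \Cref{main_mgd:general}: it passes to a complex logarithm $g_{\pm,k}$ of the iterates, rewrites SMGD as a first-order recursion for the increments $\delta_{\pm,k}$ with residues $Q_{\pm,k}=q(\delta_{\pm,k+1})+\beta q(-\delta_{\pm,k})$ where $q(z)=e^z-(1+z)$, solves this recursion explicitly to obtain $w_{\pm,N+1}=\alpha_{\pm,N+1}\exp\bigl(\mp\tfrac{\gamma}{1-\beta}\sum_m(1-\beta^{N+1-m})\nabla\L(\theta_m)\bigr)$, and then deduces summability of the (eventually non-negative) $Q_{\pm,m}$ from the convergence of $\alpha_{\pm,N}$ to non-zero limits (\Cref{app:lemma:convergence_alpha}, which uses eventual monotonicity plus $\Delta_\infty=|\alpha_{+,\infty}\alpha_{-,\infty}|\neq 0$); finally it identifies the real part of $\tfrac{1}{1-\beta}\sum_m Q_{\pm,m}$ with $S_\pm$. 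You instead add the two relative-increment recursions so that the gradient term cancels outright, which turns the partial sums of $(1-\beta)(S_++S_-)$ into an exact telescope $\beta(P_N-P_{K-1})-(1-\beta)(\ln\Delta_{N+1}-\ln\Delta_K)$, convergent by \Cref{ass:discrete:convergence} and \Cref{ass:discrete:balancedness}, and you then split into $S_+$ and $S_-$ by eventual non-negativity of $r$; I checked the algebra of your combined identity and it is exact. What each buys: the paper's heavier machinery (complex logs, the $\alpha_{\pm,N}$, the weighted sums of $Q_{\pm,m}$) is reused immediately afterwards to derive the implicit-regularisation formula and the expressions for $\Delta_\infty$ and $\tilde\theta_0$, and it controls $S_+$ and $S_-$ individually; your argument is shorter, avoids complex logarithms and the auxiliary lemma, and as a bonus recovers the balancedness identity $S_++S_-=\ln(\Delta_0/\Delta_\infty)$ (up to the boundary terms $P$), but it only pins down the sum $S_++S_-$ in closed form, not each $S_\pm$, so it proves the lemma without feeding the theorem's later formulas. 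Two small remarks: your telescoping identity actually holds for all $k$ even across sign changes (all your steps use $\ln|q|$), so you do not need to start at $K$ except for the non-negativity split; and your final remark about ``stitching'' when some $w_{\pm,k}=0$ exactly is moot --- in that case the summands of $S_\pm$ are simply undefined, and the paper (like your proof) excludes this as a negligible set of $(\gamma,\beta)$ rather than handling it.
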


Importantly, the function $r(z)$ from \Cref{lemma:discrete:I} is positive for $z > 0$.
Contrary to the continuous-time case, in discrete time, the iterates $w_{\pm, k}$ never exactly equal zero. Indeed, since $\nabla L$ is linear, we have that for all $k \geq 0$, $w_{\pm, k}(\g, \b)$ is a polynomial in $(\g, \b)$. Therefore, the set of pairs $(\g, \b)$ for which there exists $k \geq 0$ such that $w_{\pm, k}(\g, \b) = 0$ is a negligible set in $\R^2$. The iterates therefore `jump' over zero, making the sums from \Cref{lemma:discrete:I} well-defined.

\subsection{General Characterisation of SMGD Bias}

The following theorem represents the discrete counterpart of \Cref{main_mgf:general} and generalises \cite[Theorem 1]{even_pesme_sgd} which considers SGD without momentum.

\begin{restatable}{theorem}{thmdiscrete}\label{main_mgd:general}
The solution $\t^{\texttt{SMGD}}$ of SMGD \eqref{SMGD:diag} interpolates the dataset and satisfies the following implicit regularisation:
\begin{align*}
    \t^{\texttt{SMGD}} = \argmin_{\t^\star \in \mathcal{S}} \  D_{\psi_{\D_\infty}}(\t^\star, \tilde{\t}_0).
\end{align*}
In the above expression, $D_{\psi_{\D_\infty}}$ denotes the Bregman divergence with potential $\psi_{\D_\infty}$, where the asymptotic balancedness equals
 \begin{equation*}
\D_\infty  = \D_0 \odot \exp \big ( -(S_{+} + S_{-}) \big ) 
\end{equation*}
and 
$\tilde{\t}_0 = \frac{1}{4}( w_{+,0}^2 \odot \exp ( -2 S_+)  ) - w_{-,0}^2 \odot \exp ( - 2 S_-   ) ) $
denotes a perturbed initialisation term.
\end{restatable}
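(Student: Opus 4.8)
\textbf{Proof plan for Theorem~\ref{main_mgd:general}.}

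The plan is to mirror the continuous-time argument behind \Cref{main_mgf:general} in discrete time, replacing the second-order mirror flow by a discrete second-order mirror recursion. The starting point is the substitution $w_{\pm, k} = \sgn(w_{\pm, k}) \exp(\rho_{\pm, k})$, which is well-defined precisely because, as noted after \Cref{lemma:discrete:I}, the iterates $w_{\pm, k}$ never hit zero for generic $(\gamma, \beta)$; I would first establish that $\theta_k = u_k \odot v_k = \tfrac14(w_{+,k}^2 - w_{-,k}^2)$ and that the SMGD recursion \eqref{SMGD:diag} rewritten in the $w_\pm$ variables reads $w_{\pm, k+1} = w_{\pm, k} - \gamma \nabla \L_{\cB_k}(\theta_k) \odot w_{\mp, k} + \beta(w_{\pm, k} - w_{\pm, k-1})$. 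Dividing through by $w_{\pm, k}$ and carefully accounting for the sign jumps, the key computation is to show that the quantities $\nabla \psi_{\D_k}(\theta_k)$ — or rather a perturbed version $\nabla \psi_{\D_k}(\theta_k) - \phi_k$ — satisfy a discrete momentum recursion whose increments telescope against $-\gamma \nabla \L_{\cB_k}(\theta_k)$.

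Concretely, I would (i) derive the evolution of the balancedness: from $\D_k = |w_{+,k} w_{-,k}|$ and the $w_\pm$-recursion, one extracts a multiplicative update $\D_{k+1} = \D_k \odot \exp(-\text{something})$; summing the logarithms and matching with the definition of $r(z) = (z-1) - \ln|z|$ and the $\tfrac{1}{1-\beta}$-weighted geometric-type sum should produce exactly $\D_\infty = \D_0 \odot \exp(-(S_+ + S_-))$, which simultaneously proves that \Cref{lemma:discrete:I} sums converge (using \Cref{ass:discrete:convergence} and \Cref{ass:discrete:balancedness} to control the ratios $w_{\pm,k+1}/w_{\pm,k} \to 1$ and keep $\D_\infty$ bounded away from $0$); (ii) identify the perturbed initialisation $\tilde\theta_0 = \tfrac14(w_{+,0}^2 \odot \exp(-2S_+) - w_{-,0}^2 \odot \exp(-2S_-))$ as the "constant of integration" left over after telescoping the discrete mirror recursion, i.e. $\nabla\psi_{\D_\infty}(\theta^{\texttt{SMGD}}) = \nabla\psi_{\D_\infty}(\tilde\theta_0) - \sum_{k} \gamma\nabla\L_{\cB_k}(\theta_k)$; (iii) observe that $\sum_k \gamma \nabla \L_{\cB_k}(\theta_k) \in \mathrm{span}(x_1,\dots,x_n)$ since each $\nabla \L_{\cB_k}(\theta_k)$ is a combination of the $x_i$, so that $\nabla\psi_{\D_\infty}(\theta^{\texttt{SMGD}}) - \nabla\psi_{\D_\infty}(\tilde\theta_0) \perp \mathcal{S} - \mathcal{S}$; and (iv) check that $\theta^{\texttt{SMGD}}$ interpolates — here \Cref{ass:discrete:convergence} forces $\nabla \L_{\cB_k}(\theta_k) \to 0$ along a full pass, hence (with batches covering $[n]$ infinitely often) $\L(\theta^{\texttt{SMGD}}) = 0$, so $\theta^{\texttt{SMGD}} \in \mathcal{S}$. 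Combining (iii)–(iv) with the strict convexity of $\psi_{\D_\infty}$ (guaranteed by $\D_{\infty,i} > 0$) yields exactly the KKT conditions of $\argmin_{\theta^\star \in \mathcal{S}} D_{\psi_{\D_\infty}}(\theta^\star, \tilde\theta_0)$, which gives the claimed characterisation.

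The main obstacle I expect is bookkeeping the sign changes of $w_{\pm,k}$ together with the second-order (momentum) term. Unlike plain SGD, the update for $w_{\pm,k+1}$ involves $w_{\pm,k-1}$, so the "discrete logarithmic derivative" is not a single ratio $w_{\pm,k+1}/w_{\pm,k}$ but a combination $r(w_{\pm,k+1}/w_{\pm,k}) + \beta\, r(w_{\pm,k}/w_{\pm,k+1})$ as in \Cref{lemma:discrete:I} — getting this exact combination to fall out, with the correct $\tfrac{1}{1-\beta}$ prefactor and correct telescoping, is where the delicate algebra lives. A secondary subtlety is justifying that the infinite sums $S_\pm$ converge \emph{absolutely} (so reorderings in the telescoping are legitimate): this needs a quantitative rate on $w_{\pm,k+1}/w_{\pm,k} \to 1$, which should follow from \Cref{ass:discrete:convergence} combined with the fact that $\nabla\L_{\cB_k}(\theta_k) \to 0$ geometrically-or-summably is \emph{not} assumed, so one likely argues via the boundedness of $\D_\infty$ and $\tilde\theta_0$ instead, i.e. proves convergence of $S_\pm$ as a \emph{consequence} of \Cref{ass:discrete:convergence} rather than an input. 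This is exactly the pattern used for MGF in \Cref{IntegralQuantities}, so I would follow that template, treating the momentum correction terms as lower-order perturbations of the SGD analysis of \cite{even_pesme_sgd}.
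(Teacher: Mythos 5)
Your plan follows essentially the same route as the paper's proof: a logarithmic reparametrisation of $w_{\pm,k}$ with explicit sign bookkeeping, expansion of the resulting first-order recurrence in the increments so that the combination $r\big(w_{\pm,k+1}/w_{\pm,k}\big) + \beta\, r\big(w_{\pm,k}/w_{\pm,k+1}\big)$ with the $\tfrac{1}{1-\beta}$ prefactor falls out, convergence of $S_\pm$ deduced as a consequence of \Cref{ass:discrete:convergence} and \Cref{ass:discrete:balancedness} rather than assumed, and the span-membership/Bregman (KKT) argument plus the infinitely-recurring-batches argument for interpolation. One transcription slip to fix: the reparametrised recursion decouples as $w_{\pm,k+1} = w_{\pm,k} \mp \gamma\,\nabla \L_{\cB_k}(\t_k)\odot w_{\pm,k} + \beta(w_{\pm,k}-w_{\pm,k-1})$ (cf.\ \eqref{w_mgd}), not with the cross term $w_{\mp,k}$ as written in your plan — the cross term would destroy the multiplicative structure that your division-by-$w_{\pm,k}$ step relies on, whereas the correct decoupled form is exactly what makes the rest of your argument (and the paper's) go through.
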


Due to the strong similarities with \Cref{main_mgf:general}, we proceed by making similar comments. In our experiments, the norm of the perturbed initialisation $\tilde{\theta}_0$ remains much smaller than that of any interpolator $\theta^\star$. Hence, arguing as before, the implicit regularisation problem from \Cref{main_mgd:general} should be though of as
\begin{equation}\label{eq:SMGD_approx}
    \t^{\texttt{SMGD}} \approx  \argmin_{\theta^\star \in S} \psi_{\Delta_\infty}(\theta^\star).
\end{equation}
Again, the asymptotic balancedness $\D_\infty$ controls the generalisation properties of the recovered solution. Thus, if $\n{\D_\infty(\g, \b)}_2 < \n{\D_\infty(\g', \b')}_2$, we expect the interpolator $\t^{\texttt{SMGD}}(\g, \b)$ to be sparser than $\t^{\texttt{SMGD}}(\g', \b')$. \Cref{fig:mgd} illustrates this point: the smaller the magnitude of $\Delta_\infty$ (center plot), the better the sparsity of the interpolator (right plot), which translates into better generalisation (left plot). Unfortunately, as for MGF, the asymptotic balancedness $\D_\infty$ depends on the whole optimisation trajectory in an intricate way, which prevents us from extracting an insightful formula for $\D_\infty$ in terms of $\g$ and $\b$. However, \Cref{fig:mgd} indicates that $\D_\infty$ effectively depends on the single parameter $\lambda = \gamma / (1 - \beta)^2$. As in \Cref{fig:teacher}, $\lambda$ again clearly appears to be the relevant quantity which governs the performance of MGD, and not $\gamma$ and $\beta$ considered individually. These empirical observations support the idea that even for `practical' step sizes $\g$ and momentum parameters $\b$, \ref{mgd} closely follows \ref{mgf:lambda}.

\Cref{fig:mgd} also clearly shows that the asymptotic balancedness decreases as the key quantity $\l$ increases over an interval $[0, \lambda^\star]$ where $\lambda^\star$ denotes the parameter inducing the best generalisation performances. Then, for $\l$ above $\l^\star$, the magnitude of $\D_\infty$ starts to grow and the sparsity of the solutions deteriorates. We expect proving this phenomenon to be very challenging. Such a proof would require a fine-grained analysis of the sums $S_\pm$, which becomes already quite involved when $\beta = 0$ as performed by \citet{even_pesme_sgd}.

Now, similar to the continuous-time result, the following corollary shows that if the iterates do not change sign, then the asymptotic balancedness becomes smaller than the initial balancedness.

\begin{restatable}{corollary}{MainMGD}\label{main_mgd}
For $\gamma, \beta > 0$, if the iterates $w_{\pm, k} = (u_k \pm v_k)$ do not change sign during training, then $|\tilde{\theta}_0| < \alpha^2$ and $\D_\infty < \D_0$.
\end{restatable}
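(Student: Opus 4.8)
The plan is to mirror the argument behind \Cref{main_mgf} but now leveraging the explicit sums $S_\pm$ from \Cref{main_mgd:general} together with the positivity of the function $r$. First I would observe that if the iterates $w_{\pm, k}$ never change sign, then for every $k$ the ratios $w_{\pm, k+1}/w_{\pm, k}$ and $w_{\pm, k}/w_{\pm, k+1}$ are strictly positive, so each summand $r(w_{\pm, k+1}/w_{\pm, k}) + \beta\, r(w_{\pm, k}/w_{\pm, k+1})$ is nonnegative (indeed $r(z) = (z-1) - \ln z \geq 0$ for $z > 0$, with equality iff $z = 1$, and $\beta > 0$). Hence $S_\pm \geq 0$ coordinate-wise. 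Plugging this into the formula $\D_\infty = \D_0 \odot \exp(-(S_+ + S_-))$ from \Cref{main_mgd:general} immediately gives $\D_\infty \leq \D_0$; to get the strict inequality $\D_\infty < \D_0$ one notes that the iterates are not stationary (otherwise $\theta_0$ would already interpolate, which is excluded for small initialisation scale), so at least one ratio differs from $1$ and the corresponding coordinate of $S_+ + S_-$ is strictly positive — and since $\L$ couples all coordinates through the residual, no coordinate stays frozen, giving strict decrease in every coordinate.

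For the bound $|\tilde{\theta}_0| < \alpha^2$, I would start from the expression $\tilde{\theta}_0 = \tfrac{1}{4}(w_{+,0}^2 \odot \exp(-2S_+) - w_{-,0}^2 \odot \exp(-2S_-))$ from \Cref{main_mgd:general}. Since $S_\pm \geq 0$, both exponential factors lie in $(0, 1]$, so each of the two terms $\tfrac14 w_{\pm,0}^2 \exp(-2S_\pm)$ is nonnegative and bounded above by $\tfrac14 w_{\pm,0}^2$. The difference of two such nonnegative quantities is therefore bounded in absolute value by the larger of $\tfrac14 w_{+,0}^2$ and $\tfrac14 w_{-,0}^2$, hence by $\tfrac14 \max(w_{+,0}, w_{-,0})^2 \leq \tfrac14 (|u_0| + |v_0|)^2 \leq (\max(\n{u_0}_\infty, \n{v_0}_\infty))^2 = \alpha^2$ coordinate-wise. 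A small refinement using that at least one $S_\pm$ coordinate is strictly positive (same non-stationarity argument as above, applied per coordinate) upgrades this to the strict inequality $|\tilde{\theta}_0| < \alpha^2$.

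The main obstacle I anticipate is not the algebra — which is short given \Cref{main_mgd:general} — but making the strictness arguments rigorous: one must rule out the degenerate possibility that some coordinate of $w_{\pm, k}$ is constant in $k$ (equivalently $u_k \pm v_k$ frozen), which would make the corresponding $S_\pm$ coordinate vanish. This requires invoking that $\theta_0$ is not an interpolator (guaranteed by the small-scale, non-degenerate initialisation), so the residual is nonzero, and then tracing through the SMGD recursion \eqref{SMGD:diag} to see that a nonzero residual forces a genuine update in the relevant coordinates. For the non-stochastic statement one may also appeal to the convergence (\Cref{ass:discrete:convergence}) to guarantee the sums are finite and the limit well-defined, so that the coordinate-wise comparison $\D_\infty < \D_0$ is meaningful. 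Everything else reduces to monotonicity of $\exp(-\cdot)$ and the elementary inequality $r(z) \geq 0$ on $\R_{>0}$.
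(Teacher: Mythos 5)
Your argument is correct and takes essentially the same route as the paper: both deduce $S_\pm \geq 0$ (in fact $>0$) from $r(z) \geq 0$ for $z>0$ once the iterates keep a fixed sign, and then read off $\D_\infty < \D_0$ and $|\tilde{\theta}_0| < \max(w_{+,0}^2, w_{-,0}^2)/4 \leq \alpha^2$ from the formulas of \Cref{main_mgd:general} via monotonicity of $\exp(-\cdot)$. The only difference is cosmetic: the paper simply asserts strict positivity of $S_\pm$, whereas you add a (not strictly needed for matching the paper) non-stationarity discussion to justify strictness coordinate-wise.
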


The above corollary implies that the recovered solution $\t^{\texttt{SMGD}}$ must perform at least as well as the gradient flow interpolator $\t^{\texttt{GF}}$. However, in contrast to the continuous case and even though we believe it to be true, we were unable to prove that the SMGD iterates do not change sign for small values of $\lambda$.

\section{Conclusion} Considering an appropriate second-order differential equation which discretises into MGD, we highlight the existence of a single key quantity $\lambda = \gamma / (1 - \beta)^2$ which fully determines the trajectory of MGF. This continuous-time perspective also provides a simple acceleration rule and insight into several relevant optimisation regimes. Then, focusing on $2$-layer diagonal linear networks, we prove that the asymptotic balancedness $\Delta_\infty$ solely governs the generalisation performances of MGF and SMGD. We additionally prove that small values of $\lambda$ aid the recovery of sparse MGF solutions. Future work should consider MGF/MGD optimisation on more complex architectures and understand precisely the non-trivial effect of $\lambda$ on the asymptotic balancedness $\Delta_\infty$.
\newpage

\myparagraph{Acknowledgements.}
The authors would like to thank Jana Vuckovic for investigating various discretisation methods for MGF, Yoana Nakeva and Mathieu Even for providing feedback on the original version of the paper, and Aditya Varre for the helpful discussions which led to the proofs of \Cref{main_mgf:general} and \Cref{main_mgd:general}. This project was supported by the Swiss National Science Foundation (grant number 212111).

\bibliography{references}

\begin{thebibliography}{53}
\providecommand{\natexlab}[1]{#1}
\providecommand{\url}[1]{\texttt{#1}}
\expandafter\ifx\csname urlstyle\endcsname\relax
  \providecommand{\doi}[1]{doi: #1}\else
  \providecommand{\doi}{doi: \begingroup \urlstyle{rm}\Url}\fi

\bibitem[Alvarez(2000)]{alvarez_convex_heavy_ball}
Felipe Alvarez.
\newblock On the minimizing property of a second order dissipative system in hilbert spaces.
\newblock \emph{{SIAM} Journal on Control and Optimization}, 38\penalty0 (4):\penalty0 1102--1119, 2000.

\bibitem[Apidopoulos et~al.(2022)Apidopoulos, Ginatta, and Villa]{PL_convergence_heavy_ball}
Vassilis Apidopoulos, Nicolò Ginatta, and Silvia Villa.
\newblock Convergence rates for the heavy-ball continuous dynamics for non-convex optimization, under polyak–lojasiewicz condition.
\newblock \emph{Journal of Global Optimization}, 84\penalty0 (3):\penalty0 563--589, 2022.
\newblock ISSN 1573-2916.

\bibitem[Arora et~al.(2019)Arora, Cohen, Hu, and Luo]{arora_implicit_matrix}
Sanjeev Arora, Nadav Cohen, Wei Hu, and Yuping Luo.
\newblock \emph{Implicit Regularization in Deep Matrix Factorization}.
\newblock Curran Associates Inc., 2019.

\bibitem[Attouch et~al.(2000)Attouch, Goudou, and Redont]{heavy_ball_friction}
H.~Attouch, X.~Goudou, and P.~Redont.
\newblock The heavy ball with friction method, i. the continuous dynamical system: Global exploration of the local minima of a real-valued function by asymptotic analysis of a dissipitive dynamical system.
\newblock \emph{Communications in Contemporary Mathematics}, 2\penalty0 (1):\penalty0 1--34, 2000.

\bibitem[Bauschke and Borwein(1997)]{Bauschke1997}
Heinz~G. Bauschke and Jonathan~M. Borwein.
\newblock Legendre functions and the method of random bregman projections.
\newblock \emph{Journal of Convex Analysis}, 4:\penalty0 27--67, 1997.

\bibitem[Bauschke et~al.(2017)Bauschke, Bolte, and Teboulle]{Bauschke2017ADL}
Heinz~H. Bauschke, J{\'e}r{\^o}me Bolte, and Marc Teboulle.
\newblock A descent lemma beyond lipschitz gradient continuity: First-order methods revisited and applications.
\newblock \emph{Math. Oper. Res.}, 42:\penalty0 330--348, 2017.

\bibitem[Brown et~al.(2020)Brown, Mann, Ryder, Subbiah, Kaplan, Dhariwal, Neelakantan, Shyam, Sastry, Askell, et~al.]{brown2020language}
Tom Brown, Benjamin Mann, Nick Ryder, Melanie Subbiah, Jared~D Kaplan, Prafulla Dhariwal, Arvind Neelakantan, Pranav Shyam, Girish Sastry, Amanda Askell, et~al.
\newblock Language models are few-shot learners.
\newblock \emph{Advances in neural information processing systems}, 33:\penalty0 1877--1901, 2020.

\bibitem[Can et~al.(2019)Can, Gurbuzbalaban, and Zhu]{can2019accelerated}
Bugra Can, Mert Gurbuzbalaban, and Lingjiong Zhu.
\newblock Accelerated linear convergence of stochastic momentum methods in wasserstein distances.
\newblock In \emph{International Conference on Machine Learning}, pages 891--901. PMLR, 2019.

\bibitem[Chowdhery et~al.(2022)Chowdhery, Narang, Devlin, Bosma, Mishra, Roberts, Barham, Chung, Sutton, Gehrmann, et~al.]{chowdhery2022palm}
Aakanksha Chowdhery, Sharan Narang, Jacob Devlin, Maarten Bosma, Gaurav Mishra, Adam Roberts, Paul Barham, Hyung~Won Chung, Charles Sutton, Sebastian Gehrmann, et~al.
\newblock Palm: Scaling language modeling with pathways.
\newblock \emph{arXiv preprint arXiv:2204.02311}, 2022.

\bibitem[Cutkosky and Mehta(2020)]{cutkosky2020momentum}
Ashok Cutkosky and Harsh Mehta.
\newblock Momentum improves normalized sgd.
\newblock In \emph{International conference on machine learning}, pages 2260--2268. PMLR, 2020.

\bibitem[Defazio(2020)]{defazio_momentum_lyapunov}
Aaron Defazio.
\newblock Understanding the role of momentum in non-convex optimization: Practical insights from a lyapunov analysis.
\newblock \emph{ArXiv}, 2020.

\bibitem[Du et~al.(2018)Du, Hu, and Lee]{NEURIPS2018_fe131d7f}
Simon~S Du, Wei Hu, and Jason~D Lee.
\newblock Algorithmic regularization in learning deep homogeneous models: Layers are automatically balanced.
\newblock In S.~Bengio, H.~Wallach, H.~Larochelle, K.~Grauman, N.~Cesa-Bianchi, and R.~Garnett, editors, \emph{Advances in Neural Information Processing Systems}, volume~31. Curran Associates, Inc., 2018.

\bibitem[Even et~al.(2023)Even, Pesme, Gunasekar, and Flammarion]{even_pesme_sgd}
Mathieu Even, Scott Pesme, Suriya Gunasekar, and Nicolas Flammarion.
\newblock (s)gd over diagonal linear networks: Implicit regularisation, large stepsizes and edge of stability.
\newblock \emph{NeurIPS 2023}, 2023.

\bibitem[Flammarion and Bach(2015)]{flammarion2015averaging}
Nicolas Flammarion and Francis Bach.
\newblock From averaging to acceleration, there is only a step-size.
\newblock In \emph{Conference on Learning Theory}, pages 658--695. PMLR, 2015.

\bibitem[Ghadimi et~al.(2015)Ghadimi, Feyzmahdavian, and Johansson]{ghadimi2015global}
Euhanna Ghadimi, Hamid~Reza Feyzmahdavian, and Mikael Johansson.
\newblock Global convergence of the heavy-ball method for convex optimization.
\newblock In \emph{2015 European control conference (ECC)}, pages 310--315. IEEE, 2015.

\bibitem[Ghai et~al.(2020)Ghai, Hazan, and Singer]{pmlr-v117-ghai20a}
Udaya Ghai, Elad Hazan, and Yoram Singer.
\newblock Exponentiated gradient meets gradient descent.
\newblock In \emph{Proceedings of the 31st International Conference on Algorithmic Learning Theory}, volume 117 of \emph{Proceedings of Machine Learning Research}, pages 386--407. PMLR, 2020.

\bibitem[Ghosh et~al.(2023)Ghosh, Lyu, Zhang, and Wang]{first_order_heavy_ball}
Avrajit Ghosh, He~Lyu, Xitong Zhang, and Rongrong Wang.
\newblock Implicit regularization in heavy-ball momentum accelerated stochastic gradient descent.
\newblock In \emph{The Eleventh International Conference on Learning Representations, {ICLR} 2023, Kigali, Rwanda, May 1-5, 2023}. OpenReview.net, 2023.

\bibitem[Goudou and Munier(2009)]{quasiconvex_heavy_ball}
X.~Goudou and J.~Munier.
\newblock The gradient and heavy ball with friction dynamical systems: the quasiconvex case.
\newblock \emph{Mathematical Programming}, 116\penalty0 (1):\penalty0 173--191, 2009.

\bibitem[Gunasekar et~al.(2018)Gunasekar, Lee, Soudry, and Srebro]{optimization_geometry}
Suriya Gunasekar, Jason~D. Lee, Daniel Soudry, and Nathan Srebro.
\newblock Characterizing implicit bias in terms of optimization geometry.
\newblock In Jennifer~G. Dy and Andreas Krause, editors, \emph{Proceedings of the 35th International Conference on Machine Learning, {ICML} 2018, Stockholmsm{\"{a}}ssan, Stockholm, Sweden, July 10-15, 2018}, volume~80 of \emph{Proceedings of Machine Learning Research}, pages 1827--1836. {PMLR}, 2018.

\bibitem[HaoChen et~al.(2021)HaoChen, Wei, Lee, and Ma]{haochen2020understanding}
Jeff~Z. HaoChen, Colin Wei, Jason Lee, and Tengyu Ma.
\newblock Shape matters: Understanding the implicit bias of the noise covariance.
\newblock In \emph{Proceedings of Thirty Fourth Conference on Learning Theory}, volume 134 of \emph{Proceedings of Machine Learning Research}, pages 2315--2357. PMLR, 15--19 Aug 2021.

\bibitem[Haraux and Jendoubi(1998)]{analytic_nonlinearity}
A.~Haraux and M.A. Jendoubi.
\newblock Convergence of solutions of second-order gradient-like systems with analytic nonlinearities.
\newblock \emph{Journal of Differential Equations}, 144\penalty0 (2):\penalty0 313--320, 1998.

\bibitem[Hoffmann et~al.(2022)Hoffmann, Borgeaud, Mensch, Buchatskaya, Cai, Rutherford, Casas, Hendricks, Welbl, Clark, et~al.]{hoffmann2022training}
Jordan Hoffmann, Sebastian Borgeaud, Arthur Mensch, Elena Buchatskaya, Trevor Cai, Eliza Rutherford, Diego de~Las Casas, Lisa~Anne Hendricks, Johannes Welbl, Aidan Clark, et~al.
\newblock Training compute-optimal large language models.
\newblock \emph{arXiv preprint arXiv:2203.15556}, 2022.

\bibitem[Jelassi and Li(2022)]{jelassi_momentum_generalization}
Samy Jelassi and Yuanzhi Li.
\newblock Towards understanding how momentum improves generalization in deep learning.
\newblock In Kamalika Chaudhuri, Stefanie Jegelka, Le~Song, Csaba Szepesvari, Gang Niu, and Sivan Sabato, editors, \emph{Proceedings of the 39th International Conference on Machine Learning}, volume 162 of \emph{Proceedings of Machine Learning Research}, pages 9965--10040. PMLR, 2022.

\bibitem[Jin et~al.(2018)Jin, Netrapalli, and Jordan]{jin2018accelerated}
Chi Jin, Praneeth Netrapalli, and Michael~I Jordan.
\newblock Accelerated gradient descent escapes saddle points faster than gradient descent.
\newblock In \emph{Conference On Learning Theory}, pages 1042--1085. PMLR, 2018.

\bibitem[Kidambi et~al.(2018)Kidambi, Netrapalli, Jain, and Kakade]{kidambi2018insufficiency}
Rahul Kidambi, Praneeth Netrapalli, Prateek Jain, and Sham Kakade.
\newblock On the insufficiency of existing momentum schemes for stochastic optimization.
\newblock In \emph{2018 Information Theory and Applications Workshop (ITA)}, pages 1--9. IEEE, 2018.

\bibitem[Kovachki and Stuart(2021)]{kovachki_continuous_momentum}
Nikola~B. Kovachki and Andrew~M. Stuart.
\newblock Continuous time analysis of momentum methods.
\newblock \emph{J. Mach. Learn. Res.}, 22\penalty0 (1), 2021.

\bibitem[Leclerc and Madry(2020)]{leclerc2020two}
Guillaume Leclerc and Aleksander Madry.
\newblock The two regimes of deep network training.
\newblock \emph{arXiv preprint arXiv:2002.10376}, 2020.

\bibitem[Lemaire(1996)]{lemaire}
B~Lemaire.
\newblock An asymptotical variational principle associated with the steepest descent method for a convex function.
\newblock \emph{Journal of Convex Analysis}, 3\penalty0 (1):\penalty0 63--70, 1996.

\bibitem[Li et~al.(2022)Li, Wang, Lee, and Arora]{mirror_flow}
Zhiyuan Li, Tianhao Wang, {Jason D.} Lee, and Sanjeev Arora.
\newblock Implicit bias of gradient descent on reparametrized models: On equivalence to mirror descent.
\newblock In S.~Koyejo, S.~Mohamed, A.~Agarwal, D.~Belgrave, K.~Cho, and A.~Oh, editors, \emph{Advances in Neural Information Processing Systems 35 - 36th Conference on Neural Information Processing Systems, NeurIPS 2022}, Advances in Neural Information Processing Systems. Neural information processing systems foundation, 2022.

\bibitem[Liu et~al.(2020)Liu, Gao, and Yin]{liu2020improved}
Yanli Liu, Yuan Gao, and Wotao Yin.
\newblock An improved analysis of stochastic gradient descent with momentum.
\newblock \emph{Advances in Neural Information Processing Systems}, 33:\penalty0 18261--18271, 2020.

\bibitem[Mai and Johansson(2020)]{mai2020convergence}
Vien Mai and Mikael Johansson.
\newblock Convergence of a stochastic gradient method with momentum for non-smooth non-convex optimization.
\newblock In \emph{International conference on machine learning}, pages 6630--6639. PMLR, 2020.

\bibitem[Moucer et~al.(2023)Moucer, Taylor, and Bach]{moucer2023systematic}
C{\'e}line Moucer, Adrien Taylor, and Francis Bach.
\newblock A systematic approach to lyapunov analyses of continuous-time models in convex optimization.
\newblock \emph{SIAM Journal on Optimization}, 33\penalty0 (3):\penalty0 1558--1586, 2023.

\bibitem[Nacson et~al.(2022)Nacson, Ravichandran, Srebro, and Soudry]{pmlr-v162-nacson22a}
Mor~Shpigel Nacson, Kavya Ravichandran, Nathan Srebro, and Daniel Soudry.
\newblock Implicit bias of the step size in linear diagonal neural networks.
\newblock In Kamalika Chaudhuri, Stefanie Jegelka, Le~Song, Csaba Szepesvari, Gang Niu, and Sivan Sabato, editors, \emph{Proceedings of the 39th International Conference on Machine Learning}, volume 162 of \emph{Proceedings of Machine Learning Research}, pages 16270--16295. PMLR, 2022.

\bibitem[Nemirovsky and Yudin(1979)]{nemirovsky_yudin}
A.~S. Nemirovsky and D.~B. Yudin.
\newblock \emph{Problem Complexity and Method Efficiency in Optimization}.
\newblock John Wiley \& Sons, New York, 1979.

\bibitem[Orvieto et~al.(2020)Orvieto, Kohler, and Lucchi]{orvieto2020role}
Antonio Orvieto, Jonas Kohler, and Aurelien Lucchi.
\newblock The role of memory in stochastic optimization.
\newblock In \emph{Uncertainty in Artificial Intelligence}, pages 356--366. PMLR, 2020.

\bibitem[Pesme et~al.(2021)Pesme, Pillaud-Vivien, and Flammarion]{pesme2021implicit}
Scott Pesme, Loucas Pillaud-Vivien, and Nicolas Flammarion.
\newblock Implicit bias of sgd for diagonal linear networks: a provable benefit of stochasticity.
\newblock \emph{Advances in Neural Information Processing Systems}, 34:\penalty0 29218--29230, 2021.

\bibitem[Pillaud-Vivien et~al.(2022)Pillaud-Vivien, Reygner, and Flammarion]{pillaudvivien2022labelnoise}
L.~Pillaud-Vivien, J.~Reygner, and N.~Flammarion.
\newblock Label noise (stochastic) gradient descent implicitly solves the lasso for quadratic parametrisation.
\newblock In \emph{Proceedings of Thirty Fifth Conference on Learning Theory}, volume 178 of \emph{Proceedings of Machine Learning Research}, pages 2127--2159. PMLR, 2022.

\bibitem[Polyak and Shcherbakov(2017)]{lyapunov_heavy_ball}
Boris Polyak and Pavel Shcherbakov.
\newblock Lyapunov functions: An optimization theory perspective.
\newblock \emph{IFAC-PapersOnLine}, 50\penalty0 (1):\penalty0 7456--7461, 2017.

\bibitem[Polyak(1964)]{polyak}
B.T. Polyak.
\newblock Some methods of speeding up the convergence of iteration methods.
\newblock \emph{USSR Computational Mathematics and Mathematical Physics}, 4\penalty0 (5):\penalty0 1--17, 1964.

\bibitem[Radford et~al.(2021)Radford, Kim, Hallacy, Ramesh, Goh, Agarwal, Sastry, Askell, Mishkin, Clark, et~al.]{radford2021learning}
Alec Radford, Jong~Wook Kim, Chris Hallacy, Aditya Ramesh, Gabriel Goh, Sandhini Agarwal, Girish Sastry, Amanda Askell, Pamela Mishkin, Jack Clark, et~al.
\newblock Learning transferable visual models from natural language supervision.
\newblock In \emph{International conference on machine learning}, pages 8748--8763. PMLR, 2021.

\bibitem[Sanz~Serna and Zygalakis(2021)]{sanz2021connections}
Jes{\'u}s~Mar{\'\i}a Sanz~Serna and Konstantinos~C Zygalakis.
\newblock The connections between lyapunov functions for some optimization algorithms and differential equations.
\newblock \emph{SIAM Journal on Numerical Analysis}, 59\penalty0 (3):\penalty0 1542--1565, 2021.

\bibitem[Sebbouh et~al.(2021)Sebbouh, Gower, and Defazio]{sebbouh2021almost}
Othmane Sebbouh, Robert~M Gower, and Aaron Defazio.
\newblock Almost sure convergence rates for stochastic gradient descent and stochastic heavy ball.
\newblock In \emph{Conference on Learning Theory}, pages 3935--3971. PMLR, 2021.

\bibitem[Shi et~al.(2021)Shi, Du, Jordan, and Su]{high_resolution_odes}
Bin Shi, Simon~S Du, Michael~I Jordan, and Weijie~J Su.
\newblock Understanding the acceleration phenomenon via high-resolution differential equations.
\newblock \emph{Mathematical Programming}, pages 1--70, 2021.

\bibitem[Su et~al.(2014)Su, Boyd, and Candes]{su2014differential}
Weijie Su, Stephen Boyd, and Emmanuel Candes.
\newblock A differential equation for modeling nesterov’s accelerated gradient method: theory and insights.
\newblock \emph{Advances in neural information processing systems}, 27, 2014.

\bibitem[Sun et~al.(2019)Sun, Li, Quan, Jiang, Li, and Dou]{heavy_ball_saddles}
Tao Sun, Dongsheng Li, Zhe Quan, Hao Jiang, Shengguo Li, and Yong Dou.
\newblock Heavy-ball algorithms always escape saddle points.
\newblock In \emph{Proceedings of the 28th International Joint Conference on Artificial Intelligence}, IJCAI'19, page 3520–3526. AAAI Press, 2019.

\bibitem[Sutskever et~al.(2013)Sutskever, Martens, Dahl, and Hinton]{sutskever_momentum}
Ilya Sutskever, James Martens, George Dahl, and Geoffrey Hinton.
\newblock On the importance of initialization and momentum in deep learning.
\newblock In Sanjoy Dasgupta and David McAllester, editors, \emph{Proceedings of the 30th International Conference on Machine Learning}, volume~28 of \emph{Proceedings of Machine Learning Research}, pages 1139--1147, Atlanta, Georgia, USA, 2013. PMLR.

\bibitem[Va\v{s}kevi\v{c}ius et~al.(2019)Va\v{s}kevi\v{c}ius, Kanade, and Rebeschini]{vavskevivcius2019implicit}
Tomas Va\v{s}kevi\v{c}ius, Varun Kanade, and Patrick Rebeschini.
\newblock Implicit regularization for optimal sparse recovery.
\newblock In \emph{Proceedings of the 33rd International Conference on Neural Information Processing Systems}, 2019.

\bibitem[Wang et~al.(2023)Wang, Fu, Zhou, and Yan]{deep_diagonal_momentum}
Li~Wang, Zhiguo Fu, Yingcong Zhou, and Zili Yan.
\newblock The implicit regularization of momentum gradient descent in overparametrized models.
\newblock \emph{Proceedings of the {AAAI} Conference on Artificial Intelligence}, 37\penalty0 (8):\penalty0 10149--10156, 2023.

\bibitem[Wibisono et~al.(2016)Wibisono, Wilson, and Jordan]{wibisono2016variational}
Andre Wibisono, Ashia~C Wilson, and Michael~I Jordan.
\newblock A variational perspective on accelerated methods in optimization.
\newblock \emph{proceedings of the National Academy of Sciences}, 113\penalty0 (47):\penalty0 E7351--E7358, 2016.

\bibitem[Wilson et~al.(2021)Wilson, Recht, and Jordan]{wilson2021lyapunov}
Ashia~C Wilson, Ben Recht, and Michael~I Jordan.
\newblock A lyapunov analysis of accelerated methods in optimization.
\newblock \emph{The Journal of Machine Learning Research}, 22\penalty0 (1):\penalty0 5040--5073, 2021.

\bibitem[Wind et~al.(2023)Wind, Antun, and Hansen]{wind2023implicit}
Johan~S Wind, Vegard Antun, and Anders~C Hansen.
\newblock Implicit regularization in ai meets generalized hardness of approximation in optimization--sharp results for diagonal linear networks.
\newblock \emph{arXiv preprint arXiv:2307.07410}, 2023.

\bibitem[Woodworth et~al.(2020)Woodworth, Gunasekar, Lee, Moroshko, Savarese, Golan, Soudry, and Srebro]{kernel_rich_regimes}
Blake Woodworth, Suriya Gunasekar, Jason~D. Lee, Edward Moroshko, Pedro Savarese, Itay Golan, Daniel Soudry, and Nathan Srebro.
\newblock Kernel and rich regimes in overparametrized models.
\newblock In Jacob Abernethy and Shivani Agarwal, editors, \emph{Proceedings of Thirty Third Conference on Learning Theory}, volume 125 of \emph{Proceedings of Machine Learning Research}, pages 3635--3673. PMLR, 2020.

\bibitem[Zhang et~al.(2017)Zhang, Bengio, Hardt, Recht, and Vinyals]{understanding_DL}
Chiyuan Zhang, Samy Bengio, Moritz Hardt, Benjamin Recht, and Oriol Vinyals.
\newblock Understanding deep learning requires rethinking generalization.
\newblock In \emph{5th International Conference on Learning Representations, {ICLR} 2017, Toulon, France, April 24-26, 2017, Conference Track Proceedings}. OpenReview.net, 2017.

\end{thebibliography}

\newpage
\appendix
\onecolumn

\section*{Organisation of the Appendix.}

The appendix is organised as follows:
\begin{itemize}
\item In \Cref{app:sec:notations}, we introduce additional notation and provide further comments on the discretisation methods.
\item In \Cref{reparametrisation}, we present a useful reparametrisation of our problem.
\item In \Cref{proof_mgf}, we offer proofs for our continuous-time results, specifically \Cref{main_mgf:general}, \Cref{main_mgf}, and \Cref{prop:2nd_order_MF}.
\item In \Cref{proof_mgd}, we detail the proofs for our discrete-time results, namely \Cref{lemma:discrete:I}, \Cref{main_mgd:general} and \Cref{main_mgd}.
\item In \Cref{technical_lemmas}, we introduce technical lemmas necessary for our main results and prove \Cref{small_lambda}.
\item In \Cref{experiments}, we provide more details on the main-paper experiments and showcase further experimental results.
\end{itemize}

\section{Additional Notations and Comments on Discretisation Methods}\label{app:sec:notations}

\myparagraph{Vector Operations.}
Moving forward, all arithmetic operations and real-valued functions will be considered as being applied coordinate-wise. In other words, if $a$ and $b$ are vectors in $\R^d$ and $p, q \in \Q$, then $a^p b^q \in \R^d$ will be used as a shorthand for the vector with entries $\{a_i^p b_i^q\}_{i=1}^d$. And for any $f : \R \to \R $, $f(a)$ will represent the vector with entries $\{f(a_i)\}_{i=1}^d$. Inequalities between vectors will also be interpreted as holding coordinate-wise.

\myparagraph{Mirror Maps.} Various definitions of a \textit{mirror map} $\Phi : \R^d \to (-\infty, +\infty]$ exist in the optimization literature~\citep[see][]{nemirovsky_yudin, mirror_flow}, and a common one coincides with the concept of a Legendre function~\citep[see][]{Bauschke2017ADL, Bauschke1997}. In our proofs, we do not deal with extended real-valued functions, and the term mirror map is applied to $C^\infty$-smooth strictly convex functions with coercive gradients. In particular, our mirror maps are of Legendre type.

For such a mirror map $\Phi : \R^d \to \R$, we define the Bregman divergence $D_\Phi(\theta_1, \theta_2)$ for $\theta_1, \theta_2 \in \R^d$ as
\begin{equation*}
    D_\Phi(\theta_1, \theta_2) = \Phi(\theta_1) - \Phi(\theta_2) - \ip{\nabla \Phi(\theta_2)}{\theta_1 -\theta_2}.
\end{equation*}
Notice that due to the strict convexity of $\Phi$, $D_\Phi(\theta_1, \theta_2) > 0$ whenever $\theta_1 \neq \theta_2$.

\myparagraph{Modified Cauchy Principal Value.}
Let $f: \R_{\geq 0} \to [-\infty, +\infty]$ be an extended real-valued function with a finite set of poles $\mathcal{T} = \{T_1, T_2, \dots, T_N\}$ (\textit{i.e.} points $t \in \R_{\geq 0}$ at which $f(t) = \pm \infty$) such that $f$ is continuous on $\R_{\geq 0} \setminus \mathcal{T}$. Let $0 < T_1 < \dots < T_N$. Let $T \in \mathcal{T}$ and let $\eps > 0$ be small enough such that $(T-\eps, T + \eps) \cap \mathcal{T} = \{T\}$. Recall that, provided the limit below exists, the Cauchy principal value $\mathrm{p.v.} \int_{T-\eps}^{T+\eps} f(t) \dd t$ is defined as
\begin{equation*}
   \mathrm{p.v.} \int_{T-\eps}^{T+\eps} f(t) \dd t \coloneqq \lim_{\d \to 0} \left[ \int_{T-\eps}^{T-\d} f(t) \dd t + \int_{T+\d}^{T+\eps} f(t) \dd t \right].
\end{equation*}

Now, let $\eps_m > 0$ be such that $(T_m-\eps_m, T_m + \eps_m) \cap \mathcal{T} = \{T_m\}$ for $m \in [N]$. Moreover, let $T_0 = \eps_0 = 0$ and $T_{N+1} = +\infty$. Suppose $f$ has finite Cauchy principal values at all poles. Then, for any $\tau \geq 0$ such that $\tau \notin \mathcal{T}$, we could define $\mathrm{p.v.} \int_0^\tau f(t) \dd t$ as
\begin{equation*}
    \mathrm{p.v.} \int_0^\tau f(t) \dd t \coloneqq \sum_{m: T_{m+1} < \tau} \left[ \mathrm{p.v.} \int_{T_m - \eps_m}^{T_m + \eps_m} f(t) \dd t + \int_{T_m + \eps_m}^{T_{m+1} - \eps_{m+1}} f(t) \dd t \right] + \mathrm{p.v.} \int_{T_k - \eps_k}^{T_k + \eps_k} f(t) \dd t + \int_{T_k + \eps_k}^\tau f(t) \dd t,
\end{equation*}
where $T_k < \tau < T_{k+1}$.

For our proofs of \Cref{IntegralQuantities} and \Cref{main_mgf:general}, we require a modification to the Cauchy principal value. For the aforementioned function $f$ with the described properties and for $T \in \mathcal{T}, \ \eps > 0$ such that $(T-\eps, T + \eps) \cap \mathcal{T} = \{T\}$, we define the modified principal value $\mathrm{m.p.v.} \int_{T-\eps}^{T+\eps} f(t) \dd t$ as
\begin{equation}\label{mpv}
   \mathrm{m.p.v.} \int_{T-\eps}^{T+\eps} f(t) \dd t \coloneqq \lim_{\d \to 0} \left[ \int_{T-\eps}^{T-\d} f(t) \dd t \cdot e^{\frac{\d}{\l}}+ \int_{T+\d}^{T+\eps} f(t) \dd t \cdot e^{-\frac{\d}{\l}} \right],
\end{equation}
where $\l$ denotes our familiar MGF parameter. We also extend the $\mathrm{m.p.v.}$ definition to integrals $\int_0^\tau f(t) \dd t$ for arbitrary $\tau \geq 0$ by mimicking the Cauchy-principal-value construction:
\begin{equation*}
    \mathrm{m.p.v.} \int_0^\tau f(t) \dd t \coloneqq \sum_{m: T_{m+1} < \tau} \left[ \mathrm{m.p.v.} \int_{T_m - \eps_m}^{T_m + \eps_m} f(t) \dd t + \int_{T_m + \eps_m}^{T_{m+1} - \eps_{m+1}} f(t) \dd t \right] + \mathrm{m.p.v.} \int_{T_k - \eps_k}^{T_k + \eps_k} f(t) \dd t + \int_{T_k + \eps_k}^\tau f(t) \dd t,
\end{equation*}
where $T_k < \tau < T_{k+1}$. Note that the above definition implies that whenever $f$ has no poles on an interval $(a,b) \subset \R_{\geq 0}$, then
\begin{equation*}
    \mathrm{m.p.v.} \int_a^b f(t) \dd t = \int_a^b f(t) \dd t.
\end{equation*}

\myparagraph{Additional Comments on the Discretisation of \ref{mgf:lambda}.}
Following our discussion from \Cref{sec:prelim}, we want to point out that that there are other ways of discretising
\begin{align*}
    \lambda \ddot{w}_t +  \dot{w}_t + \nabla F(w_t) = 0.
\end{align*}
Indeed, instead of discretising as \eqref{mgf:discretisation} in the main paper
\begin{align*}
  \lambda \ \frac{w_{k+1} - 2 w_k + w_{k-1}}{\varepsilon^2} + \frac{w_k - w_{k-1}}{\varepsilon} + \nabla F(w_k) = 0,
\end{align*}
one could also consider a central first-order difference:
\begin{align*}
\lambda \ \frac{w_{k+1} - 2 w_k + w_{k-1}}{\varepsilon^2} + \frac{w_{k+1} - w_{k-1}}{2 \varepsilon} + \nabla F(w_k) = 0.
\end{align*}
Rearranging, this leads to
\begin{align*}
    w_{k+1} =  w_k - \frac{\varepsilon^2}{\l ( 1 + \frac{\varepsilon}{2 \lambda})} \nabla F(w_k) + \frac{1 - \frac{\varepsilon}{2 \lambda}}{1 + \frac{\varepsilon}{2 \lambda}} (w_k - w_{k-1}),
\end{align*}
which corresponds to momentum with $\gamma = \frac{\varepsilon^2}{\l ( 1 + \frac{\varepsilon}{2 \lambda})}$ and $\beta = \frac{1 - \frac{\varepsilon}{2 \lambda}}{1 + \frac{\varepsilon}{2 \lambda}}$. Solving for $\varepsilon$ and $\lambda$, we get
\begin{align*}
    \lambda = \frac{(1 + \beta) \gamma}{2(1 - \beta)^2} \qquad \text{and} \qquad \varepsilon = \frac{\gamma}{1 - \beta}.
\end{align*}
Hence, we obtain the same discretisation step $\varepsilon$ as in \Cref{prop:discretisation} and a slightly different expression for $\lambda$. However, note that the two versions of $\lambda$ become indistinguishable for large values of $\beta$ since $\frac{1 + \beta}{2} \to_{\beta \to 1} 1$. Experimentally, running \ref{mgf:lambda} with the two different values for $\lambda$ leads to similar results. Thus, the discretisation scheme from the main paper was chosen due to the more concise definition of $\lambda$ in this case.

\section{$(w_+, w_-)$-Reparametrisation}\label{reparametrisation}

\myparagraph{MGF Reparametrisation.}
We recall that we consider momentum gradient flow \ref{mgf:lambda} with parameter $\lambda > 0$ over the diagonal-linear-network loss $F((u,v))) = \L(u \p v)$:
\begin{align*}
\begin{split}
 &\lambda \ddot{u}_t + \dot{u}_t + \nabla L(\theta_t) \odot v_t = 0; \\
 &\lambda \ddot{v}_t + \dot{v}_t + \nabla L(\theta_t) \odot u_t = 0.
\end{split}
\end{align*}
For proof-writing convenience, we consider the simple reparametrisation outlined below.

In order to eliminate the cross-dependencies in $(u, v)$ in the above equations, it is natural to consider the quantities $(w_{+,t}, w_{-,t})$ where $w_{\pm, t} = u_t \pm v_t$ for $t \ge 0$. Hence, we get the following reparametrised ODE:
\begin{equation}\label{w_mgf}
    \left \{
    \begin{aligned}
        & \l \ddot{w}_{\pm, t} + \dot{w}_{\pm, t} \pm \nabla \L(\t_t) \p w_{\pm, t} = 0; \\
        & w_{\pm, 0} = u_0 \pm v_0, \en \dot{w}_{\pm, 0} = 0.
    \end{aligned}
    \right.
\end{equation}

Notice that with these new quantities, we have
\begin{align*}
    \t_t = \frac{w_{+, t}^2 - w_{-, t}^2}{4} \quad \text{and} \quad \D_t = |w_{+, t} w_{-, t}|.
\end{align*}

\myparagraph{MGD Reparametrisation.}
For the discrete-time setting, we follow the same reparametrisation from the MGD recursion:
\begin{align*}
\begin{split}
 & u_{k+1} = u_k - \g\nabla L(\t_k) \p v_k + \b(u_k - u_{k-1}); \\
 & v_{k+1} = v_k - \g\nabla L(\t_k) \p u_k + \b(v_k - v_{k-1}).
\end{split}
\end{align*}
We let $w_{\pm, k} = u_k \pm v_k$ for $k \geq 0$. Then, for $k \geq 1$, the equations above transform into
\begin{equation}\label{w_mgd}
    \left \{
    \begin{aligned}
        & w_{\pm, k+1} = w_{\pm, k} \mp \g \nabla \L(\t_k) \p w_{\pm, k} + \b(w_{\pm, k} - w_{\pm, k-1}); \\
        & w_{\pm, 1} = w_{\pm, 0} = u_0 \pm v_0.
    \end{aligned}
    \right.
\end{equation}

Again, with the newly defined quantities, we have
\begin{align*}
    \t_k = \frac{w_{+, k}^2 - w_{-, k}^2}{4} \quad \text{and} \quad \D_k = |w_{+, k} w_{-, k}|.
\end{align*}

\section{Continuous-Time Theorems} \label{proof_mgf}

\subsection{Convergence of Momentum Gradient Flow}\label{convergence}

Momentum gradient flow (with $\l > 0$),
\begin{equation*}
    \l \ddot{w_t} + \dot{w_t} + \nabla F(w_t) = 0,
\end{equation*}
also known in the optimisation literature as the heavy-ball with friction ODE or the heavy-ball dynamical system with constant damping coefficient, has been the object of extensive mathematical study over the years \citep{analytic_nonlinearity, heavy_ball_friction, alvarez_convex_heavy_ball, quasiconvex_heavy_ball, lyapunov_heavy_ball, PL_convergence_heavy_ball}. If we abstract away from the diagonal linear network setting and consider an unspecified loss $F \in C^1(\R^D, \R_{\geq 0})$ with locally Lipschitz gradient, we can still identify a useful Lyapunov function, which perhaps motivated the study of the ODE in the first place. The function in question happens to be the energy of the system
\begin{equation}\label{energy}
    E_t = F(w_t) + \frac{\l}{2} \n{\dot{w}_t}_2^2,
\end{equation}
whose nonpositive time-derivative $\dot{E}_t = -\n{\dot{w}_t}_2^2$ allows us to prove the global existence and uniqueness of a solution to MGF [\cite{heavy_ball_friction}, Theorem 3.1] in this more general setting. We note that by an easy inductive argument, when the function $F$ is $C^k$-smooth, the MGF solution $w_t$ is $C^{k+1}$-smooth. Hence, in our setting where the diagonal-neural-network loss $F$ is $C^\infty$-smooth, the learning trajectory $w_t$ is also $C^\infty$-smooth.

\myparagraph{Convergence under \Cref{boundedness}.}

Under the assumption of a bounded trajectory -- $w_t \in L^\infty(0, \infty)$, one can prove the following convergences~\citep{heavy_ball_friction}:
\begin{equation*}
    \lim_{t \to \infty} \dot{w}_t = \lim_{t \to \infty} \nabla F(w_t) = 0.
\end{equation*}
However, even when bounded, the iterates $w_t$ need not converge as demonstrated by the coercive function from Section 4.3 in \citep{heavy_ball_friction}.
Nevertheless, when the loss $F$ is also analytic, as in the case of diagonal linear networks, assuming boundedness, one can further prove iterate convergence $\lim_{t \to \infty} w_t = w_\infty$ [\cite{analytic_nonlinearity}].

Unfortunately, without assuming boundedness, iterate convergence has been established only in the cases of convex loss [\cite{alvarez_convex_heavy_ball}], quasiconvex loss [\cite{quasiconvex_heavy_ball}], and loss satisfying the Polyak-Lojasiewicz inequality [\cite{PL_convergence_heavy_ball}]. Thus, the square loss for a diagonal linear network (and neural networks in general) falls out of the scope of these few favorable cases due to non-convexity and an abundance of local and global minima. For that reason, we posit Assumption \ref{boundedness}, which holds true empirically in all our experiments on diagonal linear networks.

\myparagraph{Convergence to 0 Loss under \Cref{non-zero-balancedness}.}

Let us now go back to the specific case of diagonal linear networks where the loss is given by $F(w) = \L(u \p v)$ for $w = (u, v)$. Notice that from the discussion above, if we assume boundedness of the trajectory, we have
\begin{equation*}
    \lim_{t \to \infty} \nabla F(w_t) = (\nabla \L(\theta_\infty)\p v_\infty, \nabla \L(\t_\infty) \p u_\infty) = 0.
\end{equation*}
Therefore, since $\nabla \L(\t_\infty) \p \D_\infty = 0$, if the balancedness at infinity $\D_\infty$ has nonzero coordinates, we can conclude that $\nabla \L(\t_\infty) = 0$. Recalling that $\L$ is convex, we get that $\L(\t_\infty) = 0$. Hence, $\t_\infty$ interpolates the dataset.

\subsection{Proof of Time-Varying Momentum Mirror Flow}\label{mirror_flow_proof}

In our discussion in \Cref{convergence}, we saw that assuming
\begin{enumerate}
    \item[1)] iterate boundedness: $u_t, v_t \in L^\infty(0, \infty)$, and
    \item[2)] nonzero balancedness at infinity: $\D_{\infty, i} \neq 0, \en \forall i \in [d]$,
\end{enumerate}
we can prove that MGF over a diagonal linear network \eqref{MGF:diag} converges to an interpolator $\t_\infty$.\footnote{Note that we also refer to $\t_\infty$ as $\t^{\texttt{MGF}}$.} Before we jump into the proof of \Cref{prop:2nd_order_MF}, we need to establish the following lemma.

\begin{lemma}\label{gradient_integral}
    Assuming that $u_t, v_t \in L^\infty(0, \infty)$ and $\D_{\infty, i} \neq 0, \en \forall i \in [d]$, the following integral limit exists:
    \begin{equation*}
        \lim_{t \to \infty} \int_0^t \nabla \L(\t_s) ds = \int_0^\infty \nabla \L(\t_t) \dd t.
    \end{equation*}
    Consequently,
    \begin{equation*}
        \lim_{t \to \infty} \int_0^t \nabla \L(\t_s) e^{-\frac{t-s}{\l}} ds = 0.
    \end{equation*}
\end{lemma}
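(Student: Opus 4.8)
\textbf{Proof proposal for Lemma~\ref{gradient_integral}.}

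The plan is to use the momentum mirror flow structure together with the decay of $\nabla F(w_t)$ to show the integral $\int_0^t \nabla\L(\theta_s)\,\dd s$ converges. First I would integrate the reparametrised ODE \eqref{w_mgf}. Writing $\l\ddot w_{\pm,t} + \dot w_{\pm,t} = \mp\,\nabla\L(\theta_t)\odot w_{\pm,t}$, and dividing by $w_{\pm,t}$ (on intervals where the sign is constant; over zero crossings one uses the $\mathrm{m.p.v.}$ device as in the sketch), one gets that $\pm\nabla\L(\theta_t) = -\big(\l\ddot w_{\pm,t} + \dot w_{\pm,t}\big)/w_{\pm,t}$ is, up to the second-order term, a total time derivative of $\ln|w_{\pm,t}|$. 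More precisely, following the construction behind Proposition~\ref{prop:2nd_order_MF}, $\nabla\L(\theta_t)$ equals $-\tfrac{\dd}{\dd t}\nabla\Phi_t(\theta_t) - \l\tfrac{\dd^2}{\dd t^2}\nabla\Phi_t(\theta_t)$, so that
\begin{align*}
\int_0^t \nabla\L(\theta_s)\,\dd s = \nabla\Phi_0(\theta_0) - \nabla\Phi_t(\theta_t) - \l\,\tfrac{\dd}{\dd t}\nabla\Phi_t(\theta_t)\big|_{0}^{t}.
\end{align*}
Thus convergence of the integral reduces to showing that $\nabla\Phi_t(\theta_t)$ and its time derivative have limits as $t\to\infty$. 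Both $\nabla\Phi_t$ and its derivative are built from $\ln|w_{\pm,t}|$, $\dot w_{\pm,t}/w_{\pm,t}$, and the perturbation $\phi_t$; since Assumption~\ref{boundedness} gives $u_t,v_t$ bounded, $\nabla F(w_t)\to 0$ and $\dot w_{\pm,t}\to 0$, and Assumption~\ref{non-zero-balancedness} guarantees $\D_\infty = \lim|w_{+,t}w_{-,t}|$ has strictly positive coordinates, each $|w_{\pm,t}|$ stays bounded away from $0$ for large $t$ (a zero of $w_{+,t}$ or $w_{-,t}$ in the limit would force $\D_{\infty,i}=0$). Hence $\ln|w_{\pm,t}|$ converges and $\dot w_{\pm,t}/w_{\pm,t}\to 0$, giving the existence of $\lim_{t\to\infty}\int_0^t \nabla\L(\theta_s)\,\dd s$.

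For the second claim, write $\int_0^t \nabla\L(\theta_s)e^{-(t-s)/\l}\,\dd s$ and split the interval at some large $T$. On $[0,T]$ the integrand is bounded by $\sup_{s}\|\nabla\L(\theta_s)\|\,e^{-(t-T)/\l}\to 0$ as $t\to\infty$; on $[T,t]$, using that $\nabla\L(\theta_s)\to 0$, for any $\eps>0$ choose $T$ so that $\|\nabla\L(\theta_s)\|\le\eps$ for $s\ge T$, and then $\big\|\int_T^t \nabla\L(\theta_s)e^{-(t-s)/\l}\,\dd s\big\|\le \eps\int_T^t e^{-(t-s)/\l}\,\dd s\le \eps\,\l$. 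Letting $t\to\infty$ and then $\eps\to 0$ yields the limit $0$. (Alternatively, one integrates by parts against the already-established convergent antiderivative $G(t)=\int_0^t\nabla\L(\theta_s)\,\dd s$, since $\int_0^t \dot G(s)e^{-(t-s)/\l}\,\dd s = G(t) - \tfrac1\l\int_0^t G(s)e^{-(t-s)/\l}\,\dd s$ and the averaging kernel $\tfrac1\l e^{-(t-s)/\l}$ pushes the convergent $G$ to its limit, so the difference tends to $0$.)

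The main obstacle is handling the zero crossings of $w_{\pm,t}$ rigorously: the division by $w_{\pm,t}$ and the identification of $\nabla\L(\theta_t)$ with a time derivative of a $\ln|w_{\pm,t}|$-type expression only make sense piecewise, and one must check that the $\mathrm{m.p.v.}$ boundary contributions at the (finitely many, as shown in \Cref{integral_formulas}) poles telescope correctly so that the integrated identity above still holds globally, with $\nabla\Phi_t(\theta_t)$ well-defined and convergent past the last crossing. Once past the final zero crossing, the argument is the clean bounded-away-from-zero reasoning sketched above; before it, one needs the modified principal value precisely so that the contributions of $\ln|w_{\pm,s}|\to-\infty$ near a crossing are cancelled by the $e^{\pm\delta/\l}$ weights built into \eqref{mpv}.
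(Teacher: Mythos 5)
Your overall strategy for the first claim is the same as the paper's (divide the reparametrised ODE \eqref{w_mgf} by $w_{\pm,t}$, pass to $\ln|w_{\pm,t}|$, integrate, and use that $|w_{\pm,t}|$ is eventually bounded away from zero), but as written it has a genuine gap at the decisive step. When you divide by $w_{\pm,t}$ you get $\pm\nabla\L(\t_t) = -\l\,\tfrac{\dd^2}{\dd t^2}\ln|w_{\pm,t}| - \tfrac{\dd}{\dd t}\ln|w_{\pm,t}| - \l\big(\dot w_{\pm,t}/w_{\pm,t}\big)^2$, so after integrating over $[T,t]$ the convergence of $\int_T^t\nabla\L(\t_s)\,\dd s$ requires not only that $\ln|w_{\pm,t}|$ converges and $\dot w_{\pm,t}/w_{\pm,t}\to 0$ (which you establish), but also that $\int_T^\infty\big(\dot w_{\pm,s}/w_{\pm,s}\big)^2\dd s$ is finite. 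Pointwise decay of $\dot w_{\pm,t}/w_{\pm,t}$ does not give this, and you never address it. The paper closes exactly this point with the energy-dissipation bound: $E_t=F(w_t)+\tfrac{\l}{2}\|\dot w_t\|_2^2$ has $\dot E_t=-\|\dot w_t\|_2^2$, whence $\int_0^\infty\|\dot w_t\|_2^2\,\dd t\le \|y\|_2^2/(2n)$ (as in the proof of \Cref{small_lambda}), and combined with $|w_{\pm,t}|$ bounded below for $t\ge T$ this makes the quadratic term integrable. Relatedly, phrasing the argument through \Cref{prop:2nd_order_MF} risks circularity: $\nabla\Phi_t(\t_t)$ equals $\xi_t=-\int_0^t\nabla\L(\t_s)(1-e^{-(t-s)/\l})\dd s$ by construction, and the linear perturbation $\phi_t$ in $\Phi_t$ is defined through $\xi_t$; so "showing $\nabla\Phi_t(\t_t)$ converges" is essentially the statement to be proved (and indeed the paper proves this lemma \emph{before} and uses it \emph{in} the proof of \Cref{prop:2nd_order_MF}). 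Your reduction quietly drops the convergence of $\phi_t$, which is where the whole difficulty sits; the non-circular route is to work directly with $\ln|w_{\pm,t}|$ and the $L^2$ bound on $\dot w$.

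Two further remarks. The zero-crossing/$\mathrm{m.p.v.}$ machinery you flag as the "main obstacle" is not needed here: the lemma only concerns the limit $t\to\infty$, $\nabla\L(\t_s)$ is continuous and bounded, so $\int_0^T\nabla\L(\t_s)\dd s$ is a harmless finite constant and one simply integrates the log-identity from a time $T$ past which, by \Cref{non-zero-balancedness} and convergence of the weights, no coordinate of $w_{\pm}$ vanishes — this is what the paper does. Your proof of the second claim (splitting at $T$ using $\nabla\L(\t_s)\to 0$, or integrating by parts against the convergent antiderivative) is correct, up to the immaterial missing factor $\l$ in the bound for the $[0,T]$ piece.
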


\begin{proof}
    Let us consider the $(w_+, w_-)$-reparametrisation of MGF \eqref{MGF:diag} given by \Cref{w_mgf}:
    \begin{equation*}
        \l \ddot{w}_{\pm,t} + \dot{w}_{\pm,t} \pm \nabla \L(\t_t) \p w_{\pm,t} = 0.
    \end{equation*}
    Since we assumed that $\D_\infty$ has nonzero coordinates, there exists $T \geq 0$ such that for all $t \geq T$, $w_{\pm, t}$ have nonzero coordinates. Hence, for $t \geq T$, we can safely divide by $w_{\pm, t}$ to obtain
    \begin{equation*}
        \l \frac{\dd^2 \ln |w_{\pm, t}|}{\dd t^2} + \frac{\dd \ln |w_{\pm, t}|}{\dd t} \pm \nabla \L(\t_t) + \l \left(\frac{\dot{w}_{\pm, t}}{w_{\pm, t}}\right)^2 = 0.
     \end{equation*}

    Let us notice a couple of things. First, as we discussed in \Cref{convergence}, the boundedness of the iterates forces $w_{\pm, t}$ to converge to some vectors $w_{\pm, \infty}$ with nonzero coordinates since we assumed the coordinates of $\D_\infty = w_{+, \infty}w_{-, \infty}$ are nonzero. Hence,
    \begin{equation*}
        \n{\frac{\dot{w}_{\pm, t}^2}{w_{\pm, t}^2}}_\infty \leq \texttt{const} \cdot (\n{\dot{u}_t^2}_\infty + \n{\dot{v}_t^2}_\infty),
    \end{equation*}
    where the RHS is integrable as we saw in the proof of \Cref{small_lambda}. Second, from the discussion in \Cref{convergence}, we know that $\lim_{t \to \infty} \dot{w}_{\pm, t} = 0$, so $\lim_{t \to \infty} \frac{\dd \ln |w_{\pm, t}|}{\dd t} = 0$.

    Now, for $t \geq T$,
    \begin{align*}
        \int_T^t \nabla \L(\t_s) ds & = \mp \left( \l \int_T^t \frac{\dd^2 \ln |w_{\pm, s}|}{\dd t^2} ds + \int_T^t \frac{\dd \ln |w_{\pm, s}|}{\dd t} ds + \int_T^t \left(\frac{\dot{w}_{\pm, t}}{w_{\pm, t}}\right)^2 ds \right)\\
        & = \mp \left( \l \frac{\dd \ln |w_{\pm, s}|}{\dd t} \Big|_T^t + \ln |w_{\pm, s}| \Big|_T^t + \int_T^t \left(\frac{\dot{w}_{\pm, t}}{w_{\pm, t}}\right)^2 ds \right).
    \end{align*}
    So, using the above observations and letting $t \to \infty$ yields
    \begin{equation*}
        \lim_{t \to \infty} \int_T^t \nabla \L(\t_s) ds = \mp \left( -\l \frac{\dd \ln |w_{\pm, T}|}{\dd t} - \ln |w_{\pm, T}| +  \ln |w_{\pm, \infty}| - \int_T^\infty \left(\frac{\dot{w}_{\pm, t}}{w_{\pm, t}}\right)^2 \dd t \right).
    \end{equation*}
    Thus,  we conclude that $\lim_{t \to \infty} \int_0^t \nabla \L(\t_s) ds$ exists, and therefore, $\lim_{t \to \infty} \int_0^t \nabla \L(\t_s) e^{-\frac{t-s}{\l}} ds = 0$.
\end{proof}

We are now well-equipped to prove \Cref{prop:2nd_order_MF}. We note that we phrased \Cref{prop:2nd_order_MF} rather succinctly in the main part of the paper due to space considerations. In what follows, we restate \Cref{prop:2nd_order_MF} by precisely specifying the underlying assumptions.

\begin{proposition*}
    Assume the solution $(u_t, v_t)$ of MGF \eqref{MGF:diag} is bounded. If we also assume that the balancedness at infinity $\D_\infty$ has nonzero coordinates, then there exists a time $T \geq 0$, after which the predictors $\theta_t = u_t \odot v_t$ follow a momentum mirror flow with time-varying potentials $\Phi_t$:
        \begin{align*}
            \lambda \frac{\dd^2  \nabla \Phi_t (\theta_t) }{\dd t^2} + \frac{\dd  \nabla \Phi_t (\theta_t) }{\dd t} + \nabla \L(\theta_t) = 0.
        \end{align*}
    Furthermore, if we assume that the balancedness $\D_t$ remains nonzero for $t \in [0, +\infty]$, then the momentum mirror flow holds for every $t \geq 0$.
\end{proposition*}

\begin{proof}
    We will consider the $(w_+, w_-)$-reparametrisation of momentum gradient flow \eqref{MGF:diag} introduced in \Cref{reparametrisation}. For convenience of the reader, we recall this reparametrisation here:
    \begin{equation*}
        \l \ddot{w}_{\pm, t} + \dot{w}_{\pm, t} \pm \nabla \L(\t_t) \p w_{\pm, t} = 0.
    \end{equation*}

    Now, let $\xi : \R_{\geq 0} \to \R^d$ be the $C^\infty(0, \infty)$ solution of the following ODE:
    \begin{equation*}
        \l \ddot{\xi}_t + \dot{\xi}_t + \nabla \L(\t_t) = 0,
    \end{equation*}
    with the constraint $\xi_0 = \dot{\xi}_0 = 0$.
    Hence, by \Cref{convolution},
    \begin{equation*}
        \xi_t = -\int_0^t \nabla \L(\t_s) (1-e^{-\frac{t-s}{\l}}) \dd s,
    \end{equation*}
    and by \Cref{gradient_integral},
    \begin{equation*}
        \xi_\infty = -\int_0^\infty \nabla \L(\t_t) \dd t.
    \end{equation*}
    Thus, $\xi_t \in \texttt{span}(x_1, \dots, x_n)$, $\forall t \in [0, +\infty]$.
    
    Having fixed $\xi_t$, we define the quantities $\a_{\pm, t}$ for every $t \in [0, +\infty]$ through the following relation:
    \begin{equation*}
        \a_{\pm, t} = w_{\pm, t} \exp(\mp \xi_t).
    \end{equation*}
    So, $\D_t = |w_{+, t}w_{-, t}| = |\a_{+, t}\a_{-, t}|$. Furthermore,
    \begin{align*}
        \t_t & = \frac{1}{4}(w_{+, t}^2 - w_{-, t}^2) \\
        & = \frac{1}{4}(\a_{+, t}^2\exp(2\xi_t) - \a_{-, t}^2\exp(-2 \xi_t)) \\
        & = \frac{1}{2} \D_t \sinh\left( 2\xi_t + \ln \frac{|\a_{+, t}|}{|\a_{- ,t}|} \right)
    \end{align*}

    Since we assumed that $\D_\infty$ has nonzero coordinates, there exists $T \geq 0$ such that for all $t \geq T$, $w_{\pm, t}$ have nonzero coordinates. Hence, for $t \geq T$, the logarithm $\ln \frac{|\a_{+, t}|}{|\a_{- ,t}|}$ is well-defined. If we assume positive balancedness for $t \in [0, +\infty]$, then we can choose $T = 0$. From now until the end of the proof, whenever a time-dependent quantity features division by $\D_t$, we will tacitly assume that $t \geq T$.
    
    Let us now introduce the helper quantity $\phi_t$ through the following identity:
    \begin{equation*}
        \phi_t = \frac{1}{2}\ln \frac{|\a_{+, t}|}{|\a_{- ,t}|} = \frac{1}{2}\arcsinh\left( \frac{\a_{+, t}^2 - \a_{-, t}^2}{2\D_t} \right).
    \end{equation*}
    Then,
    \begin{equation*}
        \frac{1}{2}\arcsinh\left( \frac{2\t_t}{\D_t} \right) - \phi_t = \xi_t \in \texttt{span}(x_1, \dots, x_n).
    \end{equation*}
    So, if we consider the time-varying potential
    \begin{equation}\label{potential}
    \begin{split}
        \Phi_t(\t) & = \frac{1}{4} \sum_{i=1}^d \left( 2\t_i \arcsinh \left( \frac{2\t_i}{\D_{t, i}} \right) - \sqrt{4\t_i^2 + \D_{t, i}^2} + \D_{t, i} \right) - \ip{\phi_t}{\t} \\
        & = \psi_{\D_t}(\t) - \ip{\phi_t}{\t},
    \end{split}
    \end{equation}
    where $\psi_{\D_t}$ is the hyperbolic entropy defined in \Cref{hyperbolic_entropy}, then
    \begin{equation*}
        \nabla \Phi_t(\t) = \frac{1}{2}\arcsinh\left( \frac{2\t}{\D_t} \right) - \phi_t.
    \end{equation*}
    Notice that $\nabla^2 \Phi_t = \texttt{diag}\left( 1/\sqrt{4\t^2 + \D_t^2} \right) \succ 0$. Hence, $\Phi_t$ is a mirror map. Furthermore, $\nabla \Phi_t(\t_t) = \xi_t$ for $t \geq T$, so
    \begin{equation*}
        \lambda \frac{\dd^2  \nabla \Phi_t (\theta_t) }{\dd t^2} + \frac{\dd  \nabla \Phi_t (\theta_t) }{\dd t} + \nabla \L(\theta_t) = 0.
    \end{equation*}
\end{proof}

\subsection{Proof of \Cref{main_mgf:general}} \label{app:main_mgf:general}

We are now ready to prove our main result for the implicit bias of momentum gradient flow on diagonal linear networks.

\MainMGFGeneral*

We split the proof into two parts for conceptual clarity. In the first part, we utilise the time-varying mirror flow from \Cref{prop:2nd_order_MF} to derive the implicit regularisation $\t^{\texttt{MGF}} = \argmin_{\t^\star \in \mathcal{S}} \  D_{\psi_{\D_\infty}}(\t^\star, \tilde{\t}_0)$. Then, in the second part, we prove that the integral quantities $I_\pm$ from \Cref{IntegralQuantities} are well-defined, and we give the trajectory-dependent characterisations of the asymptotic balancedness $\D_\infty$ and the perturbed initialisation $\tilde{\t}_0$.

\subsubsection{Proof of Implicit Regularisation.}

In \Cref{prop:2nd_order_MF}, we proved that whenever the MGF trajectory is bounded and the coordinates of $\D_\infty$ are nonzero, there exists a time $T \geq 0$, after which the predictors $\t_t$ follow a momentum mirror flow with potentials given by \Cref{potential}. Recall that for $t \geq T$,
\begin{equation*}
    \nabla \Phi_t(\t_t) = \frac{1}{2}\arcsinh\left( \frac{2\t}{\D_t} \right) - \phi_t = - \xi_t \in \texttt{span}(x_1, \dots, x_n).
\end{equation*}
where $\xi_t = -\int_0^t \nabla \L(\t_s) (1-e^{-\frac{t-s}{\l}}) ds$, $\a_{\pm, t} = w_{\pm, t} \exp(\mp \xi_t)$, and $\phi_t = \frac{1}{2}\arcsinh\left( \frac{\a_{+, t}^2 - \a_{-, t}^2}{2\D_t} \right)$.

Now, as $t \to \infty$, $\xi_t$ and the MGF iterates converge, so we know that $\nabla \Phi_\infty(\t_\infty) \in \texttt{span}(x_1, \dots, x_n)$, where $\Phi_\infty(\t) = \psi_{\D_\infty}(\t) - \ip{\phi_\infty}{\t}$. Thus, we can use the familiar Bregman-Cosine-Theorem trick to characterise the interpolator $\t_\infty$. We proceed with this characterisation.

Let $\tilde{\t}_0$ be a perturbation term such that $\nabla \Phi_\infty(\tilde{\t}_0) = 0$. Equivalently,
\begin{gather*}
    \frac{1}{2}\arcsinh\left( \frac{2\tilde{\t}_0}{\D_\infty^2} \right) - \phi_\infty = 0 \iff \nonumber \\
    \arcsinh\left( \frac{2\tilde{\t}_0}{\D_\infty^2} \right) - \arcsinh\left( \frac{\a_{+, \infty}^2 - \a_{-, \infty}^2}{2\D_\infty^2} \right) = 0 \iff \nonumber \\ 
    \tilde{\t}_0 = \frac{\a_{+, \infty}^2 - \a_{-, \infty}^2}{4}.
\end{gather*}
Note that $\a_{\pm, \infty} = w_{\pm, \infty}\exp(\pm \int_0^\infty \nabla \L(\t_t) \dd t)$ by \Cref{gradient_integral} and $\D_\infty = |\a_{+,\infty} \a_{-,\infty}|$.

Now, let $\t^\star \in \mathcal{S}$ be an arbitrary interpolator of the dataset. Then, $\t^\star - \t_\infty \in \mathrm{ker}(X) = \texttt{span}(x_1, \dots, x_n)^\perp$. Hence, the Bregman Cosine Theorem yields
\begin{align*}
    D_{\Phi_\infty}(\t^\star, \tilde{\t}_0) & = D_{\Phi_\infty}(\t^\star, \t_\infty) + D_{\Phi_\infty}(\t_\infty, \tilde{\t}_0) + \ip{\t^\star - \t_\infty}{\nabla \Phi(\t_\infty) - \nabla \Phi(\tilde{\t}_0)} \\
    & = D_{\Phi_\infty}(\t^\star, \t_\infty) + D_{\Phi_\infty}(\t_\infty, \tilde{\t}_0),
\end{align*}
where we used that $\nabla \Phi(\t_\infty) - \nabla \Phi(\tilde{\t}_0) \in \texttt{span}(x_1, \dots, x_n)$.
Thus,
\begin{align*}
    \t_\infty & = \argmin_{\t^\star \in \mathcal{S}} D_{\Phi_\infty}(\t^\star, \tilde{\t}_0) \\
    & = \argmin_{\t^\star \in \mathcal{S}} \Phi_\infty(\t^\star).
\end{align*}

Finally, notice that $\nabla \psi_{\D_\infty}(\tilde{\t}_0) = \frac{1}{2}\arcsinh\left( \frac{2\tilde{\t}_0}{\D_\infty} \right) = \phi_\infty$ as we showed above. Hence,
\begin{equation*}
    D_{\psi_{\D_\infty}}(\t, \tilde{\t}_0) = \Phi_\infty(\t) - \psi_{\D_\infty}(\tilde{\t}_0) + \ip{\nabla \psi_{\D_\infty}(\tilde{\t}_0)}{\tilde{\t}_0}.
\end{equation*}
Thus, we conclude that
\begin{equation*}
    \t_\infty = \argmin_{\t^\star \in \mathcal{S}} \Phi_\infty(\t^\star) = \argmin_{\t^\star \in \mathcal{S}} D_{\psi_{\D_\infty}}(\t^\star, \tilde{\t}_0).
\end{equation*}
\qed

\subsubsection{Proof of Trajectory-Dependent Characterisation.}\label{integral_formulas}

We just showed that the recovered interpolator by MGF solves the constrained minimisation problem $\t_\infty = \argmin_{\t^\star \in \mathcal{S}} D_{\psi_{\D_\infty}}(\t^\star, \tilde{\t}_0)$, where $\D_\infty = |\a_{+,\infty} \a_{-,\infty}|$, $\tilde{\t}_0 = (\a_{+, \infty}^2 - \a_{-, \infty}^2)/4$, and $\a_{\pm, \infty} = w_{\pm, \infty}\exp(\pm \int_0^\infty \nabla \L(\t_t) \dd t)$. Clearly, these opaque characterisations of $\D_\infty$ and $\tilde{\t}_0$ prevent us from describing how the magnitude of these quantities compares to the magnitude of the initial balancedness $\D_0$ and the initialisation scale $\a = \max(|u_0|, |v_0|)$. Ideally, we would like to find formulas for $\D_\infty$ and $\tilde{\t}_0$ which show that $\tilde{\t}_0 \ll \t^\star$, $\forall \t^\star \in \mathcal{S}$ and $\D_\infty < \D_0$ so that we can conclude that $\t^{\texttt{MGF}} \approx \argmin_{\t^\star \in \mathcal{S}} \psi_{\D_\infty}(\t^\star)$ enjoys better sparsity guarantees than $\t^{\texttt{GF}} \approx \argmin_{\t^\star \in \mathcal{S}} \psi_{\D_0}(\t^\star)$. In what follows, we derive such formulas.

In our subsequent arguments, for a vector $z \in \R^d$ and a coordinate $i \in [d]$, we will denote with $z(i)$ the $i^{\text{th}}$ coordinate of $z$ in order to reduce the index bloat. Let us consider again the $(w_+, w_-)$-reparametrisation of MGF discussed in \Cref{reparametrisation}:
\begin{equation}\label{w_ode}
    \l \ddot{w}_{\pm,t} + \dot{w}_{\pm,t} \pm \nabla \L(\t_t) \p w_{\pm,t} = 0.
\end{equation}
 Notice that if for some $T > 0$ and $i \in [d]$, $w_{+, T}(i) = 0$, then $\dot{w}_{+,T}(i)$ must be nonzero. Indeed, as we argued in \Cref{convergence}, MGF \eqref{MGF:diag} admits a unique global solution. And if $w_{+, T}(i) = \dot{w}_{+,T}(i) = 0$, then we could construct another solution $(w_{+,t}', w_{-,t}')$ of MGF such that $w_{+, t}'(i) = \dot{w}_{+, t}'(i) = 0, \en \forall t \geq 0$, and $w_{\pm, T} = w_{\pm, T}'$. By uniqueness, we get that $w_{\pm, t} = w_{\pm,t }', \en \forall t \geq 0$ However, the newly constructed solution will not be consistent with the imposed initialisation $\D_0 \neq 0$,. Hence, $\D_0 \neq 0$ prevents $w_{+,t}(i)$ and $\dot{w}_{+}(i)$ from hitting 0 simultaneously. Similarly, this situation cannot occur for $w_{-,t}$.

Until further notice, we fix a coordinate $i \in [d]$ and consider \cref{w_ode} only in the $i^{\text{th}}$ coordinate without explicit mention. If $w_{\pm, T} = 0$ for some $T > 0$, then $\dot{w}_{\pm, T} \neq 0$. Hence, for some small $\d > 0$, $\dot{w}_{\pm, t}$ does not change sign on $[T - \d, T+ \d]$, so $w_{\pm, t}$ either strictly increases or decreases on $[T - \d, T+ \d]$. Therefore, $w_{\pm, t} \neq 0$ on $[T-\d, T+\d] \setminus \{T\}$ implying that $w_{\pm, t}$ equals 0 at most a countable number of times. Recall that by \Cref{non-zero-balancedness}, there exists a time $T_\infty$ after which $w_{\pm}$ does not change sign. Therefore, if we assume that $w_{\pm}$ vanishes on infinitely many points $T_1 < T_2 < \dots < T_\infty$, then by compactness, the limit $\tau = \lim_{m \to \infty} T_m$ exists. Since $w_\pm$ is continuous, we infer that $w_{\pm, \tau} = 0$. Moreover, by the Mean Value Theorem, for every $m \geq 1$, there exists $T_m' \in (T_m, T_{m+1})$ such that $\dot{w}_{T_m'} = 0$. Notice that $\lim_{m \to \infty} T_m' = \tau$ as well. Hence, by continuity, $w_{\pm, \tau} = \dot{w}_{\pm, \tau} = 0$ -- a contradiction.

Hence, $w_{\pm}$ vanishes on a finite set of points. Let us order these vanishing times as $0 < T_1 < \dots < T_N$ and let $T_0 = 0$ and $T_{N+1} = +\infty$. Observe that for $t \notin \mathcal{T} = \{T_i : i \in [N]\}$, we can safely divide both sides of \cref{w_ode} by $w_{\pm,t}$ to obtain
\begin{equation*}
        \l \frac{\ddot{w}_{\pm,t}}{w_{\pm,t}} + \frac{\dot{w}_{\pm,t}}{w_{\pm,t}} \pm \nabla \L(\t_t)= 0.
\end{equation*}
The last expression is equivalent to
\begin{equation*}
        \l \left ( \frac{\ddot{w}_{\pm,t}}{w_{\pm,t}} - \left(\frac{\dot{w}_{\pm, t}}{w_{\pm, t}}\right)^2 \right ) + \frac{\dot{w}_{\pm,t}}{w_{\pm,t}} \pm \nabla \L(\t_t) + \l \left(\frac{\dot{w}_{\pm, t}}{w_{\pm, t}}\right)^2= 0,
\end{equation*}
which can be rewritten as
\begin{equation*}
    \l \frac{\dd^2 \ln (\sgn(w_{\pm, t})w_{\pm, t})}{\dd t^2} + \frac{\dd \ln (\sgn(w_{\pm, t})w_{\pm, t})}{\dd t} \pm \nabla \L(\t_t) + \l \left(\frac{\dot{w}_{\pm, t}}{w_{\pm, t}}\right)^2 = 0.
\end{equation*}

Let us define a new function $g_\pm : \R_{\geq 0} \setminus \mathcal{T} \to \R^d$ through the relation $g_{\pm, t} = \ln (\sgn(w_{\pm, t})w_{\pm, t})$. Then, $g_\pm$ is $C^\infty$-smooth on $\R_{\geq 0} \setminus \mathcal{T}$ and satisfies the following ODE:
\begin{equation}\label{g_ode}
    \l \ddot{g}_{\pm, t} + \dot{g}_{\pm, t} \pm \nabla \L(\t_t) + \l \left(\frac{\dot{w}_{\pm, t}}{w_{\pm, t}}\right)^2 = 0.
\end{equation}

\paragraph{Induction on Vanishing Times.} Now, we proceed to prove by induction on $N-1 \geq m \geq 0$ that for $\tau \in (T_m, T_{m+1})$ the following 3 things hold:
\begin{itemize}
    \item The following integral quantities\footnote{See \Cref{mpv} for the definition of $\mathrm{m.p.v.}$} exist and are finite:
    \begin{equation*}
        \mathrm{m.p.v.} \int_{0}^\tau \Big( \frac{\dot{w}_{\pm, t}}{w_{\pm, t}} \Big)^2 e^{-\frac{\tau-s}{\lambda}} \sgn(w_{\pm, \tau}w_{\pm, t}) \dd t \quad \text{and} \quad \int_{0}^\tau \mathrm{m.p.v.} \int_{0}^t \Big( \frac{\dot{w}_{\pm, s}}{w_{\pm, s}} \Big)^2 e^{-\frac{t-s}{\lambda}} \sgn(w_{\pm, t} w_{\pm, s} ) \dd s \ \dd t.
    \end{equation*}
    \item The following identity holds:
    \begin{equation*}
        \dot{g}_{\pm, \tau} = - \mathrm{m.p.v.} \int_{0}^\tau \left[ \Big( \frac{\dot{w}_{\pm, t}}{w_{\pm, t}} \Big)^2 \pm \frac{1}{\l} \nabla \L(\t_t) \right] e^{-\frac{\tau-t}{\lambda}} \sgn(w_{\pm, \tau} w_{\pm, t}) \dd t - \frac{1}{\l}\sum_{k = 1}^m (-1)^{m-k} e^{-\frac{\tau - T_k}{\l}}.
    \end{equation*}
    \item The following identity holds:
    \begin{align*}
        g_{\pm, \tau} & = g_{\pm, 0} - \int_{0}^\tau \mathrm{m.p.v.} \int_{0}^t \left[ \Big( \frac{\dot{w}_{\pm, s}}{w_{\pm, s}} \Big)^2 \pm \frac{1}{\l} \nabla \L(\t_s) \right] e^{-\frac{t-s}{\lambda}} \sgn(w_{\pm, t} w_{\pm, s}) \dd s \ \dd t \\
        & - \sum_{k=1}^m (-1)^{m-k} \left(1 - e^{-\frac{\tau - T_k}{\l}}\right) + 2\sum_{1 \leq i < j \leq m } (-1)^{j-i} \left(1 - e^{-\frac{T_j - T_i}{\l}}\right).
    \end{align*}
\end{itemize}

Recall that $\nabla \L(\t_t)$ is a bounded function, so if the modified principal value from the first bullet point exists, then the modified principal values in the above identities are also well-defined.

\textbf{Base case: $m = 0$.}
Recall from the proof of \Cref{prop:2nd_order_MF} in \Cref{mirror_flow_proof} that $\dot{w}_{\pm} \in L^2(0, \infty)$. Now, since $w_{\pm, t}$ does not change signs on the interval $(T_0, \tau)$, we know that $1/w_{\pm, t} = \Omega(1)$. Hence,
\begin{equation*}
    \Big( \frac{\dot{w}_{\pm, s}}{w_{\pm, s}} \Big)^2 e^{-\frac{t-s}{\lambda}} \in L^1(0, \infty).
\end{equation*}
Similarly, $\Big( \frac{\dot{w}_{\pm, s}}{w_{\pm, s}} \Big)^2$ is integrable on all intervals $[T_i + \eps, T_{i+1} - \eps]$ for any small $\eps >0$.
Consequently, the integral quantities
\begin{equation*}
    \mathrm{m.p.v.} \int_{0}^\tau \Big( \frac{\dot{w}_{\pm, t}}{w_{\pm, t}} \Big)^2 e^{-\frac{t-s}{\lambda}} \sgn(w_{\pm, \tau} w_{\pm, t}) \dd t = \int_{0}^\tau \Big( \frac{\dot{w}_{\pm, t}}{w_{\pm, t}} \Big)^2 e^{-\frac{t-s}{\lambda}} \dd t
\end{equation*}
and
\begin{align*}
    \int_{0}^\tau \mathrm{m.p.v.} \int_{0}^t \Big( \frac{\dot{w}_{\pm, s}}{w_{\pm, s}} \Big)^2 e^{-\frac{t-s}{\lambda}} \sgn(w_{\pm, t} w_{\pm, s} ) \dd s \ \dd t & = \int_{0}^\tau \int_{0}^t \Big( \frac{\dot{w}_{\pm, s}}{w_{\pm, s}} \Big)^2 e^{-\frac{t-s}{\lambda}} \dd s \ \dd t \\
    & = \int_0^\tau \Big( \frac{\dot{w}_{\pm, s}}{w_{\pm, s}} \Big)^2 (1-e^{-\frac{\tau-t}{\l}}) \dd t
\end{align*}
are well-defined.
Moreover, after applying \Cref{convolution} to \cref{g_ode}, we get
\begin{align*}
    & \dot{g}_{\pm, \tau} = - \int_0^\tau \Big( \frac{\dot{w}_{\pm, t}}{w_{\pm, t}} \Big)^2 e^{-\frac{\tau-t}{\l}} \dd t \ \mp \frac{1}{\l}\int_0^\tau \nabla \L(\t_t) e^{-\frac{\tau-t}{\l}} \dd t \\
    & g_{\pm, \tau} = g_{\pm, 0} - \l \int_0^\tau \Big( \frac{\dot{w}_{\pm, s}}{w_{\pm, s}} \Big)^2 (1-e^{-\frac{\tau-t}{\l}}) \dd t \ \mp \int_0^\tau \nabla \L(\t_t) (1-e^{-\frac{\tau-t}{\l}}) \dd t,
\end{align*}
which concludes the proof of the base case.

\textbf{Induction step: $m \to m+1$.}
For $m \geq 0$, assume that for every $\tau \in [0, T_{m+1}) \setminus \mathcal{T}$ the expressions
\begin{align*}
    & \dot{g}_{\pm, \tau} = - \mathrm{m.p.v.} \int_{0}^\tau \left[ \Big( \frac{\dot{w}_{\pm, t}}{w_{\pm, t}} \Big)^2 \pm \frac{1}{\l} \nabla \L(\t_t) \right] e^{-\frac{\tau-t}{\lambda}} \sgn(w_{\pm, \tau} w_{\pm, t}) \dd t - \frac{1}{\l}\sum_{k = 1}^m (-1)^{m-k} e^{-\frac{\tau - T_k}{\l}}
\end{align*}
and
\begin{align*}
    g_{\pm, \tau} & = g_{\pm, 0} - \int_{0}^\tau \mathrm{m.p.v.} \int_{0}^t \left[ \Big( \frac{\dot{w}_{\pm, s}}{w_{\pm, s}} \Big)^2 \pm \frac{1}{\l} \nabla \L(\t_s) \right] e^{-\frac{t-s}{\lambda}} \sgn(w_{\pm, t} w_{\pm, s}) \dd s \ \dd t \\
    & - \sum_{k=1}^m (-1)^{m-k} \left(1 - e^{-\frac{\tau - T_k}{\l}}\right) + 2\sum_{1 \leq i < j \leq m } (-1)^{j-i} \left(1 - e^{-\frac{T_j - T_i}{\l}}\right).
    \end{align*}
are true and well-defined. We now want to extend the validity of these identities to $\tau \in (T_{m+1}, T_{m+2})$. For ease of notation during the induction step, let $T_{m+1} = T$, $w_\pm = w$, and $g_{\pm} = g$. Let $\eps > 0$ and let $T_\pm = T \pm \eps$.

Now, applying \Cref{convolution} to \cref{g_ode} yields
\begin{align*}
    & \dot{g}_{\tau} = \dot{g}_{T_+} e^{-\frac{\tau-T_+}{\l}} - \int_{T_+}^\tau \left[ \Big( \frac{\dot{w}_{t}}{w_{t}} \Big)^2 \pm \frac{1}{\l} \nabla \L(\t_t) \right] e^{-\frac{\tau-t}{\lambda}} \dd t \\
    & g_{\tau} = g_{T_+} + \dot{g}_{T_+} \int_{T_+}^\tau e^{-\frac{t-T_+}{\l}}  \dd t - \int_{T_+}^\tau \int_{T_+}^t \left[ \Big( \frac{\dot{w}_{s}}{w_{s}} \Big)^2 \pm \frac{1}{\l} \nabla \L(\t_s) \right] e^{-\frac{t-s}{\lambda}} \dd s \ \dd t.
\end{align*}
For further ease of notation and with some abuse of notation, let $f_t = \Big( \frac{\dot{w}_{t}}{w_{t}} \Big)^2 \pm \frac{1}{\l} \nabla \L(\t_t)$ on $\R_{\geq 0} \setminus \mathcal{T}$. We will shortly prove that $g_{T_+} - g_{T_-} = O(\eps)$ and $\dot{g}_{T_+} + \dot{g}_{T_-} + \frac{1}{\l} = O(\eps)$.\footnote{Whenever we write an equation of the form $A = B + O(\eps^r)$ for some $r > 0$, we mean that $A = B + C$, where $|C| = O(\eps^r)$.} Hence, the following limits will hold:
\begin{align*}
    & \dot{g}_{\tau} = \lim_{\eps \to 0} \left[ -\frac{1}{\l} e^{-\frac{\tau-T_+}{\l}} - \dot{g}_{T_-} e^{-\frac{\tau-T_+}{\l}} - \int_{T_+}^\tau f_t e^{-\frac{\tau-t}{\lambda}} \dd t \right] \\
    & g_{\tau} = \lim_{\eps \to 0} \left[ g_{T_-} - \frac{1}{\l} \int_{T_+}^\tau e^{-\frac{t-T_+}{\l}}  \dd t - \dot{g}_{T_-} \int_{T_+}^\tau e^{-\frac{t-T_+}{\l}}  \dd t - \int_{T_+}^\tau \int_{T_+}^t f_s e^{-\frac{t-s}{\lambda}} \dd s \ \dd t \right].
\end{align*}

\myparagraph{Induction step for $\dot{g}_\tau$.} Let us begin to untangle the first limit by substituting $\dot{g}_{T_-}$ with its integral formula given by the induction hypothesis. Notice that
\begin{align*}
    \dot{g}_{T_-} e^{-\frac{\tau-T_+}{\l}} & = - \mathrm{m.p.v.} \int_{0}^{T_-} f_t e^{-\frac{T_- - t}{\lambda}} \sgn(w_{T_-} w_t) \dd t \cdot e^{-\frac{\tau-T_+}{\l}} - \frac{e^{-\frac{\tau-T_+}{\l}}}{\l}\sum_{k = 1}^m (-1)^{m-k} e^{-\frac{T_- - T_k}{\l}} \\
    & = \mathrm{m.p.v.} \int_{0}^{T_-} f_t e^{-\frac{\tau - t}{\lambda}} \sgn(w_\tau w_t) \dd t \cdot e^{\frac{2\eps}{\l}} + \frac{e^{\frac{2\eps}{\l}}}{\l}\sum_{k = 1}^m (-1)^{(m+1)-k} e^{-\frac{\tau- T_k}{\l}},
\end{align*}
where we used that $\sgn(\tau) = -\sgn(T_-)$ since $w$ changes signs at $T$. Hence, we have that
\begin{align*}
    \dot{g}_{\tau} & = - \lim_{\eps \to 0} \Big[ \frac{1}{\l} e^{-\frac{\tau-T_+}{\l}} + \frac{e^{\frac{2\eps}{\l}}}{\l}\sum_{k = 1}^m (-1)^{(m+1)-k} e^{-\frac{\tau- T_k}{\l}} \\
    & + \mathrm{m.p.v.} \int_{0}^{T_-} f_t e^{-\frac{\tau - t}{\lambda}} \sgn(w_\tau w_t) \dd t \cdot e^{\frac{2\eps}{\l}} + \int_{T_+}^\tau f_t e^{-\frac{\tau-t}{\lambda}} \sgn(w_\tau w_t) \dd t \Big] \\
    & = \mp \int_0^\tau \frac{1}{\l} \nabla \L(\t_t) \sgn(w_\tau w_t) - \frac{1}{\l}\sum_{k=1}^{m+1} (-1)^{(m+1)-k} e^{-\frac{\tau- T_k}{\l}} \\
    & - \lim_{\eps \to 0} \left[ \mathrm{m.p.v.} \int_{0}^{T_-} \Big( \frac{\dot{w}_t}{w_t} \Big)^2 e^{-\frac{\tau - t}{\lambda}} \sgn(w_\tau w_t) \dd t \cdot e^{\frac{2\eps}{\l}} + \int_{T_+}^\tau \Big( \frac{\dot{w}_t}{w_t} \Big)^2 e^{-\frac{\tau-t}{\lambda}} \sgn(w_\tau w_t) \dd t \right],
\end{align*}
where the limit on the last line formally equals the modified principal value $\mathrm{m.p.v.} \int_{0}^\tau \Big( \frac{\dot{w}_t}{w_t} \Big)^2 e^{-\frac{\tau-s}{\lambda}} \sgn(w_\tau w_t) \dd t$ whose existence we want to prove as part of the induction step. In fact, notice that we just proved the existence of $\mathrm{m.p.v.} \int_{0}^\tau \Big( \frac{\dot{w}_t}{w_t} \Big)^2 e^{-\frac{\tau-s}{\lambda}} \sgn(w_\tau w_t) \dd t$ since both $\dot{g}_{\tau}$ and $\mp \int_0^\tau \frac{1}{\l} \nabla \L(\t_t) \sgn(w_\tau w_t)$ are finite quantities. Hence, for $\tau \in (T_{m+1}, T_{m+2})$,
\begin{equation*}
    \dot{g}_{\tau} = - \mathrm{m.p.v.} \int_{0}^\tau \left[ \Big( \frac{\dot{w}_{t}}{w_{t}} \Big)^2 \pm \frac{1}{\l} \nabla \L(\t_t) \right] e^{-\frac{\tau-t}{\lambda}} \sgn(w_{\tau} w_{t}) \dd t - \frac{1}{\l}\sum_{k=1}^{m+1} (-1)^{(m+1)-k} e^{-\frac{\tau- T_k}{\l}}.
\end{equation*}

\myparagraph{Induction step for $g_\tau$.} We move on to untangle the limit which equals $g_\tau$. By the induction hypothesis,
\begin{align*}
    & \dot{g}_{T_-} = - \mathrm{m.p.v.} \int_{0}^{T_-} f_t e^{-\frac{T_- - t}{\lambda}} \sgn(w_{T_-} w_t) \dd t - \frac{1}{\l}\sum_{k=1}^{m} (-1)^{m-k} e^{-\frac{T_- - T_k}{\l}}\\
    & g_{T_-} = g_0 - \int_{0}^{T_-}\mathrm{m.p.v.}\int_{0}^t f_s e^{-\frac{t-s}{\lambda}} \sgn(w_t w_s) \dd s \ \dd t \\
    & - \sum_{k=1}^m (-1)^{m-k} \left(1 - e^{-\frac{T_- - T_k}{\l}}\right) + 2\sum_{1 \leq i < j \leq m } (-1)^{j-i} \left(1 - e^{-\frac{T_j - T_i}{\l}}\right).
\end{align*}
Again, we can substitute $\sgn(w_{T_-})$ with $-\sgn(w_\tau)$, and after performing the familiar integral and limit manipulations, we obtain
\begin{align*}
    g_\tau & = g_0 \mp \int_{0}^\tau \int_{0}^t \frac{1}{\l} \nabla \L(\t_s) e^{-\frac{t-s}{\lambda}} \sgn(w_t w_s) \dd s \ \dd t - \lim_{\eps \to 0} \left[ A_\eps + B_\eps + C_\eps \right] \\
    & - \sum_{k=1}^{m+1} (-1)^{(m+1)-k} \left(1 - e^{-\frac{\tau - T_k}{\l}}\right) + 2\sum_{1 \leq i < j \leq m+1} (-1)^{j-i} \left(1 - e^{-\frac{T_j - T_i}{\l}}\right),
\end{align*}
where
\begin{align*}
    & A_\eps = \int_{0}^{T_-}\mathrm{m.p.v.}\int_{0}^t \Big( \frac{\dot{w}_s}{w_s} \Big)^2 e^{-\frac{t-s}{\lambda}} \sgn(w_t w_s) \dd s \ \dd t \\
    & B_\eps = \int_{T_+}^{\tau} \mathrm{m.p.v.} \int_0^{T_-} \Big( \frac{\dot{w}_s}{w_s} \Big)^2 e^{-\frac{t-s}{\lambda}} \sgn(w_t w_s) \dd s \ \dd t \cdot e^{\frac{2\eps}{\l}}\\
    & C_\eps = \int_{T_+}^\tau \int_{T_+}^t \Big( \frac{\dot{w}_s}{w_s} \Big)^2 e^{-\frac{t-s}{\lambda}} \sgn(w_t w_s) \dd s \ \dd t.
\end{align*}

\begin{figure}[h!]
    \centering
    \includegraphics[width=0.5\textwidth]{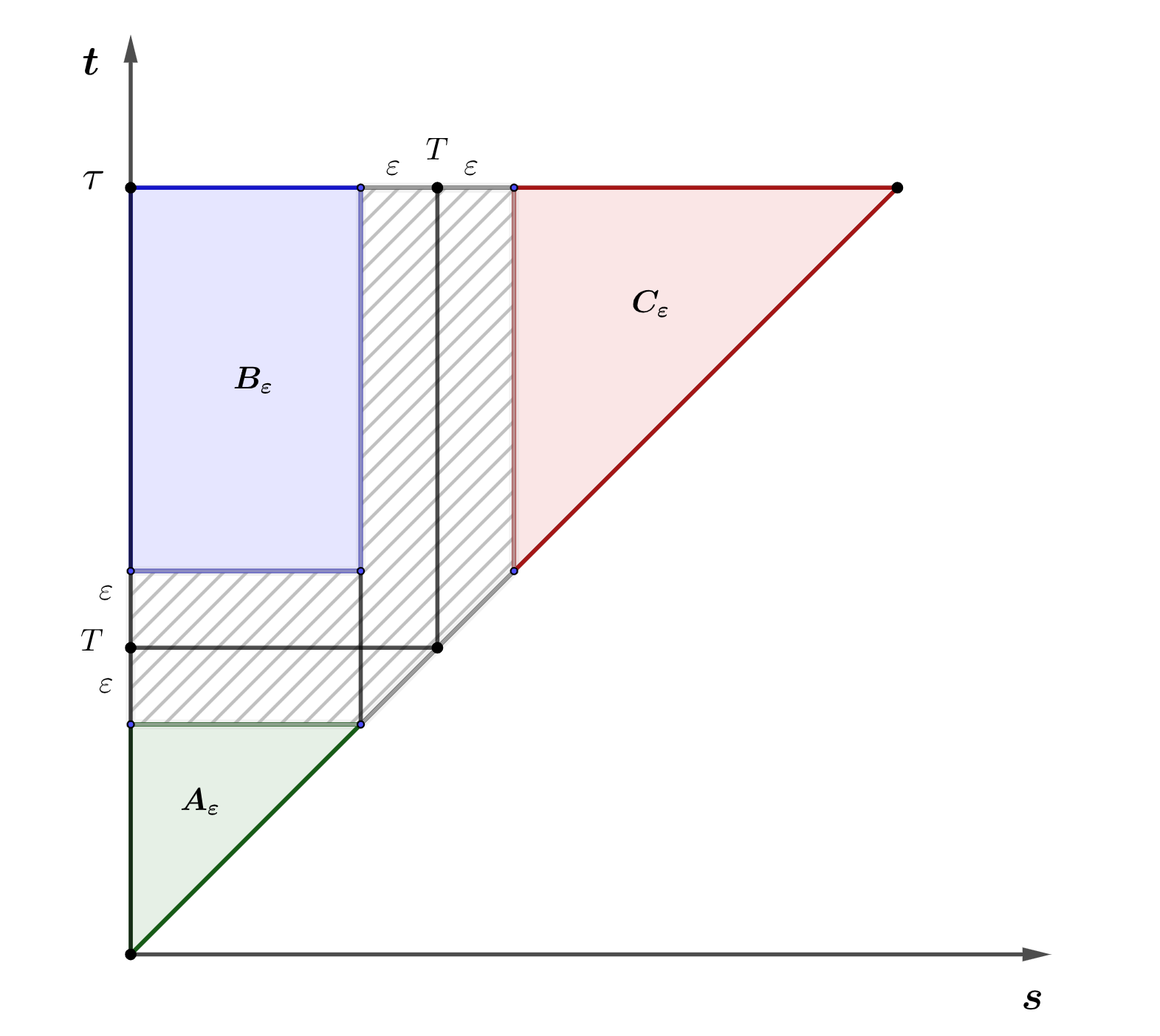}
    \vspace{-3mm}
    \caption{A visualisation of the areas over which we integrate $\Big( \frac{\dot{w}_s}{w_s} \Big)^2 e^{-\frac{t-s}{\lambda}} \sgn(w_t w_s)$ in the above limit.}
    \label{fig:app:integral_decomposition}
\end{figure}

Notice that formally the limit $\lim_{\eps \to 0} [A_\eps + B_\eps + C_\eps]$ equals the integral quantity
\begin{equation*}
    \int_{0}^\tau \mathrm{m.p.v.} \int_{0}^t \Big( \frac{\dot{w}_{s}}{w_{s}} \Big)^2 e^{-\frac{t-s}{\lambda}} \sgn(w_t w_{s}) \dd s \ \dd t,
\end{equation*}
whose existence we just proved as a consequence of the fact that
\begin{equation*}
    g_0 \mp \int_{0}^\tau \int_{0}^t \frac{1}{\l} \nabla \L(\t_s) e^{-\frac{t-s}{\lambda}} \sgn(w_t w_s) \dd s \ \dd t - \sum_{k=1}^{m+1} (-1)^{(m+1)-k} \left(1 - e^{-\frac{\tau - T_k}{\l}}\right) + 2\sum_{1 \leq i < j \leq m+1} (-1)^{j-i} \left(1 - e^{-\frac{T_j - T_i}{\l}}\right) - g_\tau
\end{equation*}
is well-defined and finite. Thus, for $\tau \in (T_{m+1}, T_{m+2})$,
\begin{align*}
     g_{\tau} & = g_{0} - \int_{0}^\tau \mathrm{m.p.v.} \int_{0}^t \left[ \Big( \frac{\dot{w}_{s}}{w_{s}} \Big)^2 \pm \frac{1}{\l} \nabla \L(\t_s) \right] e^{-\frac{t-s}{\lambda}} \sgn(w_{t} w_{s}) \dd s \ \dd t \\
     & - \sum_{k=1}^{m+1} (-1)^{(m+1)-k} \left(1 - e^{-\frac{\tau - T_k}{\l}}\right) + 2\sum_{1 \leq i < j \leq m+1} (-1)^{j-i} \left(1 - e^{-\frac{T_j - T_i}{\l}}\right).
\end{align*}

\myparagraph{Proof of bounds.} In order to conclude the induction step, we still have to prove the following bounds:
\begin{equation*}
    g_{T_+} - g_{T_-} = O(\eps) \quad \text{and} \quad \dot{g}_{T_+} + \dot{g}_{T_-} + \frac{1}{\l} = O(\eps).
\end{equation*}
Recall that $g_{T\pm\eps} = \log |w_{T\pm\eps}|$ and that $w_T = 0, \ \dot{w_T} \neq 0$. From the Taylor expansion of $w_t$, we know that
\begin{equation*}
    w_{T\pm\eps} = \pm \eps \dot{w}_T + O(\eps^2).
\end{equation*}
Hence, $|w_{T+\eps}/w_{T-\eps}| = 1 + O(\eps)$. Therefore, using the Taylor expansion of the logarithm around 1, we get that
\begin{equation*}
    |g_{T_+} - g_{T_-}| = |\log (1+O(\eps))| = O(\eps).
\end{equation*}
Now, recall that $\dot{g}_{T\pm\eps} = \dot{w}_{T\pm\eps}/w_{T\pm\eps}$ and observe that
\begin{align*}
    & w_{T\pm\eps} = \pm \eps \dot{w}_T + \frac{1}{2}\eps^2 \ddot{w}_T + O(\eps^3) \\
    & \dot{w}_{T\pm\eps} = \dot{w}_T \pm \eps \ddot{w}_T + O(\eps^2).
\end{align*}
Hence,
\begin{equation*}
    \frac{w_{T+\eps}\dot{w}_{T-\eps} + w_{T-\eps}\dot{w}_{T+\eps}}{w_{T+\eps}w_{T-\eps}} = \frac{-\eps^2 \dot{w}_T \ddot{w}_T + O(\eps^3)}{-\eps^2\dot{w}_T^2 + O(\eps^3)} = \frac{\ddot{w}_T}{\dot{w}_T} + O(\eps).
\end{equation*}
Since, $\l \ddot{w}_T + \dot{w}_T \pm \nabla \L(\t_T) \p w_T = 0$ and $w_T = 0$, we get that $\frac{\ddot{w}_T}{\dot{w}_T} = - \frac{1}{\l}$,
which concludes the induction step.

\paragraph{Proof of \Cref{IntegralQuantities}.} Thus, we proved that for $\tau \in (T_m, T_{m+1})$, $m \in \{0, 1, \dots, N\}$,
\begin{align*}
    \ln |w_\tau| & = \ln |w_0| - \int_{0}^\tau \mathrm{m.p.v.} \int_{0}^t \left[ \Big( \frac{\dot{w}_{s}}{w_{s}} \Big)^2 \pm \frac{1}{\l} \nabla \L(\t_s) \right] e^{-\frac{t-s}{\lambda}} \sgn(w_{t} w_{s}) \dd s \ \dd t \\
    & - \sum_{k=1}^{m} (-1)^{m-k} \left(1 - e^{-\frac{\tau - T_k}{\l}}\right) + 2\sum_{1 \leq i < j \leq m } (-1)^{j-i} \left(1 - e^{-\frac{T_j - T_i}{\l}}\right).
\end{align*}
Recall that throughout our inductive proof we worked with a fixed coordinate $i \in [d]$ of $w_\pm$. Different coordinates of $w_\pm$ vanish at different points in time, so writing the sum the last line in a coordinate-agnostic way becomes impossible. Thus, deriving a simple expression for the full $d$-dimensional vector $w_{\pm, \tau}$ for any $\tau \in \R_{\geq 0}$ also becomes impossible. However, remembering that the finite nonzero limits $\lim_{\tau \to \infty} |w_{\pm, \tau}| = |w_{\pm, \infty}|$ exist and letting $\tau \to \infty$ yields an interesting result for the weights at infinity. Indeed, notice that for every vanishing time $T$, $\lim_{\tau \to \infty} \left(1 - e^{-\frac{\tau - T_k}{\l}}\right) = 1$. Hence,
\begin{equation*}
    \frac{1}{\l} \sum_{k=1}^N (-1)^{N-k} \left(1 - e^{-\frac{\tau - T_k}{\l}}\right) = \mathbf{1}_{\{N - \text{ odd}\}}.
\end{equation*}
For every $i \in [d]$, let $N_\pm(i)$ denote the number of vanishing points for the coordinate $w_\pm(i)$.  Let us define the $d$-dimensional parity vectors $P_\pm \in \{0,1\}^d$ such that $P_\pm(i) \equiv N_\pm(i) \mod 2$. Let us also define the $d$-dimensional vectors $Q_\pm \in \R^d$ such that for each coordinate $k \in [d]$,
\begin{equation*}
    Q_\pm(k) \coloneqq -2\sum_{1 \leq i < j \leq N_\pm(k)} (-1)^{j-i} \left(1 - e^{-\frac{T_{\pm, k}(j) - T_{\pm, k}(i)}{\l}}\right),
\end{equation*}
where $0 < T_{\pm, k}(1) < \dots < T_{\pm, k}(N_\pm(k))$ denote the vanishing times of the weight $w_\pm(k)$.
Hence, we obtain the formula
\begin{equation}\label{w-formula}
    |w_{\pm, \infty}| = |w_{\pm, 0}| e^{-(P_\pm + Q_\pm)} \exp\left( -\int_{0}^\infty \mathrm{m.p.v.} \int_{0}^t \left[ \Big( \frac{\dot{w}_{\pm, s}}{w_{\pm, s}} \Big)^2 \pm \frac{1}{\l} \nabla \L(\t_s) \right] e^{-\frac{t-s}{\lambda}} \sgn(w_{\pm, t} w_{\pm, s}) \dd s \ \dd t \right).
\end{equation}
Recall that in \Cref{gradient_integral}, we proved that the limit
\begin{equation*}
    \lim_{t \to \infty} \int_0^t \nabla \L(\t_s) ds = \int_0^\infty \nabla \L(\t_t) \dd t = \frac{1}{\l} \int_0^\infty \int_0^t \nabla \L(\t_s) e^{-\frac{t-s}{\lambda}} \dd s \ \dd t
\end{equation*}
exists and is finite. Therefore, we can decouple $\Big( \frac{\dot{w}_{\pm, s}}{w_{\pm, s}} \Big)^2$ and $\nabla \L(\t_s)$ from the above integral and show that the following integral limits exist and are finite:
\begin{align*}
     \int_{0}^\infty \mathrm{m.p.v.} \int_{0}^t \Big( \frac{\dot{w}_{\pm, s}}{w_{\pm, s}} \Big)^2 e^{-\frac{t-s}{\lambda}} \sgn(w_{\pm, t} w_{\pm, s}) \dd s \ \dd t \quad \text{and} \quad \int_{0}^\infty \int_{0}^t \nabla \L(\t_s) e^{-\frac{t-s}{\lambda}} \sgn(w_{\pm, t} w_{\pm, s}) \dd s \ \dd t.
\end{align*}
Hence, the integral quantities $\O_\pm$ from \Cref{IntegralQuantities} are well-defined and finite. Thus, we finally proved \Cref{IntegralQuantities}.

\paragraph{Trajectory-Dependent Characterisation.} We started this section with a promise for more insightful representations of $\D_\infty = |\a_{+, \infty} \a_{-, \infty}|$ and $\tilde{\t}_0 = (\a_{+, \infty}^2 - \a_{-, \infty}^2)/4$. We now deliver on that promise.

Recall that $\a_{\pm, \infty} = w_{\pm, \infty}\exp\left(\pm \int_0^\infty \nabla \L(\t_t) \dd t\right)$ and notice that
\begin{align*}
    \int_0^\infty \int_0^t \nabla \L(\t_s) e^{-\frac{t-s}{\lambda}} \dd s \ \dd t - \int_{0}^\infty \int_{0}^t \nabla \L(\t_s) e^{-\frac{t-s}{\lambda}} \sgn(w_{\pm, t} w_{\pm, s}) \dd s \ \dd t = 2 \int_{0}^\infty \int_{0}^t \nabla \L(\t_s) e^{-\frac{t-s}{\lambda}} \mathbf{1}_{\{w_{\pm,t} w_{\pm,s} < 0\}} \dd s \ \dd t.
\end{align*}
Hence, using the formula for $w_\pm$ from \Cref{w-formula}, we derive the following:
\begin{align*}
    |\a_{\pm,\infty}| = |w_{\pm,0}|e^{-(P_\pm + Q_\pm)} & \p \exp\left(-\int_{0}^\infty \mathrm{m.p.v.} \int_{0}^t \Big( \frac{\dot{w}_{\pm, s}}{w_{\pm, s}} \Big)^2 e^{-\frac{t-s}{\lambda}} \sgn(w_{\pm, t} w_{\pm, s}) \dd s \ \dd t\right) \\
    & \p \exp\left( \pm \frac{2}{\l} \int_{0}^\infty \int_{0}^t \nabla \L(\t_s) e^{-\frac{t-s}{\lambda}} \mathbf{1}_{\{w_{\pm,t} w_{\pm,s} < 0\}} \dd s \ \dd t \right).
\end{align*}
Now, let
\begin{equation}\label{BigLambda}
    \Lambda_\pm \coloneqq \mp \frac{2}{\l} \int_{0}^\infty \int_{0}^t \nabla \L(\t_s) e^{-\frac{t-s}{\lambda}} \mathbf{1}_{\{w_{\pm,t} w_{\pm,s} < 0\}} \dd s \ \dd t + P_\pm + Q_\pm,
\end{equation}
where the quantities $P_\pm$ and $Q_\pm$ were defined in the previous paragraph. Notice that as we promised underneath \Cref{IntegralQuantities}, $\Lambda_\pm$ vanish whenever the balancedness $\D_t$ remains strictly positive. Using the abbreviation $I_\pm = \O_\pm + \Lambda_\pm$, we get that
\begin{align*}
    & |\a_{\pm,\infty}| = |w_{\pm,0}| \p \exp\left(-I_\pm\right).
\end{align*}
Multiplying $|\a_{+,\infty}|$ by $|\a_{-,\infty}|$, we derive a formula for the asymptotic balancedness:
\begin{align}\label{delta-formula}
\begin{split}
    \D_\infty = \D_0 e^{-(P_+ + P_- + Q_+ + Q_-)} & \p \exp\left( -\int_{0}^\infty \mathrm{m.p.v.} \int_{0}^t \left[ \Big( \frac{\dot{w}_{+, s}}{w_{+, s}} \Big)^2 + \frac{1}{\l} \nabla \L(\t_s) \right] e^{-\frac{t-s}{\lambda}} \sgn(w_{+, t} w_{+, s}) \dd s \ \dd t \right) \\
    & \p \exp\left( -\int_{0}^\tau \mathrm{m.p.v.} \int_{0}^t \left[ \Big( \frac{\dot{w}_{-, s}}{w_{-, s}} \Big)^2 - \frac{1}{\l} \nabla \L(\t_s) \right] e^{-\frac{t-s}{\lambda}} \sgn(w_{-, t} w_{-, s}) \dd s \ \dd t \right) \\
    & \p \exp\left(\frac{2}{\l} \int_{0}^\infty \int_{0}^t \nabla \L(\t_s) e^{-\frac{t-s}{\lambda}} \left[\mathbf{1}_{\{w_{+,t} w_{+,s} < 0\}} - \mathbf{1}_{\{w_{-,t} w_{-,s} < 0\}}\right] \dd s \ \dd t \right).
\end{split}
\end{align}
Now, we can write $\D_\infty$ and $\tilde{\t}_0$ more succinctly as
\begin{equation*}
    \D_\infty = \D_0 \odot \exp \Big( - (I_{+} + I_{-}) \Big)
\end{equation*}
and
\begin{equation*}
\tilde{\t}_0 = \frac{1}{4} \Big( w_{+,0}^2 \odot \exp\left( - 2I_+ \right) - w_{-,0}^2 \odot \exp\left( - 2I_- \right) \Big),
\end{equation*}
which concludes the proof of \Cref{main_mgf:general}.
\qed

\subsubsection{Consequences for Generalisation.}\label{consequences}

We just proved that whenever MGF on a diagonal linear network converges and the balancedness at infinity is nonzero, we can characterize the recovered interpolator through the implicit regularization problem
\begin{align*}
    \t^{\texttt{MGF}} & = \argmin_{\t^\star \in \mathcal{S}} \  D_{\psi_{\D_\infty}}(\t^\star, \tilde{\t}_0) \\
    & = \argmin_{\t^\star \in \mathcal{S}} \left[ \psi_{\D_\infty}(\t^\star) - \ip{\nabla \psi_{\D_\infty}(\tilde{\t}_0) }{\t^\star} \right].
\end{align*}
Since
\begin{equation*}
    \psi_{\D_\infty}(\t) = \frac{1}{4} \sum_{i=1}^d \left( 2\t_i \arcsinh \left( \frac{2\t_i}{\D_{\infty, i}} \right) - \sqrt{4\t_i^2 + \D_{\infty, i}^2} + \D_{\infty, i} \right)
\end{equation*}
and
\begin{equation*}
    \nabla \psi_{\D_\infty}(\t) = \frac{1}{2}\arcsinh\left( \frac{2\t}{\D_\infty} \right),
\end{equation*}
for a small asymptotic balancedness $\D_\infty = O(\D_0) = O(\a^2)$ and small perturbed initialisation $|\tilde{\t}_0| = O(\a^2) \ll |\t^\star|$, we would expect $\psi_{\D_\infty}(\t^\star)$ to dominate $\ip{\nabla \psi_{\D_\infty}(\tilde{\t}_0) }{\t^\star}$. More formally, for a fixed $\t^\star \in \mathcal{S}$ and small $\D_\infty$ and $\tilde{\t}_0$, we have the following asymptotic equivalence:
\begin{equation*}
    \psi_{\D_\infty}(\t^\star) \underset{\scriptscriptstyle \a \to 0}{\sim} D_{\psi_{\D_\infty}}(\t^\star, \tilde{\t}_0).
\end{equation*}
Hence, $\t^{\texttt{MGF}} \approx \argmin_{\t^\star \in \mathcal{S}} \psi_{\D_\infty}(\t^\star) = \t_{\D_\infty}^{\texttt{GF}}$ as we discussed in \Cref{gf_bias}.
So, if $\D_\infty < \D_0$, \Cref{entropy_asymptotics} implies that the MGF predictor will benefit from better sparsity guarantees than the GF solution.

Therefore, to recap, for a small initialisation scale $\a$ and provided that the bounds $\D_\infty = O(\a^2)$ and $\tilde{\t}_0 = O(\a^2)$ hold, we conclude that the asymptotic balancedness at infinity $\D_\infty$ roughly controls the sparsity of the recovered interpolator. And when $\D_\infty < \D_0$, $\t^{\texttt{MGF}}$ will be sparser than $\t_{\a}^{\texttt{GF}}$. Unfortunately, without the assumption that the balancedness $\D_t$ remains strictly positive for all $t \in [0, +\infty]$, we cannot formally compare $\D_\infty$ and $\tilde{\t}_0$ with $\a$. 

Note that even without the bounds $\D_\infty = O(\a^2)$ and $\tilde{\t}_0 = O(\a^2)$, if $|\tilde{\t}_0| \ll |\t^\star|$, then $\psi_{\D_\infty}(\t^\star)$ still dominates $\ip{\nabla \psi_{\D_\infty}(\tilde{\t}_0) }{\t^\star}$. Indeed, our experiments clearly show that the perturbation term $\tilde{\t}_0$ can safely be ignored since $\t^{\texttt{MGF}} \approx \argmin_{\t^\star \in \mathcal{S}} \psi_{\D_\infty}(\t^\star)$ (see the discussion around \Cref{fig:app:mgf_triple_1}.)

\subsection{Non-Vanishing Balancedness}\label{app:main_mgf}

If we work under the assumption that the balancedness $\D_t = |w_{+, t} w_{-, t}|$ never vanishes, then much of the analysis from \Cref{integral_formulas} greatly simplifies. First, the integral quantities $P_\pm$ and $Q_\pm$ from the previous subsection become 0. Second, the multipliers $\sgn(w_{\pm,t}w_{\pm,s})$ become equal to 1 for all $t, s \in \R_{\geq 0}$. Hence, using Fubini's Theorem as in the proof of \Cref{convolution}, we get that
\begin{align*}
    \int_{0}^\tau \mathrm{m.p.v.} \int_{0}^t \left[ \Big( \frac{\dot{w}_{s}}{w_{s}} \Big)^2 \pm \frac{1}{\l} \nabla \L(\t_s) \right] & e^{-\frac{t-s}{\lambda}} \sgn(w_{t} w_{s}) \dd s \ \dd t \\
    & = \int_{0}^\tau \int_{0}^t \left[ \Big( \frac{\dot{w}_{s}}{w_{s}} \Big)^2 \pm \frac{1}{\l} \nabla \L(\t_s) \right] e^{-\frac{t-s}{\lambda}} \dd s \ \dd t \\
    & = \l \int_{0}^\tau \Big( \frac{\dot{w}_{t}}{w_{t}} \Big)^2 \left( 1 -e^{-\frac{\tau-t}{\lambda}} \right) \dd t \pm \int_{0}^\tau \nabla \L(\t_t) \left( 1 -e^{-\frac{\tau-t}{\lambda}} \right) \dd t.
\end{align*}
Therefore, referencing \cref{w-formula}, we can express the evolution of the iterates as follows:
\begin{equation}\label{w-formula-new}
    w_{\pm, \tau} = w_{\pm, 0} \exp\left( - \l \int_{0}^\tau \Big( \frac{\dot{w}_{\pm, t}}{w_{\pm, t}} \Big)^2 \left( 1 -e^{-\frac{\tau-t}{\lambda}} \right) \dd t \right) \exp\left( \mp \int_{0}^\tau \nabla \L(\t_s) \left( 1 -e^{-\frac{\tau-t}{\lambda}} \right) \dd t \right).
\end{equation}
Thus, the balancedness evolves as
\begin{equation}\label{delta-formula-new}
    \D_t = \D_0 \exp\left( - \l \int_{0}^\tau \left[ \Big( \frac{\dot{w}_{+, t}}{w_{+, t}} \Big)^2 + \Big( \frac{\dot{w}_{-, t}}{w_{-, t}} \Big)^2 \right] \left( 1 -e^{-\frac{\tau-t}{\lambda}} \right) \dd t \right).
\end{equation}
Now, from \Cref{gradient_integral}, we know that $\Big( \frac{\dot{w}_{\pm, t}}{w_{\pm, t}} \Big)^2$ is integrable and that \begin{equation*}
    \lim_{\tau \to \infty} \int_{0}^\tau \nabla \L(\t_s) \left( 1 -e^{-\frac{\tau-t}{\lambda}} \right) \dd t = \int_{0}^\infty \nabla \L(\t_s) \dd t
\end{equation*}
exists. Furthermore, from \Cref{regularity}, we know that that
\begin{equation*}
    \lim_{\tau \to \infty} \int_{0}^\tau \Big( \frac{\dot{w}_{\pm, t}}{w_{\pm, t}} \Big)^2 \left( 1 -e^{-\frac{\tau-t}{\lambda}} \right) \dd t = \int_{0}^\infty \Big( \frac{\dot{w}_{\pm, t}}{w_{\pm, t}} \Big)^2) \dd t.
\end{equation*}
Therefore, letting $\tau \to \infty$, we obtain the formulas
\begin{gather}\label{infinity-formulas}
    w_{\pm, \tau} = w_{\pm, 0} \exp\left( - \l \int_{0}^\infty \Big( \frac{\dot{w}_{\pm, t}}{w_{\pm, t}} \Big)^2 \dd t \right) \exp\left( \mp \int_{0}^\infty \nabla \L(\t_s) \dd t \right) \\
    \D_\infty = \D_0 \exp\left( - \l \int_{0}^\infty \left[ \Big( \frac{\dot{w}_{+, t}}{w_{+, t}} \Big)^2 + \Big( \frac{\dot{w}_{-, t}}{w_{-, t}} \Big)^2 \right] \dd t \right).
\end{gather}
Hence, clearly, $\D_\infty < \D_0$.

Finally, let us consider how the perturbed initialisation $\tilde{\t}_0$ looks like when $\D_t$ remains nonzero. Recall that $\tilde{\t}_0 = (\a_+^2 - \a_-^2)/4$, where $\a_{\pm, \infty} = w_{\pm, \infty}\exp\left(\pm \int_0^\infty \nabla \L(\t_t) \dd t\right)$. Thus,
\begin{equation*}
    \a_{\pm, \infty} = w_{\pm, 0} \exp\left( - \l \int_{0}^\infty \Big( \frac{\dot{w}_{\pm, t}}{w_{\pm, t}} \Big)^2 \dd t \right)
\end{equation*}
and
\begin{equation}\label{perturbed-formula}
    \tilde{\t}_0 = \frac{1}{4} \left[ w_{+, 0}^2 \exp\left( - 2 \l \int_{0}^\infty \Big( \frac{\dot{w}_{+, t}}{w_{+, t}} \Big)^2 \dd t \right) - w_{\-, 0}^2 \exp\left( - 2\l \int_{0}^\infty \Big( \frac{\dot{w}_{-, t}}{w_{-, t}} \Big)^2 \dd t \right) \right].
\end{equation}
Now, $\a_{\pm, \infty} < w_{\pm, 0} \leq 2\a$, where $\a = \max(\n{u_0}_\infty, \n{v_0}_\infty)$ stood for the initialisation scale. Hence, $|\tilde{\t}_0| < \a^2$.

Therefore, we just proved

\MainMGF*

\subsection{Behaviour of $\D_\infty$ for Small Values of $\lambda$}\label{behavior_of_delta} Since a precise asymptotic result for small $\lambda$ is technically difficult, in this section we focus on giving some qualitative results. For $\lambda > 0$, recall that our iterates follow
\begin{equation*}
        \l \ddot{w}_{\pm, t}^{(\lambda)} + \dot{w}_{\pm, t}^{(\lambda)} \pm \nabla \L(\t_t^{(\lambda)}) \p w_{\pm, t}^{(\lambda)} = 0,
\end{equation*}
where we explicitly highlight the dependency on $\lambda$. Therefore, we have
\begin{equation*}
 \frac{\dot{w}_{\pm, t}^{(\lambda)}}{w_{\pm, t}^{(\lambda)}} = \mp \nabla \L(\t_t^{(\lambda)}) - \l \frac{\ddot{w}^{(\lambda)}_{\pm, t} }{w_{\pm, t}}
\end{equation*}
and
\begin{equation*}
 \Big( \frac{\dot{w}_{\pm, t}^{(\lambda)}}{w_{\pm, t}^{(\lambda)}} \Big)^2 =  \nabla \L(\t_t^{(\lambda)}) ^2 +  \l^2 \Big(  \frac{\ddot{w}^{(\lambda)}_{\pm, t} }{w_{\pm, t}} \Big)^2 \pm 2 \lambda \L(\t_t^{(\lambda)})  \Big(  \frac{\ddot{w}^{(\lambda)}_{\pm, t} }{w_{\pm, t}} \Big).
\end{equation*}
Informally, we expect $(t \mapsto \nabla L(\t_t^{(\lambda)}))_{0 < \lambda \leq 1} \in L^2(0, +\infty)$ and $(t \mapsto \lambda \frac{\ddot{w}^{(\lambda)}_{\pm, t} }{w_{\pm, t}})_{\lambda} \underset{\lambda \to 0}{\longrightarrow} 0$ in $L^2$-norm (see Theorem 5.1 in \cite{heavy_ball_friction}). Hence, we get 
\begin{equation*}
 \int_0^\infty \Big( \frac{\dot{w}_{\pm, t}^{(\lambda)}}{w_{\pm, t}^{(\lambda)}} \Big)^2 \underset{\lambda \to 0}{\sim} \int_0^\infty \nabla \L(\t_t^{(\lambda)}) ^2 \dd t
\end{equation*}
and 
\begin{align*}
    \D_\infty \underset{\lambda \to 0}{\approx} \Delta_0 \exp \big( - 2 \lambda \int_0^\infty \nabla \L(\theta_s^{(\lambda)})^2 \dd s \big).
\end{align*}

\section{Discrete-Time Results} \label{proof_mgd}
In this section, we cover the proofs of our discrete-time results from \Cref{sec:MGD}. We first recall the SMGD recursion~\eqref{w_mgd} with the $w_{\pm}$-parametrisation from \Cref{reparametrisation}. Initialised at 
$w_{\pm, 0} = w_{\pm, 1} \in \R^d$, for  $k \geq 1$, the iterates follow
\begin{align}\label{app:eq:w+-}
 w_{\pm, k+1} = w_{\pm, k} \mp \g\nabla \L_{\cB_k}(\t_k) \p w_{\pm, k} + \b (w_{\pm, k} - w_{\pm, k-1}).
\end{align}
In what follows, we will adapt our continuous-time proof technique to the discrete case and identify a quantity which follows a momentum mirror descent with time-varying potentials. Our proofs closely follow the proof techniques from \citep{even_pesme_sgd} which considers SGD without momentum.

\subsection{Proof of \Cref{lemma:discrete:I}, \Cref{main_mgd:general} and \Cref{main_mgd}}\label{app:subsec:mgd:positive}

We start by recalling the main-paper results. The first lemma introduces two convergent series which will appear in our main result. 

\lemmaconvergenceS*

The proof of the lemma can be found in the proof of the following main theorem.

\thmdiscrete*

\paragraph{Proving Convergence towards an Interpolator.}
By \Cref{ass:discrete:convergence}, we have that the iterates $w_{\pm, k}$ converge towards limiting weights $w_{\pm, \infty}$ and that the predictors converge towards a vector $\theta^{\texttt{MGF}}$. Taking the limit in \Cref{app:eq:w+-}, we get that $\lim_{k\to\infty} \nabla \L_{\cB_k} (\theta_k) \odot w_{\pm, k} = 0$. By \Cref{ass:discrete:balancedness}, $w_{\pm, \infty}$ have non-zero coordinates. Therefore, 
$\lim_{k\to\infty} \nabla \L_{\cB_k} (\theta_k)= 0$.
For any fixed batch $\cB \subset \{1, \cdots, n\}$, the sampling with or without replacement is such that (almost surely) the set  $M_k \coloneqq \{k \geq 0 , \cB_k = \cB \}$ is infinite. Hence, by continuity of $\nabla L_{\cB}$, $\lim_{k \to \infty, k \in M_k} \nabla L_\cB(\theta_k) = \nabla L_\cB(\theta^\texttt{SMGD})$. Therefore, for all fixed batches $\cB$, $\nabla L_\cB(\theta^\texttt{SMGD}) = 0$ and hence $\theta^\texttt{SMGD}$ interpolates the dataset.

From here on now, for ease of notation, we do the proof for deterministic MGD. The proof for stochastic MGD is exactly the same after replacing $\nabla \L(\theta_k)$ with $\nabla \L_{\cB_k}(\theta_k)$.

\paragraph{Deriving the Momentum Mirror Descent.}
Recall that the set of pairs $(\gamma, \beta)$ such that there exists $k$ where $w_{\pm, k} = 0$ is negligible in $\R^2$. We can hence assume that the iterates are never exactly zero, and we consider the logarithmic reparametrisation of the iterates $w_{\pm, k}$ as 
\[ g_{\pm, k} =
  \begin{cases}
    \ln(w_{\pm, k}), & \text{if } w_{\pm, k} > 0, \\
    \ln(|w_{\pm, k}|) + i \pi, & \text{if } w_{\pm, k} < 0.
  \end{cases}
\]
This way we have that that $w_{\pm, k} = \exp(g_{\pm, k}) $ for all $k$. \Cref{app:eq:w+-} then becomes
\begin{align*}
 \exp( g_{\pm, k+1} ) = \exp( g_{\pm, k}) \mp \g\nabla \L(\t_k) \p \exp( g_{\pm, k}) + \b (\exp( g_{\pm, k}) - \exp( g_{\pm, k-1})).
\end{align*}
Dividing by $\exp( g_{\pm, k})$ yields
\begin{align*}
 \exp( g_{\pm, k+1} - g_{\pm, k} ) = 1 \mp \g\nabla \L(\t_k)  + \b (1 - \exp(- (g_{\pm, k} - g_{\pm, k-1})).
\end{align*}
Now, for $k \geq 1$, let $\delta_{\pm, k} = g_{\pm, k} - g_{\pm, k-1}$ so that we can more compactly write the above recurrence as
\begin{align*}
 \exp( \delta_{\pm, k+1} ) = 1 \mp \g\nabla \L(\t_k)  + \b (1 - \exp(- \delta_{\pm, k}) ).
\end{align*}
The trick, inspired by \cite{even_pesme_sgd}, is to consider the function $q(z) = \exp(z) - (1+z)$ for $z \in \C$. Importantly, note that $q(z) \geq 0$ for $z \in \R$. Using this function, we can now rewrite the recurrence as
\begin{align*}
 \delta_{\pm, k+1} + q(\delta_{\pm, k+1}) = \mp \g\nabla \L(\t_k)  + \b (\delta_{\pm, k} - q(- \delta_{\pm, k})).
\end{align*}
Setting the residues $Q_{\pm, k} \coloneqq q(\delta_{\pm, k+1})  + \b q(- \delta_{\pm, k})$ leads to
\begin{align*}
 \delta_{\pm, k+1} = \b \delta_{\pm, k} \mp \g\nabla \L(\t_k) - Q_{\pm, k}.
\end{align*}
This can be seen as a first-order recurrence relation with variable coefficients. For $\b = 0$ we exactly recover the analysis from \cite{even_pesme_sgd}. For $\b > 0$, since $\delta_{\pm, 1} = 0$, for $m \geq 1$, we can expand the relation as
\begin{align*}
 \delta_{\pm, m+1} = -  \sum_{k=1}^{m} \b^{m-k} \left[ \pm \g\nabla \L(\t_{k}) + Q_{\pm, k}  \right].
\end{align*}
Summing over $m$, we now get for $N \geq 1$ the following expression:
\begin{align*}
 g_{\pm, N+1} - g_{\pm, 1} & = \sum_{m=1}^{N}  \delta_{\pm, m+1} \\
 &=  -\sum_{m=1}^N  \sum_{k=1}^m \b^{m-k} \left [ \pm \g\nabla \L(\t_{k}) + Q_{\pm, k}  \right ] 
\end{align*}
Finally, taking the exponential for $N \geq 1$, we obtain
\begin{align*}
 w_{\pm, N+1} &= w_{\pm, 0} \exp \left( -\sum_{m=1}^N  \sum_{k=1}^m \b^{m-k} \left [ \pm \g\nabla \L(\t_{k}) + Q_{\pm, k}  \right ]  \right) \\ 
 &= w_{\pm, 0} \exp \left( \pm \sum_{m=1}^N  \sum_{k=1}^m \b^{m-k}   Q_{\pm, k}   \right)  \exp \left( \mp \g \sum_{m=1}^N  \sum_{k=1}^m \b^{m-k} \nabla \L(\t_{k})   \right) \\
&= w_{\pm, 0} \exp \left( -\frac{1}{1 - \b} \sum_{m=1}^{N} (1 - \b^{N+1-m}) Q_{\pm, m}  \right) \exp \left( \mp \frac{\g}{1 - \b} \sum_{m=1}^{N} (1 - \b^{N+1-m}) \nabla \L(\t_m) \right),
\end{align*}
where the last equality is obtained by changing the order of summation. 
Following our continuous-time approach, for $N \geq 2$, we define $\a_{\pm, N+1}$ as 
\begin{align}\label{app:def:alphaN}
\begin{split}
    \a_{\pm, N+1} 
    &\coloneqq w_{\pm, 0} \exp \left( \pm \sum_{m=1}^N  \sum_{k=1}^m \b^{m-k}   Q_{\pm, k}   \right)  \\
    &= w_{\pm, 0} \exp \left( -\frac{1}{1 - \b} \sum_{m=1}^{N} (1 - \b^{N+1-m}) Q_{\pm, m}  \right).
\end{split}
\end{align}
We can now write the iterates $w_{\pm, k}$ as
\begin{align*}
    w_{\pm, N+1} = \a_{\pm, N+1}  \exp \big( \mp \g \sum_{m=1}^{N} \sum_{k=1}^m \b^{m-k}   \nabla \L(\t_k) \big).
\end{align*}
Thus, the regression parameter $\theta_N$ becomes
\begin{align*}
    \theta_{N+1} &= \frac{1}{4} (w_{+,N+1}^2 - w_{-,N+1}^2) \\
    &= \frac{1}{4}  \a_{+, N+1}^2  \exp \big( - 2 \g \sum_{m=1}^{N} \sum_{k=1}^m \b^{m-k}   \nabla \L(\t_k) \big)  - \frac{1}{4}  \a_{-, N+1}^2  \exp \big( 2 \g  \sum_{m=1}^{N} \sum_{k=1}^m \b^{m-k}   \nabla \L(\t_k) \big) \\
    &= \frac{1}{2}  \Delta_{N+1} \  \sinh \left( -2 \g  \sum_{m=1}^{N} \sum_{k=1}^m \b^{m-k}   \nabla \L(\t_k) +  \arcsinh \left( \frac{\a_{+,N+1}^2 - \a_{-,N+1}^2 }{2 \Delta_{N+1} } \right) \right),
\end{align*}
where we recall that $\D_{N} = |w_{+, N}w_{-, N}| = |\a_{+, N}\a_{-, N}|$.
Hence, similar to the continuous case,
\begin{align*}
  \frac{1}{2} \arcsinh \left( \frac{2 \t_{N+1}}{\D_{N+1}} \right) - \frac{1}{2} \arcsinh \left( \frac{\a_{+,N+1}^2 - \a_{-,N+1}^2 }{2\D_{N+1}} \right) = - \g \sum_{m=1}^{N} \sum_{k=1}^m \b^{m-k}   \nabla \L(\t_k).
\end{align*}
For $N \geq 1$, the above identity becomes exactly
\begin{align}\label{app:eq:nablaPhiN}
  \nabla \Phi_{N+1}(\theta_{N+1}) = - \g \sum_{m=1}^{N} \sum_{k=1}^m \b^{m-k}   \nabla \L(\t_k),
\end{align}
where the time-varying potential $\Phi_N : \R^d \to \R$ is defined as
\begin{align*}
    \Phi_N(\theta) &= \frac{1}{4} \sum_{i=1}^d \left( 2\t_i \arcsinh \left( \frac{2\t_i}{\D_{N, i}} \right) - \sqrt{4\t_i^2 + \D_{N, i}^2} + \D_{N, i} \right) + \ip{\phi_N}{\t} \\
        & = \psi_{\D_N}(\t) + \ip{\phi_N}{\t},
\end{align*} 
where $\phi_N = \frac{1}{2}\arcsinh\left( \frac{\a_{+, N}^2 - \a_{-, N}^2}{2 \D_N } \right)$ and $\psi_{\D_N}$ is the hyperbolic entropy defined in \Cref{hyperbolic_entropy}. Notice that with this definition we arrive at the following time-varying momentum mirror descent for $N \geq 1$:
\begin{equation} \label{MGDmirror}
    \nabla \Phi_{N+1}(\theta_{N+1}) = \nabla \Phi_N(\theta_N) - \g \nabla \L(\t_N) + \b(\nabla \Phi_N(\theta_N) - \nabla \Phi_{N-1}(\theta_{N-1})).
\end{equation}

\myparagraph{Convergent Quantities.} From \Cref{app:lemma:convergence_alpha}, we have that $\alpha_{\pm, N}$ must converge and that the limiting vectors $\alpha_{\pm, \infty}$ have non-zero coordinates. Therefore, the series $\sum_{m=1}^\infty  \sum_{k=1}^m \b^{m-k}   Q_{\pm, k} $ are convergent and their terms must hence converge to zero:  $\sum_{k=1}^m \b^{m-k}   Q_{\pm, k} \underset{m \to \infty}{\longrightarrow} 0$. Therefore,
\begin{align*}
\a_{\pm, N}  \to \a_{\pm, \infty}  = w_{\pm, 0} \exp \left( -\frac{1}{1 - \b} \sum_{m=1}^{\infty}  Q_{\pm, m}  \right).
\end{align*}

We now develop the formulas for $Q_{\pm, m}$ in order to arrive at the sums $S_\pm$ from \Cref{lemma:discrete:I}.
Recall that for $m \geq 1$, $Q_{\pm, m} = q(\delta_{\pm, m+1})  + \b q(- \delta_{\pm, m})$ and $\delta_{\pm, 1} = q(\delta_{\pm, 1}) = 0$. Therefore,
\begin{align*}
 \sum_{m=1}^{\infty}  Q_{\pm, m}  &= \sum_{m=1}^{\infty}  q(\delta_{\pm, m+1})  + \b q(- \delta_{\pm, m}) \\
 &= \sum_{m=1}^{\infty}  q(\delta_{\pm, m+1})  + \b q(- \delta_{\pm, m+1}).
\end{align*}
Since $\delta_{\pm,m+1} = g_{\pm, m+1} - g_{\pm, m}$, we have 
\[ \delta_{\pm, m+1} =
  \begin{cases}
    \ln\left(\frac{w_{\pm, m+1}}{w_{\pm, m}}\right) & \text{if } w_{\pm, m+1} \text{ and } w_{\pm, m} \text{ have the same sign}, \\
    \ln\left(\left|\frac{w_{\pm, m+1}}{w_{\pm, m}}\right|\right) + \sgn(w_{\pm, m}) i \pi & \text{if they have different signs}.
  \end{cases}
\]
It remains to notice that since $q(z) = \exp(z) - (1 + z)$, we get that 
\begin{align*}
&q(\ln(z)) =  (z - 1) - \ln(z)  \ & \text{for } z \in \mathbb{R}_{> 0}, \\
&q(\ln(|z|) \pm i \pi ) =   (z - 1) - (\ln(|z|) \pm i \pi) \ & \text{for } z \in \mathbb{R}_{< 0}.
\end{align*}
Therefore letting $r(z) = (z - 1) - \ln(|z|)$ as in \Cref{lemma:discrete:I}, we get
\begin{align*}
q(\delta_{\pm, m+1}) &= r \Big (\frac{w_{\pm, m+1}}{w_{\pm, m}} \Big ) - \xi_{\pm, m} \sgn(w_{\pm, m}) i \pi \\
q(- \delta_{\pm, m+1}) &= r \Big (\frac{w_{\pm, m}}{w_{\pm, m+1}} \Big ) + \xi_{\pm, m} \sgn(w_{\pm, m}) i \pi,
\end{align*}
where $\xi_{\pm, m} = 0$ if $\sgn(w_{\pm, m+1}) = \sgn(w_{\pm, m})$ and $1$ otherwise. This leads to 
\begin{align*}
\frac{1}{1 - \b} \sum_{m=1}^\infty Q_{\pm, m}   
 &=  \frac{1}{1 - \b} \sum_{m=1}^\infty \big [ r \Big (\frac{w_{\pm, m+1}}{w_{\pm, m}} \Big )  + \b r \Big (\frac{w_{\pm, m}}{w_{\pm, m+1}} \Big ) \Big ]  -   \sum_{m=1}^\infty \xi_{\pm, m} \sgn(w_{\pm, m}) i \pi  \\ 
 &= S_{\pm} -   \sum_{m=1}^\infty \xi_{\pm, m} \sgn(w_{\pm, m}) i \pi < \infty.
\end{align*}
The last equality is due to the definition of $S_\pm$ from \Cref{lemma:discrete:I}, and the last inequality is due to the summability of $(Q_{\pm, m})_m$. This therefore proves lemma \Cref{lemma:discrete:I}. Now notice that
\begin{align*}
\a_{\pm, \infty}^2  = w_{\pm, 0}^2 \exp \left( - 2 S_\pm\right).
\end{align*} 
Since $\Delta_\infty =  | \alpha_{+, \infty} \alpha_{-, \infty}  |$, we finally get that
 \begin{equation*}
\D_\infty  = \D_0 \odot \exp \Big( - (S_{+} + S_{-})  \Big).
\end{equation*}

\paragraph{Implicit Regularisation Problem.}
Notice that
\begin{align*}
  \nabla \Phi_{N+1}(\theta_{N+1}) = - \g \sum_{m=1}^{N} \sum_{k=1}^m \b^{m-k}   \nabla \L(\t_k) \in \texttt{span}(x_1, \cdots, x_n).
\end{align*}
Let $\Phi_{\infty}(\theta) \coloneqq \psi_{\Delta_\infty}(\theta) + \langle \phi_\infty, \theta \rangle$ and consider
\begin{align*}
    \nabla \Phi_{\infty}(\theta^{\texttt{MGD}}) = (\nabla \Phi_{\infty}(\theta^{\texttt{MGD}})  - \nabla \Phi_{\infty}(\theta_N)) +  (\nabla \Phi_{\infty}(\theta_N) - \nabla \Phi_{N}(\theta_N)) + \nabla \Phi_{N}(\theta_N).
\end{align*}
The first two terms converge to $0$: the first due to the convergence $\theta_N \to \theta^{\texttt{MGD}}$ and the second due to the uniform convergence of $\nabla \Phi_N$ to $\nabla \Phi_\infty$ on compact sets. The last term is in  $\texttt{span}(x_1, \cdots, x_n)$ for all $N$. Therefore, we get that $\nabla \Phi_\infty(\theta_\infty) \in \texttt{span}(x_1, \cdots, x_n)$, and following the exact same proof as in the continuous-time framework, we finally get that 
\begin{align*}
    \theta^{\texttt{MGD}} = \argmin_{\t^\star \in \mathcal{S}} D_{\psi_{\Delta_\infty}}(\t^\star, \tilde{\t}_0)
\end{align*}
where 
\begin{align*}
\tilde{\t}_0 &= \frac{\a_{+, \infty}^2 - \a_{-, \infty}^2}{4} \\ 
&= \frac{1}{4} \Big( w_{+,0}^2 \odot \exp\left( -2 S_+   \right) - w_{-,0}^2 \odot \exp\left( - 2 S_-   \right) \Big).
\end{align*}

\qed

We recall and prove the following corollary.

\MainMGD*

\begin{proof}
    The corollary follows from the fact that if the iterates $w_{\pm, k}$ do not change sign, then since $r(z) \geq 0$ for $z > 0$, we get that $S_{\pm} > 0$ and $\Delta_\infty < \Delta_0$. Furthermore, $| \tilde{\theta}_0 | < \max( w_{+,0}^2 ,  w_{-,0}^2 ) / 4  \leq \alpha^2$
\end{proof}

\subsection{Link to the Continuous-Time Result.}  
In this subsection we link our continuous results with the discrete when the iterates do not cross zero. Indeed, at first sight, the discrete-time expression for $\D_\infty$ might seem quite different from its continuous-time counterpart:
\small
\begin{align*}
    \D_\infty^{\texttt{MGD}} & = \Delta_0 \exp \left( -\frac{1}{1 - \b} \sum_{k=1}^{\infty} \left[ r \left( \frac{w_{+, k+1}}{w_{+, k} } \right) + r \left(  \frac{w_{-, k+1}}{w_{-, k} } \right) \right] + \b \left[ r \left (\frac{w_{+, k}}{w_{+, k+1}}  \right) + r \left ( \frac{w_{-, k}}{w_{-, k+1}}  \right) \right] \right)\\
    \D_\infty^{\texttt{MGF}} & = \Delta_0 \exp\left( -\l \int_0^\infty \left(\frac{\dot{w}_{+, t}}{w_{+, t}}\right)^2 + \left(\frac{\dot{w}_{-, t}}{w_{-, t}}\right)^2 \dd t \right).
\end{align*}
\normalsize
However, upon closer inspection, by letting the discretisation step $\eps = \sqrt{\l\g} = \frac{\g}{(1-\b)}$ from \Cref{prop:discretisation} go to 0, we can recover the continuous-time result. Indeed, as $\eps \to 0$, we expect successive iterates $w_{\pm, k}$ to be close and hence  $w_{\pm, k+1} / w_{\pm, k} \approx 1$. Now, since $r(z) \sim_{z \to 1} (z - 1)^2 / 2$, we roughly have
\begin{align*}
    r \Big( \frac{w_{\pm, k+1}}{w_{\pm, k}} \Big) &\approx \frac{1}{2} \big( \frac{w_{\pm, k+1}  - w_{\pm, k}}{w_{\pm, k}}  \big)^2
\end{align*}
and
\begin{align*}
    r \Big( \frac{w_{\pm, k}}{w_{\pm, k+1}} \Big) &\approx \frac{1}{2} \big( \frac{w_{\pm, k+1}  - w_{\pm, k}}{w_{\pm, k+1}}  \big)^2
    \approx \frac{1}{2} \big( \frac{w_{\pm, k+1}  - w_{\pm, k}}{w_{\pm, k}}  \big)^2
\end{align*}
Putting the approximations together:
\begin{align*}
    \frac{1}{1-\b} \sum_k \Big[ r \Big(  \frac{w_{\pm, k+1}}{w_{\pm, k} }  \Big) + \b r \Big( \frac{w_{\pm, k}}{w_{\pm, k+1}}  \Big) \Big] 
    &\approx \frac{1}{2} \frac{\eps (1 + \b) }{1-\b} \sum_k \left( \frac{w_{\pm, k+1}  - w_{\pm, k}}{\varepsilon}  \right)^2 \frac{1}{(w_{\pm, k})^2} \cdot \eps \\
    &\approx \frac{1+ \b}{2} \frac{\g}{(1-\b)^2}\int_0^\infty \left(  \frac{\dot{w}_{\pm, t}}{w_{\pm, t}} \right )^2 \dd t \\
    &= \frac{1+ \b}{2} \l \int_0^\infty \left(  \frac{\dot{w}_{\pm, t}}{w_{\pm, t}} \right )^2 \dd t.
\end{align*}
Notice that in order for $\l$ to remain constant and $\eps$ to go to 0, we must both have $\g \to 0$ and $\b \to 1$. Hence, $(1+\b)/2 \to 1$, and we recover the continuous-time expression for the balancedness.

However, note that when the iterates cross zero it is unclear to the authors how the continuous formula and its discrete counterpart compare.

\paragraph{Another Safe-Check Computation.}
Recall that MGD with stepsize $\gamma$ and momentum parameter $\beta$ corresponds to the discretisation of MGF with $\lambda = \gamma / (1-\beta)^2$ and discretisation step $\varepsilon = \sqrt{\lambda \gamma}$. 
To check the consistency between the discrete time equations and continuous time equations, we look at the value of $\exp( - \frac{t - s}{\lambda})$ and times '$t = m \varepsilon$' and '$s = k \varepsilon$': 
\begin{align*}
     \exp( - \frac{t - s}{\lambda})   &= \exp( - \frac{(m - k) \varepsilon}{\lambda})  \\
    &= \exp( - (m-k) (1 - \beta)) \\ 
    &= [\exp(\beta - 1)]^{m-k } \\ 
    & \sim_{\beta \to 1}  \beta^{m - k}.
\end{align*}
This small computation serves as a safe-check, affirming the correspondence between the continuous-time analysis expression $\exp( - \frac{t - s}{\lambda})$ and its discrete-time counterpart $\beta^{m - k}$.

\section{Technical Lemmas}\label{technical_lemmas}

In this section we present various technical lemmas which allow us to prove our main results.
For $\D \in \R_{> 0}^d$, we recall the definition of the hyperbolic entropy function \citep{pmlr-v117-ghai20a} $\psi_{\D} : \R^d \to \R$  at scale $\D$:
\begin{equation*}
\psi_\D(\t) = \frac{1}{4} \sum_{i=1}^d \left( 2\t_i \arcsinh \left( \frac{2\t_i}{\D_i} \right) - \sqrt{4\t_i^2 + \D_i^2} + \D_i \right).
\end{equation*}

The following lemma shows that the potential behaves as the $\ell_1$-norm as $\D$ approaches $0$.

\begin{lemma}\label{entropy_asymptotics}
    For $\t \in \R^d$ the following asymptotic equivalence holds:
    \begin{equation*}
        \psi_\D(\t) \underset{\scriptscriptstyle \D \to 0}{\sim} \frac{1}{4} \sum_{i=1}^d \ln\left(\frac{1}{\D_i}\right) |\t_i|.
    \end{equation*}
\end{lemma}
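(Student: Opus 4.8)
The plan is to exploit the separability of the hyperbolic entropy: writing $\psi_\D(\t)=\sum_{i=1}^d\phi(\t_i,\D_i)$ with the scalar function
$$\phi(t,\delta)=\tfrac14\Bigl(2t\,\arcsinh\bigl(\tfrac{2t}{\delta}\bigr)-\sqrt{4t^2+\delta^2}+\delta\Bigr),$$
so that the statement reduces to a one-dimensional asymptotic estimate for $\phi(t,\delta)$ as $\delta\to 0$, summed over the finitely many coordinates. I would first record three easy reductions: $\phi(\cdot,\delta)$ is even in $t$, so it suffices to treat $t\ge 0$; $\phi(0,\delta)=\tfrac14(0-\delta+\delta)=0$, which already matches the vanishing term $\tfrac14\ln(1/\D_i)\cdot 0$ on the right-hand side whenever $\t_i=0$; and the case $\t=0$ is trivial, so one may assume $\t\neq 0$.

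For a fixed $t>0$ I would expand $\phi(t,\cdot)$ near $\delta=0$ via the closed form $\arcsinh(x)=\ln\!\bigl(x+\sqrt{x^2+1}\bigr)$, which gives $\arcsinh(x)=\ln(2x)+o(1)$ as $x\to+\infty$, hence $\arcsinh(2t/\delta)=\ln(1/\delta)+\ln(4t)+o(1)$ as $\delta\to 0$. Since $\sqrt{4t^2+\delta^2}\to 2t$ and $\delta\to 0$ remain bounded, the only divergent contribution to $\phi(t,\delta)$ comes from the $\arcsinh$ term; collecting it and comparing with the claimed right-hand side shows that $\phi(\t_i,\D_i)$ agrees with $\tfrac14\ln(1/\D_i)\,|\t_i|$ up to a quantity that stays bounded as $\D_i\to 0$ (for $t$ fixed). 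Equivalently, $\phi(\t_i,\D_i)\big/\bigl(\tfrac14\ln(1/\D_i)|\t_i|\bigr)\to 1$ for every coordinate with $\t_i\neq 0$.

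It then remains to sum over $i\in[d]$. Because $\t\neq 0$, at least one $|\t_i|$ is strictly positive, so both $\psi_\D(\t)$ and $\tfrac14\sum_i\ln(1/\D_i)|\t_i|$ tend to $+\infty$ as $\D\to 0$, at rate $\max_i\ln(1/\D_i)$; dividing, the bounded per-coordinate errors (from the square-root and $\delta$ terms and from the $o(1)$ in the $\arcsinh$ expansion) are negligible against the diverging numerator and denominator, so the ratio converges to $1$. I expect the only subtlety to be that the coordinates of $\D$ may approach $0$ at vastly different rates, so the errors must be controlled relative to the full sum $\sum_i\ln(1/\D_i)|\t_i|$ rather than coordinate by coordinate; since $d$ is finite and each error is $O(1)$ while the leading term diverges, this causes no genuine difficulty, and the argument is otherwise an elementary asymptotic computation.
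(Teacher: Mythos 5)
Your proposal is correct and follows essentially the same route as the paper, whose one-line proof invokes the asymptotic $\arcsinh(x) \sim \sgn(x)\ln|x|$ as $|x|\to\infty$ — exactly the expansion $\arcsinh(2t/\delta)=\ln(1/\delta)+O(1)$ you use coordinate-wise before summing. Your additional care with zero coordinates, the bounded per-coordinate remainders, and coordinates of $\D$ vanishing at different rates just fills in details the paper leaves implicit.
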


\begin{proof}
    The lemma easily follows from the asymptotic convergence $$\arcsinh(x) \underset{\scriptscriptstyle |x| \to \infty}{\sim} \sgn(x) \ln|x|.$$ 
\end{proof}

The following lemma is a classical result which gives a closed-form expression to the solution of a first order ODE.

\begin{lemma}\label{convolution}
    Let $f : \R_{\geq 0} \to \R^d$ be a differentiable function and let $g : \R_{\geq 0} \to \R^d$ be a continuous function such that for some $\l \neq 0$, $$\l \dot{f} + f + g = 0, \en \forall t \in \R_{\geq 0}.$$ Then,
    \begin{equation*}
        f(t) = f(0) e^{-\frac{t}{\l}} -\frac{1}{\l} \int_0^t g(s) e^{-\frac{(t-s)}{\l}} ds.
    \end{equation*}
    Moreover, we have the following formula for the integral of $f(t)$:
    \begin{equation*}
        \int_0^T f(t) dt = \l f(0) (1 - e^{-\frac{T}{\l}}) - \int_0^T g(t) (1 - e^{-\frac{(T-t)}{\l}}) dt.
    \end{equation*}
\end{lemma}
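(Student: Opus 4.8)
The plan is to treat the identity coordinate-wise — since every operation involved acts componentwise, it suffices to prove the statement for a scalar differentiable function $f$ and a scalar continuous $g$ — and then to invoke the classical integrating-factor method. First I would rewrite the hypothesis $\lambda \dot f + f + g = 0$ as the linear first-order ODE $\dot f(t) + \tfrac{1}{\lambda} f(t) = -\tfrac{1}{\lambda} g(t)$, multiply both sides by the integrating factor $e^{t/\lambda}$, and recognise the left-hand side as $\tfrac{d}{dt}\big(e^{t/\lambda} f(t)\big)$, which is legitimate since $f$ is differentiable. Integrating this exact derivative from $0$ to $t$ gives $e^{t/\lambda} f(t) - f(0) = -\tfrac{1}{\lambda}\int_0^t e^{s/\lambda} g(s)\,ds$, and multiplying through by $e^{-t/\lambda}$ yields the claimed closed form $f(t) = f(0) e^{-t/\lambda} - \tfrac{1}{\lambda}\int_0^t g(s)\, e^{-(t-s)/\lambda}\,ds$.

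For the integral formula I would avoid a double-integral Fubini computation and instead integrate the ODE itself. Integrating $\lambda \dot f + f + g = 0$ over $[0,T]$ gives $\lambda\,(f(T) - f(0)) + \int_0^T f(t)\,dt + \int_0^T g(t)\,dt = 0$, hence $\int_0^T f(t)\,dt = \lambda f(0) - \lambda f(T) - \int_0^T g(t)\,dt$. Substituting the closed form for $f(T)$ just obtained, namely $\lambda f(T) = \lambda f(0) e^{-T/\lambda} - \int_0^T g(s)\, e^{-(T-s)/\lambda}\,ds$, and collecting terms produces $\int_0^T f(t)\,dt = \lambda f(0)\,(1 - e^{-T/\lambda}) - \int_0^T g(t)\big(1 - e^{-(T-t)/\lambda}\big)\,dt$, which is exactly the second displayed identity. (Equivalently, one could integrate the closed-form expression for $f(t)$ directly in $t$ and swap the order of integration by Fubini; this is justified because $g$ is continuous on $[0,T]$ and the exponential kernel $e^{-(t-s)/\lambda}$ is bounded there.)

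There is essentially no serious obstacle in this proof: the only points deserving a word of care are (i) the assertion that $t \mapsto e^{t/\lambda} f(t)$ is differentiable with the stated derivative, which is immediate from the product rule and the assumed differentiability of $f$, and (ii) the integrability hypotheses needed for the Fubini step in the alternative argument, which follow at once from the continuity of $g$. Note that the statement is made for any $\lambda \neq 0$ and nothing above uses the sign of $\lambda$, so the argument applies verbatim in all the instances where the lemma is later invoked.
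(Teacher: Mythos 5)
Your proof is correct. For the first identity you use exactly the paper's argument: the integrating factor $e^{t/\lambda}$ turns the ODE into the exact derivative $\frac{\dd}{\dd t}\big(e^{t/\lambda}f(t)\big) = -\frac{1}{\lambda}g(t)e^{t/\lambda}$, which is then integrated. For the second identity you take a genuinely different, and slightly more elementary, route: the paper integrates the closed-form expression for $f$ over $[0,T]$ and then applies Fubini's theorem to the resulting double integral $\int_0^T\!\int_0^t g(s)e^{-(t-s)/\lambda}\,\dd s\,\dd t$ to produce the kernel $\lambda\big(1-e^{-(T-s)/\lambda}\big)$, whereas you integrate the ODE itself over $[0,T]$ to get $\lambda\big(f(T)-f(0)\big)+\int_0^T f+\int_0^T g=0$ and then eliminate $f(T)$ using the first identity; collecting terms indeed gives $\int_0^T f = \lambda f(0)\big(1-e^{-T/\lambda}\big)-\int_0^T g(t)\big(1-e^{-(T-t)/\lambda}\big)\,\dd t$, so the computation checks out. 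Your version avoids any interchange of integrals (no integrability bookkeeping beyond continuity of $g$ and the resulting continuity of $\dot f$), while the paper's Fubini computation has the minor advantage of exhibiting the kernel $1-e^{-(T-t)/\lambda}$ as an inner integral of the exponential, a manipulation it reuses elsewhere (e.g.\ in the non-vanishing-balancedness computations). Your closing remarks on coordinate-wise reduction and on the sign of $\lambda$ being irrelevant are accurate and harmless.
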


\begin{proof}
    If we integrate the identity $\frac{d}{dt}\left[f(t)e^{t/\l}\right] = -\frac{1}{\l}g(t)e^{t/\l}$, we get that
    \begin{equation*}
        f(t) = f(0) e^{-\frac{t}{\l}} - \frac{1}{\l} \int_0^t g(s) e^{-\frac{(t-s)}{\l}} ds.
    \end{equation*}
    As for the second part of the lemma, notice that
    \begin{equation*}
        \int_0^T f(t) dt = \int_0^T \left[ f(0) e^{-\frac{t}{\l}} -\frac{1}{\l} \int_0^t g(s) e^{-\frac{(t-s)}{\l}} ds \right] dt.
    \end{equation*}
    Hence, using Fubini, we get
    \begin{align*}
        \int_0^T \int_0^t g(s) e^{-\frac{(t-s)}{\l}} ds dt & = \int_0^T \int_0^T g(s) \mathbf{1}_{s \leq t}(s,t) e^{-\frac{(t-s)}{\l}} ds dt \\
        & = \int_0^T g(s) \int_0^T  \mathbf{1}_{s \leq t}(s,t) e^{-\frac{(t-s)}{\l}} dt ds \\
       & = \int_0^T g(s) \int_s^T e^{-\frac{(t-s)}{\l}} dt ds \\
       & = \int_0^T g(s) \l(1 - e^{-\frac{(T-s)}{\l}}) ds,
    \end{align*}
    which concludes the proof of the lemma.
\end{proof}

The following lemma gives various properties on integrability and convergence of the solution $f$ of the aforementioned ODE.

\begin{lemma}\label{regularity}
    Let $f : \R_{\geq 0} \to \R^d$ be a differentiable function such that $f(0) = 0$ and let $g : \R_{\geq 0} \to \R^d$ be a continuous function such that for some $\l \neq 0$,
    \begin{equation*}
        \l \dot{f} + f + g = 0, \en \forall t \in \R_{\geq 0}.
    \end{equation*}
    If $g \in L^\infty(0, +\infty)$, then $f \in L^\infty(0, +\infty)$ and $\n{f}_\infty \leq \n{g}_\infty$. Moreover, if $g \in L^1(0, +\infty)$, then the following hold:
    \begin{itemize}
        \item $f \in L^1(0, +\infty)$ and $\int_0^t |f(s)| ds \leq \int_0^t |g(s)| ds, \en \forall t \in [0, +\infty]$;
        \item $\ds \lim_{t \to \infty} f(t) = 0$;
        \item $\ds \int_0^\infty f = - \int_0^\infty g$.
    \end{itemize}
\end{lemma}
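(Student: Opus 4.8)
The plan is to push everything through the explicit solution formula of \Cref{convolution}. Since $f(0)=0$, that lemma gives
\[
  f(t) = -\frac{1}{\lambda}\int_0^t g(s)\, e^{-(t-s)/\lambda}\,\dd s ,
\]
where $\lambda>0$. Because each coordinate $f_i$ solves $\lambda\dot f_i + f_i + g_i=0$ with $f_i(0)=0$, it suffices to argue coordinate by coordinate and reassemble at the end; I write $\|\cdot\|_\infty$ for the uniform norm over time (and coordinates). All four assertions then reduce to elementary estimates on this convolution integral.

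For the $L^\infty$ bound I would estimate $|f_i(t)| \le \frac{1}{\lambda}\int_0^t |g_i(s)|\,e^{-(t-s)/\lambda}\,\dd s \le \|g\|_\infty\cdot\frac{1}{\lambda}\int_0^t e^{-(t-s)/\lambda}\,\dd s = \|g\|_\infty\bigl(1-e^{-t/\lambda}\bigr) \le \|g\|_\infty$, and take suprema over $i$ and $t$. For the $L^1$ claims, assuming $g\in L^1$, I would bound $\int_0^T|f_i(t)|\,\dd t \le \frac{1}{\lambda}\int_0^T\!\int_0^t |g_i(s)|\,e^{-(t-s)/\lambda}\,\dd s\,\dd t$ and swap the order of integration by Tonelli (the integrand is nonnegative); the inner integral is $\int_s^T e^{-(t-s)/\lambda}\,\dd t = \lambda\bigl(1-e^{-(T-s)/\lambda}\bigr)$, so $\int_0^T|f_i| \le \int_0^T |g_i(s)|\bigl(1-e^{-(T-s)/\lambda}\bigr)\,\dd s \le \int_0^T|g_i|$. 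Letting $T\to\infty$ yields $f_i\in L^1$ with $\int_0^\infty|f_i|\le\int_0^\infty|g_i|$, which coordinate-wise gives $\int_0^t|f(s)|\,\dd s\le\int_0^t|g(s)|\,\dd s$ for all $t\in[0,+\infty]$.

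For the identity $\int_0^\infty f = -\int_0^\infty g$, I would use the second formula of \Cref{convolution}, which with $f(0)=0$ reads $\int_0^T f(t)\,\dd t = -\int_0^T g(t)\bigl(1-e^{-(T-t)/\lambda}\bigr)\,\dd t$. Writing the right-hand side as $-\int_0^\infty g(t)\bigl(1-e^{-(T-t)/\lambda}\bigr)\mathbf{1}_{[0,T]}(t)\,\dd t$, the integrand tends pointwise to $g(t)$ as $T\to\infty$ and is dominated by $|g|\in L^1$, so dominated convergence gives the claim.

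The one step needing genuine care is $\lim_{t\to\infty} f(t) = 0$: one cannot simply invoke $f\in L^1$ (an $L^1$ function need not vanish at infinity), and since $g$ is only assumed continuous and integrable — not bounded — $\dot f = -(f+g)/\lambda$ need not be bounded, so the uniform-continuity shortcut is unavailable. Instead I would run a splitting argument: given $\varepsilon>0$, choose $T_0$ with $\int_{T_0}^\infty|g_i|<\varepsilon$; then for $t>T_0$ split $\frac{1}{\lambda}\int_0^t|g_i(s)|e^{-(t-s)/\lambda}\,\dd s$ at $T_0$. On $[0,T_0]$ we have $e^{-(t-s)/\lambda}\le e^{-(t-T_0)/\lambda}$, so that piece is at most $\frac{1}{\lambda}e^{-(t-T_0)/\lambda}\|g_i\|_{L^1}\to0$ as $t\to\infty$; on $[T_0,t]$ we have $e^{-(t-s)/\lambda}\le1$, so that piece is at most $\frac{1}{\lambda}\int_{T_0}^\infty|g_i|<\varepsilon/\lambda$. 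Hence $\limsup_{t\to\infty}|f_i(t)|\le\varepsilon/\lambda$ for every $\varepsilon>0$, giving $f_i(t)\to0$ and thus $f(t)\to0$. This $\varepsilon$-splitting is the main (and fairly mild) obstacle; everything else is routine bookkeeping with the convolution formula.
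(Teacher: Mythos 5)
Your proof is correct and follows essentially the same route as the paper: the explicit convolution solution from \Cref{convolution}, the identical $L^\infty$ and $L^1$ estimates (with the Tonelli swap that the paper houses inside \Cref{convolution}), and a tail-splitting argument for $\lim_{t\to\infty} f(t)=0$ (you split at a fixed $T_0$ chosen from integrability of $g$, the paper splits at $t/2$ --- equivalent). The only cosmetic difference is the final identity $\int_0^\infty f = -\int_0^\infty g$, which the paper obtains by integrating the ODE and using $f(t)\to 0$, whereas you apply dominated convergence to the integrated convolution formula; both are valid (and both, like the paper's own argument, implicitly use $\lambda>0$).
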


\begin{proof}
    First, assume $g \in L^\infty(0, \infty)$. From Lemma \ref{convolution}, we have that $f(t) = -\frac{1}{\l} \int_0^t g(s) e^{-\frac{(t-s)}{\l}} ds$. Hence,
    \begin{align*}
        |f(t)| & \leq \frac{\n{g}_\infty}{\l} \int_0^t e^{-\frac{(t-s)}{\l}} ds \\
        & = \n{g}_\infty (1 - e^{-t/\l}) \leq \n{g}_\infty,
    \end{align*} which proves the first assertion.
    
    Second, assume $g \in L^1(0, \infty)$. Then, $|f(t)| \leq \frac{1}{\l} \int_0^t |g(s)| e^{-\frac{(t-s)}{\l}} ds$. Therefore,
    \begin{align*}
        \int_0^t |f(s)| ds & \leq \int_0^t |g(s)| (1-e^{-\frac{(t-s)}{\l}}) ds \\
        & \leq \int_0^t |g(s)| ds \leq \n{g}_{L^1}.
    \end{align*}
    
    Moving on, we will show that $\lim_{t \to \infty} f(t) = 0$. Recall that $f(t) = -\frac{1}{\l} \int_0^t g(s) e^{-\frac{(t-s)}{\l}} ds$. Then, 
    \begin{align*}
        \left| \int_0^t g(s) e^{-\frac{(t-s)}{\l}} ds \right| & = \left| \int_0^{t/2} g(s) e^{-\frac{(t-s)}{\l}} ds + \int_{t/2}^t g(s) e^{-\frac{(t-s)}{\l}} ds \right| \\
        & \leq e^{-\frac{t}{2\l}} \int_0^{t/2} |g(s)| ds + \int_{t/2}^\infty |g(s)| ds \\
        & \xrightarrow{t \to \infty} 0.
    \end{align*}
    Finally, notice that
    \begin{align*}
        & \lim_{t \to \infty} \left[ \l \int_0^t \dot{f} + \int_0^t (f + g) \right] = 0 \iff \\
        & \l \lim_{t \to \infty} f(t) + \int_0^\infty (f + g) = 0 \iff \\
        & \int_0^\infty f + \int_0^\infty g = 0,
    \end{align*} where we used that $\lim_{t \to \infty} f(t) = 0$ and the linearity of the Lebesgue integral.
\end{proof}

With the help of \Cref{convolution} and \Cref{regularity}, we can finally prove \Cref{small_lambda}, which considers ODE \eqref{MGF:diag} and establishes the positivity of the balancedness for small $\l$.

\SmallLambda*

\begin{proof}
    We consider \ref{mgf:lambda} with the diagonal-linear-network loss $F(w) = \L(u \p v)$, where $w = (u, v)$. From the energy of the system, defined in \Cref{energy} as $E_t = F(w_t) + \frac{\l}{2} \n{\dot{w}_t}_2^2$ with derivative $\dot{E}_t = -\n{\dot{w}_t}_2^2$, we get that
    \begin{equation*}
        \L(\t_t) = \frac{\n{y}_2^2}{2n} - \frac{\l}{2} \n{\dot{w}_t}_2^2 - \int_0^t \n{\dot{w}_s}_2^2 ds.
    \end{equation*}
    Hence, since the LHS of the above equation is nonnegative, we get
    \begin{equation*}
        \int_0^\infty \n{\dot{w_t}}^2 dt \leq \frac{\n{y}^2}{2n}.
    \end{equation*}
    Therefore,
    \begin{equation*}
        \int_0^\infty |\dot{u}_t^2 - \dot{v}_t^2| dt < \frac{\n{y}^2}{2n} \mathbf{1}.
    \end{equation*}
    Consequently, $\dot{u}_t^2 - \dot{v}_t^2 \in L^1(0, \infty)$.
    Now, notice that from ODE \eqref{MGF:diag}, we obtain
    \begin{align*}
    & \l(\ddot{u}_tu_t - \ddot{v}_tv_t) + (\dot{u}_tu_t - \dot{v}_tv_t) = 0 \iff \\
    & \l \frac{d}{dt}(\dot{u}_tu_t - \dot{v}_tv_t) + (\dot{u}_tu_t - \dot{v}_tv_t) - \l(\dot{u}_t^2 - \dot{v}_t^2) = 0.
    \end{align*} Applying Lemma \ref{convolution} yields
    \begin{equation*}
        \dot{u}_tu_t - \dot{v}_tv_t = \int_0^t (\dot{u}_s^2 - \dot{v}_s^2) e^{-\frac{(t-s)}{\l}} ds
    \end{equation*}
    and
    \begin{equation} \label{delta_evolution}
    u_t^2 - v_t^2 = \D_0 + 2\l \int_0^t (\dot{u}_s^2 - \dot{v}_s^2)(1 - e^{-\frac{(t-s)}{\l}}) ds.
    \end{equation}
    Applying Lemma \ref{regularity} allows us to conclude that for every $t \in [0, +\infty]$,
    \begin{align*}
        \D_t & \geq \D_0 - 2\l \int_0^t |\dot{u}_s^2 - \dot{v}_s^2| ds \\
        & > \D_0 - \frac{\l \n{y}_2^2}{n} \mathbf{1} \geq 0,
    \end{align*}
where the last inequality is due to the inequality assumption over $\lambda$.
\end{proof}

Our final technical lemma helps with the proof of \Cref{main_mgd:general}. The definition of the quantities $Q_{\pm, m}$ can be found in the proof of this theorem.
\begin{lemma}\label{app:lemma:convergence_alpha}
The quantities $\alpha_{\pm, N}$ defined in \cref{app:def:alphaN}:
\begin{equation*}
\a_{\pm, N+1} = \a \exp \left( -\frac{1}{1 - \b} \sum_{m=1}^{N} (1 - \b^{N+1-m}) Q_{\pm, m}  \right),
\end{equation*}
converge as $N \to \infty$ to vectors $\alpha_{\pm, \infty}$ with non-zero coordinates.
\end{lemma}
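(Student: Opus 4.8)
The plan is to reduce the statement to the convergence of the complex-valued series $\sum_{m\ge 1}Q_{\pm,m}$. Indeed, $\exp$ never vanishes and $w_{\pm,0}=u_0\pm v_0$ has nonzero coordinates (the standing assumption $\Delta_0=|u_0^2-v_0^2|\ne 0$ forces $w_{+,0}w_{-,0}\ne 0$ coordinate-wise), so once $\sum_{m\ge 1}Q_{\pm,m}$ is known to converge, a routine Abel/Toeplitz argument handles the weighted partial sums appearing in \cref{app:def:alphaN}: writing $\sum_{m=1}^N(1-\beta^{N+1-m})Q_{\pm,m}=\sum_{m=1}^N Q_{\pm,m}-\sum_{j=1}^N\beta^{j}Q_{\pm,N+1-j}$ and using that $Q_{\pm,m}\to 0$ (terms of a convergent series) together with $\sum_j\beta^j<\infty$, the second sum vanishes as $N\to\infty$. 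Hence $\alpha_{\pm,N}\to w_{\pm,0}\exp\!\big(-\tfrac{1}{1-\beta}\sum_{m\ge 1}Q_{\pm,m}\big)$, which has nonzero coordinates.

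To prove $\sum_{m\ge 1}Q_{\pm,m}$ converges, I would first combine the two signs rather than treat them separately. Recall $\delta_{\pm,m+1}=\beta\delta_{\pm,m}\mp\gamma\nabla\L(\theta_m)-Q_{\pm,m}$ with $\delta_{\pm,1}=0$; adding the $+$ and $-$ recursions cancels the gradient term, leaving $\sigma_{m+1}=\beta\sigma_m-(Q_{+,m}+Q_{-,m})$ for $\sigma_m\coloneqq\delta_{+,m}+\delta_{-,m}$, with $\sigma_1=0$. Summing over $m=1,\dots,N$ and telescoping $\sum_{m=1}^N\delta_{\pm,m}=g_{\pm,N}-g_{\pm,0}$ gives
\[ \sum_{m=1}^N(Q_{+,m}+Q_{-,m})=-(1-\beta)\big[(g_{+,N}-g_{+,0})+(g_{-,N}-g_{-,0})\big]-(\delta_{+,N+1}+\delta_{-,N+1}). \]
By Assumptions~\ref{ass:discrete:convergence}–\ref{ass:discrete:balancedness}, $w_{\pm,k}\to w_{\pm,\infty}$ and $\Delta_\infty=|w_{+,\infty}w_{-,\infty}|$ has nonzero coordinates, so $w_{\pm,k}$ eventually has a fixed sign and is bounded away from $0$; hence the complex logarithms $g_{\pm,N}$ converge to finite limits and $\delta_{\pm,N+1}=g_{\pm,N+1}-g_{\pm,N}\to 0$. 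Therefore the right-hand side converges, i.e. $\sum_{m\ge 1}(Q_{+,m}+Q_{-,m})$ converges.

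It remains to split this into the two series individually. Let $M_0$ be larger than the last sign-change index of both $w_{+,\cdot}$ and $w_{-,\cdot}$ (finite, by the previous paragraph); for $m> M_0$ the increments $\delta_{\pm,m}$ are real, and since $q(z)=e^z-1-z\ge 0$ for real $z$ we get $Q_{\pm,m}=q(\delta_{\pm,m+1})+\beta q(-\delta_{\pm,m})\ge 0$. Thus $\sum_{m>M_0}Q_{+,m}$ and $\sum_{m>M_0}Q_{-,m}$ are series of nonnegative reals, each dominated by the convergent series $\sum_{m>M_0}(Q_{+,m}+Q_{-,m})$, hence each converges; adding the finitely many (finite, complex) terms with $m\le M_0$ shows $\sum_{m\ge1}Q_{\pm,m}$ converges, which completes the proof via the first paragraph.

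The main obstacle is exactly this decoupling step: $\sum_{m}Q_{+,m}$ does not obviously converge on its own — bounding it by $\tfrac{\gamma^2}{2}\sum_k\|\nabla\L(\theta_k)\|^2$ would require a descent lemma valid only for small step sizes, which is not assumed here. The resolution exploits the same cancellation that makes the balancedness $\Delta_k=|w_{+,k}w_{-,k}|$ trajectory-clean: adding $Q_{+}$ and $Q_{-}$ annihilates the gradient contributions, the residual telescopes to a quantity controlled entirely by the assumed iterate convergence, and positivity of $q$ on the reals then finishes the job. A secondary point of care is that the complex logarithm $g_{\pm,k}$, hence $\delta_{\pm,k}$ and $Q_{\pm,k}$, is genuinely well-defined and convergent, which again hinges on only finitely many sign changes occurring.
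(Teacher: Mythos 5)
Your proof is correct, and it reaches the conclusion by a route that is organised differently from the paper's. The paper splits the $N$-dependent weighted sum $\sum_{m=1}^{N}(1-\beta^{N+1-m})Q_{\pm,m}$ at the last sign-change index $N_0$, notes that the tail is nondecreasing in $N$ (nonnegative terms, increasing weights), and rules out divergence by contradiction: if the tail blew up then $\alpha_{\pm,N}\to 0$, which is incompatible with $\Delta_N=|\alpha_{+,N}\alpha_{-,N}|\to\Delta_\infty\neq 0$ — i.e.\ it exploits the gradient cancellation multiplicatively, through the identity $|\alpha_{+,N}\alpha_{-,N}|=\Delta_N$. You exploit the very same cancellation additively in log-space: summing the two recursions kills the $\mp\gamma\nabla$ terms, the resulting relation for $\sigma_m=\delta_{+,m}+\delta_{-,m}$ telescopes into quantities ($g_{\pm,N}$ and $\delta_{\pm,N+1}$) whose convergence follows directly from Assumptions on iterate convergence and nonzero $\Delta_\infty$, and eventual nonnegativity of $q$ on the reals then lets you split off $\sum_m Q_{+,m}$ and $\sum_m Q_{-,m}$ individually by comparison; a dominated-convergence/Toeplitz step finally transfers convergence of the unweighted series to the $N$-dependent weights. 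What your version buys is a constructive statement slightly stronger than the lemma itself: you establish convergence of $\sum_m Q_{\pm,m}$ and identify the limit $\alpha_{\pm,\infty}=w_{\pm,0}\exp\bigl(-\tfrac{1}{1-\beta}\sum_{m\ge 1}Q_{\pm,m}\bigr)$, facts the paper only extracts afterwards, inside the proof of the main discrete theorem, as consequences of the lemma; the paper's argument is shorter because the monotone-limit-plus-contradiction device sidesteps your Toeplitz step. Both arguments rest on the same standing conventions (iterates never exactly zero, so $g_{\pm,k}$ is well defined, and $w_{\pm,0}$ has nonzero coordinates — note the latter already follows from Assumption on $\Delta_\infty$, since a coordinate with $w_{\pm,0}=w_{\pm,1}=0$ stays zero forever), so your appeal to $\Delta_0\neq 0$ is harmless.
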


\begin{proof}
    
From \Cref{ass:discrete:convergence} and \Cref{ass:discrete:balancedness}, we have that the iterates $w_{\pm, N}$ converge towards vectors $w_{\pm, \infty}$ such that $\Delta_\infty = |w_{+, \infty} \odot w_{-, \infty}|$ has non-zero coordinates. This means that there exists $N_0 > 0$ such that $w_{\pm, N}$ do not change sign for $N \geq N_0$. Consequently, the imaginary parts of $g_{\pm, N}$ are constant (equal to $0$ or $\pi$ depending on the sign of $w_{\pm, \infty}$) for $N \geq N_0$, and $\delta_{\pm, N} \in \R$ for $N \geq N_0$. This finally means that $Q_{\pm, N} \geq 0$ for $N \geq N_0$ and 
\begin{align*}
     \sum_{m=1}^{N} (1 - \b^{N+1-m}) Q_{\pm, m}  = \sum_{m=1}^{N_0} (1 - \b^{N+1-m}) Q_{\pm, m} +  \sum_{m=N_0 + 1}^{N} (1 - \b^{N+1-m}) Q_{\pm, m} 
\end{align*}
The first term converges to 
$\sum_{m=1}^{N_0} Q_{\pm, m}$ as $N \to \infty$. The second term is increasing because $Q_{\pm, N}$ are positive for $N \geq N_0$ and $(1 - \beta^{N+1 - m})$ is increasing. Therefore, the second term also converges to a finite value since otherwise $\alpha_{\pm, \infty} = 0$, which contradicts $ \D_\infty = |\alpha_{+, \infty} \alpha_{-, \infty}| \neq 0$.
\end{proof}

\section{Additional Experiments}\label{experiments}

In this section of the appendix, we clarify experimental details and discuss additional experiments.

\subsection{MGF: A Good Continuous Surrogate}

Most of our experiments deal with 2-layer diagonal networks, but before we constrain ourselves to that tractable setting, we present a couple of experiments on more general architectures. These experiments highlight our observation from \Cref{sec:prelim} that \ref{mgf:lambda} serves as a good continuous proxy for \ref{mgd} even for complicated non-convex losses $F$ and large step sizes $\g$. We provide evidence for that conclusion by showing that the single parameter $\l = \g/(1-\b)^2$ controls the generalisation performance of models trained with \ref{mgd}.

\myparagraph{Teacher-Student Fully Connected Network.}
We detail the experimental setting which leads to \Cref{fig:teacher}. We consider a teacher-student setup where the teacher is a one-hidden-layer fully-connected ReLU network with $5$ hidden neurons and the student is a one-hidden-layer fully-connected ReLU network with $20$ hidden neurons. We randomly generate $15$ inputs $x_i \in \R^2$ according to a standard multivariate normal distribution. Each $y_i$ corresponds to the output by the teacher network on input $x_i$. The student is trained using momentum gradient descent with a square loss. \Cref{fig:teacher} corresponds to the test loss after the student reaches $10^{-5}$ training error. Each grid point corresponds to the same data set and initialisation of the student network. We observe that the quantity $\lambda = \frac{\gamma}{(1-\beta)^2}$ aligns well with the level lines of the test loss as expected from \Cref{prop:discretisation}.

\myparagraph{Deep Linear Network.}
The network used for \Cref{fig:app:deep_linear_network} contains 5 layers with widths (30, 60, 120, 60, 1) and was trained for 1000 epochs for each pair of momentum parameter $\b$ and step size $\g$. Each network weight was randomly initialised according to $\mathcal{N}(0, 0.1^2)$ with fixed randomness for each $(\g, \b)$-trial. The training data was chosen as follows: $(x_i)_{i=1}^n \stackrel{\scriptscriptstyle \text{i.i.d.}}{\sim} \mathcal{N}(\mu \mathbf{1}, \s^2 I_d)$ and $y_i = \ip{x_i}{\t_s^\star}$ for $i \in [n]$ where $\t_s^\star$ is $s$-sparse with nonzero entries equal to $1/\sqrt{s}$, where $(n, d, s) = (20, 30, 5)$ and $(\mu, \s) = (1, 1)$. We show results averaged over 5 replications.

\begin{figure}[h!]
    \centering
    \includegraphics[width=1\textwidth]{{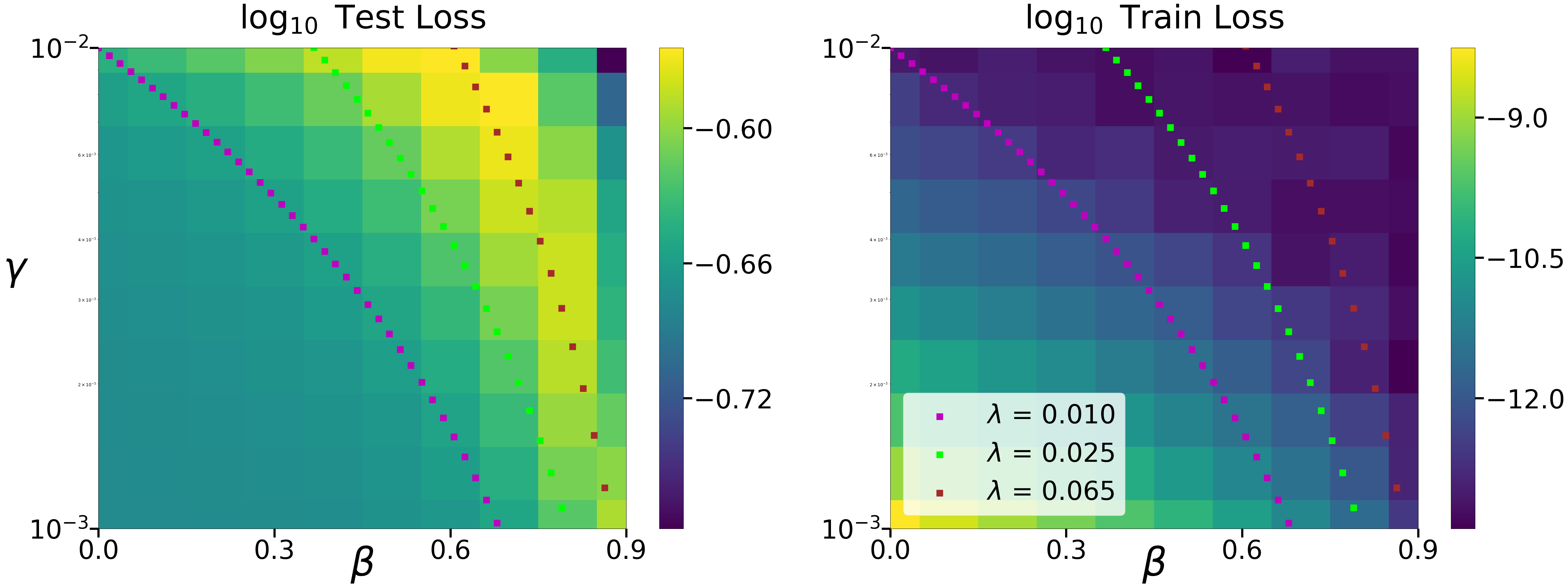}}
    \vspace{-3mm}
    \caption{
    Test and train loss of a fully connected deep linear network trained with \ref{mgd} in a noiseless sparse overparametrised regression setting. The test loss appears considerably correlated with the intrinsic parameter $\l = \g/(1-\b)^2$, evincing that \ref{mgf:lambda} approximates \ref{mgd} sufficiently well even on complex architectures.
    } 
    \label{fig:app:deep_linear_network}
\end{figure}

\myparagraph{2-Layer Diagonal Linear Network.} The plots from \Cref{fig:app:mgd_mgf} were obtained for a 2-layer diagonal linear network trained in the noiseless sparse overparametrised regression setting described above. The first network layer was initialised with the uniform initialisation $\a \mathbf{1}$, where $\a = 0.01$, and the weights of the second layer were set to 0. The momentum gradient flow evolution of the weights was simulated with the default version of the ODE solver $\texttt{scipy.integrate.odeint}$.

\begin{figure}[h!]
    \centering
    \includegraphics[trim=500 0 550 0, clip,width=1\textwidth]{{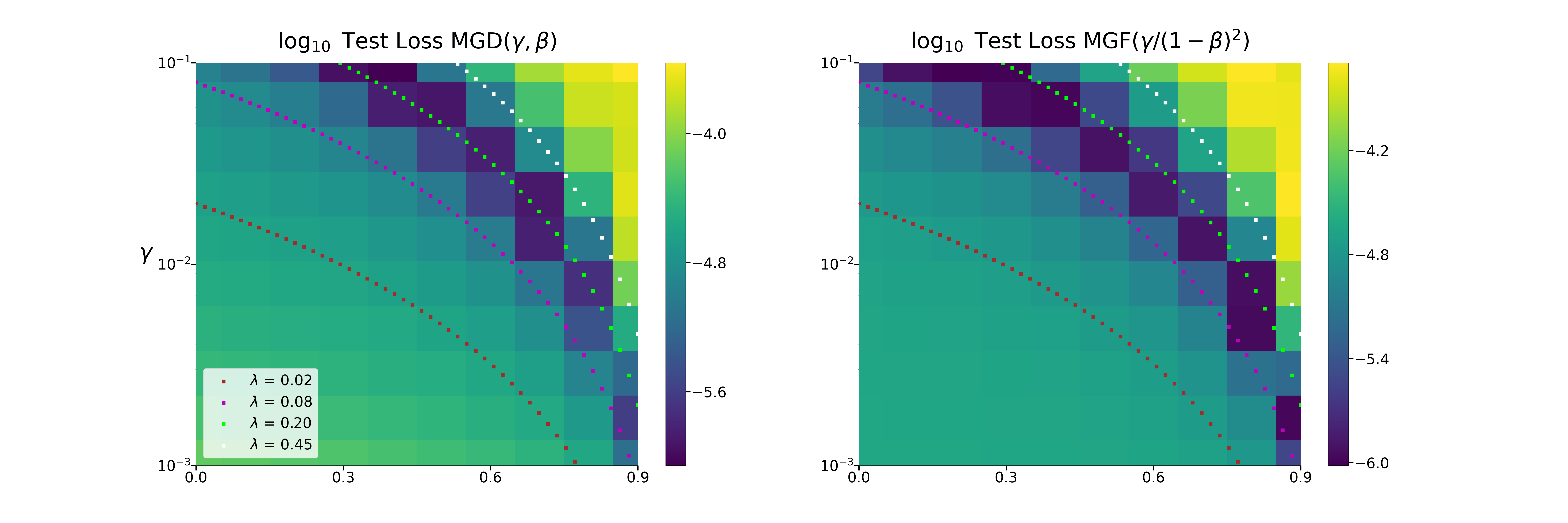}}
    \vspace{-3mm}
    \caption{
    \emph{Left}: Decimal logarithm of the test loss of a 2-layer diagonal linear network trained with \ref{mgd} for 1 million epochs.
    \emph{Right}: Decimal logarithm of the test loss of a 2-layer diagonal linear whose weights evolved according to \ref{mgf:lambda} -- where  $\l = \g/(1-\b)^2$ -- and converged to an interpolator of the training dataset. We observe an almost one-to-one correspondence in terms of generalisation capacity, which demonstrates that \ref{mgf:lambda} serves as a suitable continuous surrogate for \ref{mgd} in the diagonal linear setting. 
    } 
    \label{fig:app:mgd_mgf}
\end{figure}

\subsection{Experiments with Diagonal Linear Networks}

Having seen empirical proof that \ref{mgf:lambda} approximates well the optimisation trajectory of \ref{mgd} on complicated models, we proceed with experiments that illustrate the conclusions of our results for 2-layer diagonal linear networks. In particular, we provide experimental evidence that both in the continuous and discrete-time cases, the recovered interpolators by MGD and MGF satisfy
\begin{equation*}
    \t^{\texttt{MGF/MGD}} = \argmin_{\t^\star \in \mathcal{S}} D_{\psi_{\Delta_\infty}}(\t^\star, \tilde{\theta}_0) \approx \argmin_{\t^\star \in \mathcal{S}} \psi_{\Delta_\infty}(\t^\star),
\end{equation*}
as we explain underneath \Cref{main_mgf:general}, \Cref{main_mgd:general}, and in \Cref{consequences}. Indeed, we observe that the perturbation term $\tilde{\t}_0$ can be safely ignored even without the assumption of strictly positive balancedness. The asymptotic balancedness $\D_\infty$ then uniquely controls the properties of the recovered solution. 
We now specify our experimental setting.

\myparagraph{Experimental Details.} We work in the noiseless sparse overparametrised regression setting with uncentered data. More precisely, we let $(x_i)_{i=1}^n \stackrel{\scriptscriptstyle \text{i.i.d.}}{\sim} \mathcal{N}(\mu \mathbf{1}, \s^2 I_d)$ and $y_i = \ip{x_i}{\t_s^\star}$ for $i \in [n]$ where $\t_s^\star$ is $s$-sparse with nonzero entries equal to $1/\sqrt{s}$. We train a 2-layer diagonal linear network with (M)GD and (M)GF with the uniform initialisation $u_0 = \a \mathbf{1}$, where $\a = 0.01$ and $v_0 = 0$. In order to simulate gradient flow or momentum gradient flow on the network weights, we use the vanilla version of the ODE solver $\texttt{scipy.integrate.odeint}$. For most of the incoming plots, we have fixed $(n, d, s, \s) = (20, 30, 5, 1)$ and we let $\mu \in \{0, 0.5, 1, 1.5\}$. In what follows, all plots show results averaged over 5 replications.

\subsubsection{Continuous-Time Plots}

We first present a set of 3 continuous-time plots (\Cref{fig:app:mgf_triple_1}) for the setting where the input data follows a Gaussian distribution $\mathcal{N}(\mu \mathbf{1}, I_d)$ with $\mu = 1$.

\myparagraph{Experimental Setup.}
For a sampled dataset $(X, y)$, we train our diagonal network with \ref{mgf:lambda}, $\l \in [0,1]$, and initialisation $(u_0, v_0) = (\a \cdot \mathbf{1}, 0)$ until convergence to an interpolator \footnote{We know that $\t^{\texttt{MGF}}$ interpolates the dataset $(X, y)$ because we also record the \textbf{Train Loss} ($\t^{\texttt{MGF}}$), which falls under~$10^{-20}$.} $\t^{\texttt{MGF}}$. During the training of \ref{mgf:lambda}, we also take note of whether the balancedness $\D_t$ remains strictly positive at all times, thereby checking the explanatory range of \Cref{small_lambda_regime}.
Having completed the MGF training, we plot the \textbf{Test Loss} of $\t^{\texttt{MGF}}$, the $\ell_2$\textbf{-Norm of} $\D_\infty$, and the $\ell_1$\textbf{-Norm of} $\t^{\texttt{MGF}}$ in order to visualise the gain in generalisation performance.

\myparagraph{Insignificance of $\tilde{\theta}_0$.} Now, recall from \Cref{main_mgf:general} that $\t^{\texttt{MGF}} = \argmin_{\t^\star \in \mathcal{S}} \  D_{\psi_{\Delta_\infty}}(\t^\star, \tilde{\t}_0)$ and that for $\Vert \tilde{\theta}_0 \Vert_\infty \ll \n{\t^{\texttt{MGF}}}_\infty$, $D_{\psi_{\Delta_\infty}}(\t^\star, \tilde{\t}_0) \approx \psi_{\Delta_\infty}(\theta^\star)$. We proved that for small values of $\l$, the balancedness remains strictly positive at all times, which allowed us to show that  $\Vert \tilde{\theta}_0 \Vert_\infty < \alpha^2$. 
We conjecture that
$\t^{\texttt{MGF}} \approx \argmin_{\t^\star \in \mathcal{S}} \psi_{\Delta_\infty}(\t^\star)$ continues to hold for larger values of $\l$. We experimentally test this claim by measuring the precise distance between $\t^{\texttt{MGF}}$ and $\t_{\Delta_\infty}^{\texttt{GF}} = \argmin_{\theta^\star \in \mathcal{S}} \psi_{\Delta_\infty}(\theta^\star) $. Indeed, we initialise a gradient flow with initial balancedness equal to  $\Delta_\infty$ and such that $\theta_0 = 0$, which converges to the predictor $\t_{\Delta_\infty}^{\texttt{GF}}$ as discussed in \Cref{gf_bias}. Hence, we can calculate the \textbf{Normalised Distance between} $\t^{\texttt{MGF}}$ \textbf{and} $\t_{\Delta_\infty}^{\texttt{GF}}$ equal to $\n{\t^{\texttt{MGF}} - \t_{\Delta_\infty}^{\texttt{GF}}}_2/\n{\t_{\Delta_\infty}^{\texttt{GF}}}_2$, and we obtain that $\n{\t^{\texttt{MGF}} - \t_{\Delta_\infty}^{\texttt{GF}}}_2/\n{\t_{\Delta_\infty}^{\texttt{GF}}}_2 < 0.01$ for $\l \in (0,1)$.

\begin{figure}[h!]
    \centering
    \includegraphics[width=1\textwidth]{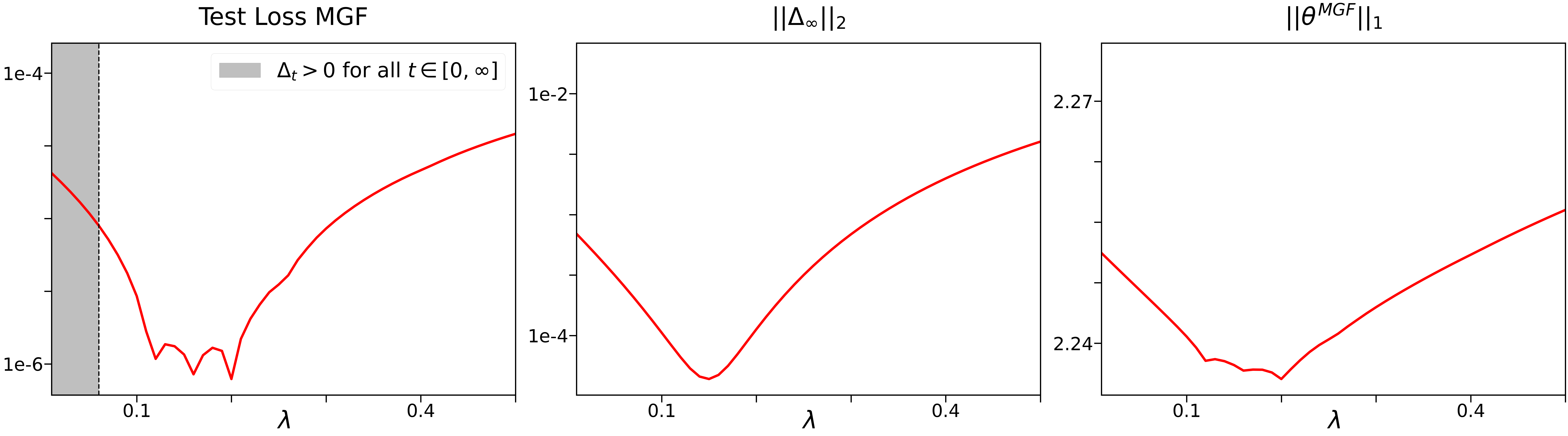}
    \vspace{-3mm}
    \caption{
    Continuous-time experiments on uncentered data with mean $\mu = 1$. Here, $\t^{\texttt{MGF}}$ denotes the interpolator recovered by \ref{mgf:lambda} and $\D_\infty$ stands for the balancedness at infinity for \ref{mgf:lambda}. We observe that the test loss and sparsity of $\t^{\texttt{MGF}}$ correlate with the magnitude of $\D_\infty$ as predicted by \Cref{main_mgf:general}.
    } 
    \label{fig:app:mgf_triple_1}
\end{figure}

\begin{figure}[h!]
    \centering
    \includegraphics[width=0.85\textwidth]{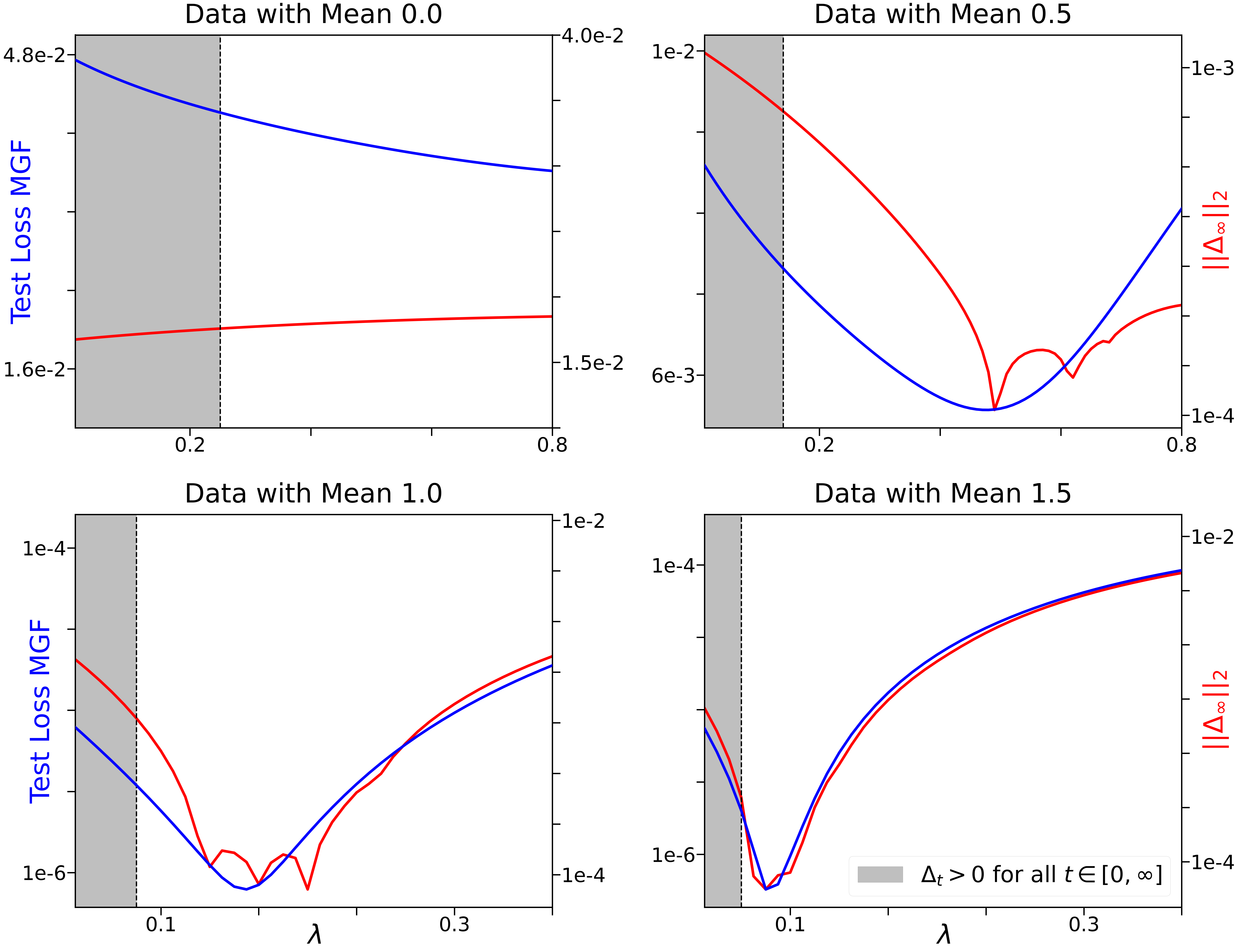}
    \vspace{-3mm}
    \caption{
    We observe that for uncentered data the magnitude of the balancedness at infinity $\D_\infty$ correlates with the test loss of the interpolator selected by \ref{mgf:lambda}. However, this relationship breaks for centered data.
    } 
    \label{fig:app:final_comparison}
\end{figure}

\myparagraph{Insights from Continuous-Time Experiments.}
First, we observe that no matter the mean of the data distribution\footnote{We performed the continuous-time experiments depicted in \Cref{fig:app:final_comparison} for data with mean $\mu = 0, 0.5, 1, 1.5$.} or the size of $\l \in (0,1)$, the normalised distance between $\t^{\texttt{MGF}}$ and $\t_{\Delta_\infty}^{\texttt{GF}}$ is always upper-bounded by $0.01$. 
Hence, we can empirically confirm our conjecture from \Cref{main_mgf:general} that $\t^{\texttt{MGF}} \approx \t_{\Delta_\infty}^{\texttt{GF}}$ for larger $\l$ when the balancedness changes sign.
Second, we see that regardless of the mean of the dataset, the balancedness at infinity (i.e., the effective initialisation $\Delta_\infty$) controls the generalisation behavior of the recovered interpolator. We can explain this observation again through the approximate equivalence $\t^{\texttt{MGF}} \approx \argmin_{\theta^\star \in \mathcal{S}} \psi_{\Delta_\infty}(\theta^\star)$.

\myparagraph{The Effect of the Data Mean.}
In \Cref{fig:app:final_comparison}, we summarise our empirical results for data with various means. Notice that there exists a difference between the generalisation behavior for centered and uncentered data. Indeed, for centered data (top left), the key quantity $\lambda$ has little impact on the sparsity of the recovered solution. This circumstance is reminiscent of the observations from \citep{pmlr-v162-nacson22a} and \citep{even_pesme_sgd}. However, for uncentered data, we observe an interval $\mathcal{I}_{\mathcal{D}_x} = (0, \l_{\max})$ (which depends on the data distribution $\mathcal{D}_x$) for which MGF with $\l \in \mathcal{I}_{\mathcal{D}_x}$ outperforms GF in terms of generalisation. Furthermore, there appears to exist a constant $\l^\star_{\mathcal{D}_x} \in \mathcal{I}_{\mathcal{D}_x}$ (roughly corresponding to the minimum magnitude of $\D_\infty$) which brings about the most improvement compared to gradient flow. We note that the following tendency seems to hold empirically:
\begin{equation*}
    \lim_{|\mu| \to +\infty} \l^\star_{\mathcal{D}_x} = 0. 
\end{equation*}

\subsubsection{Discrete-Time Plots}

For the sake of brevity\footnote{We performed discrete-time experiments for data with means $\mu = 0, 0.5, 1, 1.5$.}, we only present a single set of plots for the discrete-time noiseless sparse recovery given in \Cref{fig:mgd}. Our input data follows a unit-mean Gaussian distribution $\mathcal{N}(\mathbf{1}, I_d)$.

\myparagraph{Experimental Setup.}
For a sampled dataset $(X, y)$ and hyperparameter pair ($\b, \g$), we train our 2-layer diagonal linear network with \ref{mgd} initialised at $(u_0, v_0) = (\a \mathbf{1}, 0)$ for 1 million epochs (which suffices for convergence\footnote{Again, we record the \textbf{Train Loss} ($\t_{\gamma, \beta \a}^{\texttt{MGD}}$), which falls under $10^{-8}$.}). During the \ref{mgd} training, we also take note of whether the iterates $w_{\pm, k}$ change sign or not thereby checking the explanatory range of \Cref{main_mgd}.
Having completed the MGD training, we plot the \textbf{Test Loss} of $\t^{\texttt{MGD}}$, the $\ell_2$\textbf{-Norm of} $\D_\infty$, and the $\ell_1$\textbf{-Norm of} $\t^{\texttt{MGD}}$ in order to visualise the gain in generalisation performance.

\myparagraph{Insignificance of $\tilde{\theta}_0$.} Recall from \Cref{main_mgd:general} that $\t^{\texttt{MGD}} = \argmin_{\t^\star \in \mathcal{S}} D_{\psi_{\Delta_\infty}}(\t^\star, \tilde{\t}_0)$. Again, we want to characterise the recovered interpolator as $\t^{\texttt{MGD}} \approx \argmin_{\t^\star \in \mathcal{S}} \psi_{\Delta_\infty}(\theta^\star)$. In order to verify empirically that the effect of the perturbation term is negligible, we follow the same strategy as in the continuous-time case. We initialise a gradient flow with initial balancedness equal to  $\Delta_\infty$ and $\theta_0 = 0$, which converges to the predictor $\t_{\Delta_\infty}^{\texttt{GF}}$ as discussed in \Cref{gf_bias}. Hence, we can calculate the \textbf{Normalised Distance between} $\t^{\texttt{MGD}}$ \textbf{and} $\t_{\Delta_\infty}^{\texttt{GF}}$ equal to $\n{\t_{\g, \b, \a}^{\texttt{MGD}} - \t_{\Delta_\infty}^{\texttt{GF}}}_2/\n{\t_{\Delta_\infty}^{\texttt{GF}}}_2$, and we find that $\n{\t_{\g, \b, \a}^{\texttt{MGD}} - \t_{\Delta_\infty}^{\texttt{GF}}}_2/\n{\t_{\Delta_\infty}^{\texttt{GF}}}_2 < 0.01$ for all pairs $(\g, \beta)$ in \Cref{fig:mgd}.
This experimentally shows that $\t^{\texttt{MGD}} \approx \argmin_{\t^\star \in \mathcal{S}} \psi_{\Delta_\infty}(\theta^\star)$ and that the asymptotic balancedess is the key quantity which predicts the recovered solution.

\myparagraph{Insights from Discrete-Time Experiments.} 
As predicted by \Cref{main_mgd:general}, a more balanced solution (center plot) leads to a solution with a lower $\ell_1$-norm (right plot), which in turn translates to better generalisation (left plot). Finally, as proven in \Cref{main_mgd}, the trajectories for which the iterates do not cross zero satisfy $\Delta_\infty < \Delta_0$, where $\Delta_0$ (approximately) corresponds to the asymptotic balancedness for the pair $(\b, \g) = (0, 10^{-3})$ in the bottom left corner of the center plot. Clearly, the pairs $(\b, \g)$ for which $w_{\pm, k}$ do not change sign lead to better generalisation than the pair $(0, 10^{-3})$.
Again, we note that for centered data the story changes, and we lose the clear correspondence between small $\n{\D_\infty}_2$ and small $\n{\t^{\texttt{MGD}}}_1$.

\end{document}